\pgfplotsset{compat=1.3}
\newlength\myindent
\definecolor{mydarkblue}{rgb}{0,0.08,0.85}
\definecolor{mylightblue}{rgb}{0.06,0.56,1.0}
\definecolor{mylightorange}{rgb}{1.0,0.62,0.12}
\definecolor{mylightred}{rgb}{0.99,0.00,0.04}
\theoremstyle{plain}
\newtheorem{theorem}{Theorem}[section]
\newtheorem{lemma}[theorem]{Lemma}
\theoremstyle{definition}
\theoremstyle{remark}
\title{
Is This the Subspace You Are Looking for? An Interpretability Illusion for Subspace Activation Patching
}
\author{
\begin{tabular}[t]{c c}
Aleksandar Makelov\thanks{Equal Contribution.} &
Georg Lange\footnotemark[1] \\
\texttt{aleksandar.makelov@gmail.com}  &
\texttt{mail@georglange.com} \\
SERI MATS & SERI MATS \\
\end{tabular}
\\[0.6in] %
\begin{tabular}[t]{c}
Neel Nanda \\
\texttt{neelnanda27@gmail.com} \\
\end{tabular}
}
\date{}
\definecolor{mygreen}{HTML}{2F9E44}
\definecolor{myred}{HTML}{E03131}
\definecolor{myblue}{HTML}{1971C2}
\def\l{\left}
\def\r{\right}
\begin{document}
\maketitle

\begin{abstract}
Mechanistic interpretability aims to understand model behaviors in terms of
specific, interpretable features, often hypothesized to manifest as
low-dimensional subspaces of activations.  Specifically, recent studies have
explored subspace interventions (such as activation patching) as a way to
simultaneously manipulate model behavior and attribute the features behind it to
given subspaces.

In this work, we demonstrate that these two aims diverge, potentially leading to
an illusory sense of interpretability.  Counterintuitively, even if a subspace
intervention makes the model's output behave \emph{as if} the value of a feature
was changed, this effect may be achieved by activating a \emph{dormant parallel
pathway} leveraging another subspace that is \emph{causally disconnected} from model
outputs.  We demonstrate this phenomenon in a distilled mathematical example, in
two real-world domains (the indirect object identification task and factual
recall), and present evidence for its prevalence in practice.  In the context of
factual recall, we further show a link to rank-1 fact editing, providing a
mechanistic explanation for previous work observing an inconsistency between
fact editing performance and fact localization.

However, this does not imply that activation patching of subspaces is
intrinsically unfit for interpretability.  To contextualize our findings, we
also show what a success case looks like in a task (indirect object identification) where prior manual
circuit analysis informs an understanding of the location of a feature. We
explore the additional evidence needed to argue that a patched subspace is
faithful.
\end{abstract}

\section{Introduction}
Recently, large language models (LLMs) have demonstrated impressive
\citep{vaswani2017attention, bert, gpt4, radford2019language,
brown2020language}, and often surprising \citep{wei2022emergent}, capability
gains. However, they are still widely considered `black boxes': their successes
-- and failures -- remain largely a mystery. It is thus an increasingly
pressing scientific and practical question to understand \emph{what} LLMs learn
and \emph{how} they make predictions. 

This is the goal of machine learning interpretability, a broad field that
presents us with both technical and conceptual challenges \citep{lipton2016}.
Within it, mechanistic interpretability (MI) is a subfield that seeks to develop
a rigorous low-level understanding of the mechanisms and learned algorithms
behind a model's computations. MI frames these computations as collections of
narrow, task-specific algorithms
-- \emph{circuits} \citep{olah2020zoom, geiger2021causal,
wang2022interpretability}
-- whose operations are grounded in concrete, atomic building blocks akin to
variables in a computer program \citep{olah2022mechanistic} or causal model
\citep{vig2020causal, Geiger-etal:2023:CA}. 
MI has found applications in several downstream tasks: removing toxic behaviors
from a model while otherwise preserving performance by minimally editing model
weights \citep{li2023circuit}, changing factual knowledge encoded by models in
specific components to e.g. enable more efficient fine-tuning in a changing
world \citep{meng2022locating}, improving the truthfulness of LLMs at inference
time via efficient, localized inference-time interventions in specific subspaces
\citep{li2023inference} and studying the mechanics of gender bias in language
models \citep{vig2020causal}.

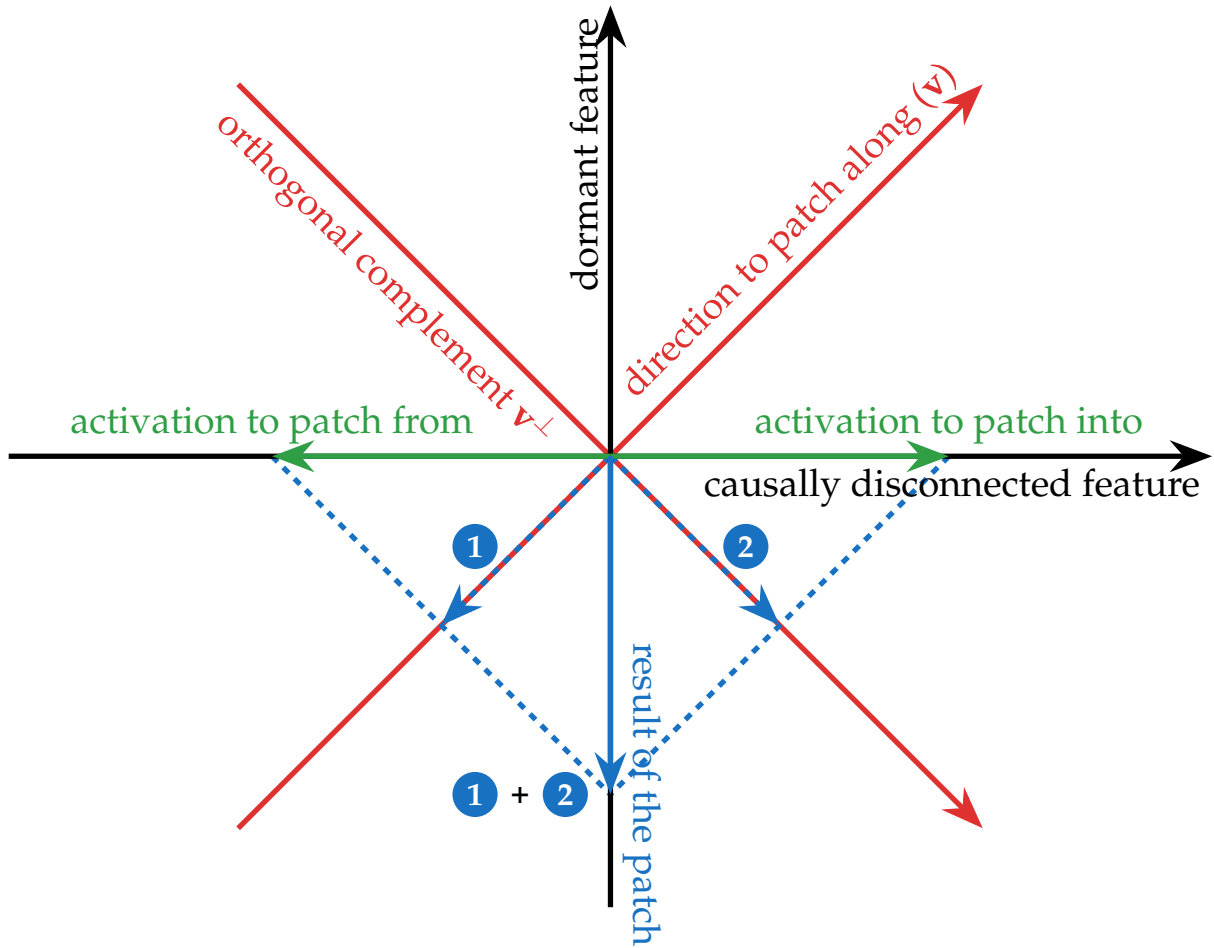
\begin{figure}
\begin{tikzpicture}
    % Draw new red axes
    \draw[line width=2pt, myred, -{Stealth[length=8pt, scale=2]}] (-135:7)--(45:7)
    node[above left, font=\Large, rotate=45]{direction to patch along ($\mathbf{v}$)};
    \draw[line width=2pt, myred, -{Stealth[length=8pt, scale=2]}] (135:7)--(-45:7)
    node[pos=0, below right,font=\Large, rotate=-45]{orthogonal complement
    $\mathbf{v}^\perp$};
    
    % Projections of the examples on the direction
    \draw[dashed, line width=2pt, myblue] (-4.5,0)--(0,-4.5);
    % \draw (-1.8,-1.5) circle [radius=0.3] node {$1$};
    \fill[myblue] (-1.8,-1.2) circle (0.3);
    \node at (-1.8,-1.2) [font={\large\bfseries}, text=white] {1};

    \fill[myblue] (1.8,-1.2) circle (0.3);
    \node at (1.8,-1.2) [font={\large\bfseries}, text=white] {2};

    \draw[dashed, -{Stealth[length=8pt, scale=2]}, line width=2pt, myblue]
    (0,0)--(-2.25,-2.25);
    \draw[dashed, line width=2pt, myblue] (4.5,0)--(0,-4.5);
    \draw[dashed, -{Stealth[length=8pt, scale=2]}, line width=2pt, myblue]
    (0,0)--(2.25,-2.25);

    % Draw coordinate axes
    \draw[line width=2pt,-{Stealth[length=8pt, scale=2]}] (-8,0)--(8,0)
    node[below left,font=\Large]{causally disconnected feature};
    \draw[line width=2pt,-{Stealth[length=8pt, scale=2]}] (0,-6)--(0,6) node[above left,rotate=90,font=\Large]{dormant feature};
    % Draw zero-tailed vectors
    \draw[-{Stealth[length=8pt, scale=2]}, mygreen, line width=2pt] (0,0)--(4.5,0)
    node[above,font=\Large]{activation to patch into};
    \draw[-{Stealth[length=8pt, scale=2]}, mygreen, line width=2pt] (0,0)--(-4.5,0)
    node[above,font=\Large]{activation to patch from};
    \draw[-{Stealth[length=8pt, scale=2]}, myblue, line width=2pt] (0,0)--(0,-4.5)
    node[above,font=\Large, rotate=-90]{result of the patch};

    % Draw 1 + 2
    \fill[myblue] (-1.8,-4.5) circle (0.3);
    \node at (-1.8,-4.5) [font={\large\bfseries}, text=white] {1};
    \node at (-1.2,-4.5) [font={\large\bfseries}, text=black] {+};
    \fill[myblue] (-0.6,-4.5) circle (0.3);
    \node at (-0.6,-4.5) [font={\large\bfseries}, text=white] {2};

\end{tikzpicture}
  \caption{The key mathematical phenomenon behind the activation patching
  illusion illustrated for a 2-dimensional activation space. We intervene on an
  example's \textcolor{mygreen}{activation} (green, right) by setting its
  orthogonal projection on a \textcolor{myred}{1-dimensional subspace
  $\mathbf{v}$ of activation space} (red, top-right) to equal the orthogonal
  projection of another example's \textcolor{mygreen}{activation} (green, left)
  on $\mathbf{v}$. The result is a \textcolor{myblue}{patched activation vector}
  orthogonal to both activations.
  Specifically, to form the patched activation we take the $\mathbf{v}$ component
  (\raisebox{.5pt}{\textcircled{\raisebox{-.9pt} {1}}}) of the activation we are
  patching from, and combine it with the $\mathbf{v}^\perp$ component
  (\raisebox{.5pt}{\textcircled{\raisebox{-.9pt} {2}}}) of the original
  activation. This results in the patched activation 
  \raisebox{.5pt}{\textcircled{\raisebox{-.9pt} {1}}}
  +\raisebox{.5pt}{\textcircled{\raisebox{-.9pt} {2}}}.
  This can lead to counterintuitive results when the original and new directions
  have fundamentally different roles in a model's computation; see Section
  \ref{sec:methods} for details, and Figure \ref{fig:illusion-step-by-step} for
  a step-by-step guide through this figure.
  }
  \label{fig:illusion}
\end{figure}

A central question in MI is: what \emph{is} the proper definition of these
building blocks?  Any satisfying mechanistic analysis of high-level LLM
capabilities must rest on a rigorous and comprehensive answer to this question
\citep{olah2022mechanistic}.  Many initial mechanistic analyses have focused on
mapping circuits to collections of \emph{model components}
\citep{wang2022interpretability, docstring}, such as attention heads and MLP
layers.  A workhorse of these analyses is \emph{activation
patching}\footnote{also known as `interchange intervention' \citep{geigerPatch}
and sometimes referred to as `resample ablation' \citep{causal_scrubbing} or
`causal tracing' \citep{meng2022locating}.} \citep{vig2020causal,geigerPatch,
meng2022locating,wang2022interpretability}, which intervenes on model
computation on an input by replacing the activation of a given component
with its value when the model is run on another input. By seeing which
components lead to a significant task-relevant change in outputs compared to
running the model normally, activation patching aims to pinpoint tasks to
specific components.

However, localizing features to entire components is not sufficient for a
detailed understanding. A plethora of empirical evidence suggests that the
features LLMs represent and use are more accurately captured by \emph{linear
subspaces} of component activations \citep{nanda2023actually, li2021implicit,
abdou2021can, grand2018semantic}. Complicating matters, phenomena like
superposition and polysemanticity \citep{elhage2022superposition} suggest that
these subspaces are not easily enumerable, like individual neurons -- so
searching for them can be non-trivial.  This raises the question:
\begin{center}
    \emph{
Does the success of activation patching carry over from component-level
analysis to finding the precise subspaces corresponding to features?   
    }
\end{center}

In this paper, we demonstrate that naive generalizations of subspace activation
patching can lead to misleading interpretability results. Specifically, we argue
empirically and theoretically that a subspace seemingly encoding some feature
may be found in the MLP layers on the path between two model components in a
transformer model that communicate this feature as part of some circuit. 

As a concrete example of how this illusion can happen in the practice of
interpretability, recent works such as \citet{geiger2023finding,
wu2023interpretability} have sought to identify interpretable subspaces using
gradient descent, with training objectives that optimize for a subspace patch
with a causal effect on model predictions. While this kind of end-to-end
optimization has promise, we show that, instead of localizing a variable used by
the model, subspace interventions such as subspace activation patching can
create such a variable by \emph{activating a dormant pathway}.

Counterintuitively, the mathematics of subspace interventions makes it possible to activate another, `dormant', direction, which is
ordinarily inactive, but can change model outputs when activated (see Figure
\ref{fig:illusion}), by exploiting the variation of model activations in a direction correlated with a feature even if this second direction does not causally affect the output. 
An equivalent view of this phenomenon that we explore in
Appendix \ref{app:rotated-basis} is that the component contains two subspaces
that mediate the variable, but whose effects normally cancel each other out
(hence, there's no total effect, making the component as a whole appear `dormant'). The
activation patching intervention decouples these two subspaces by altering an activation only
along one of them. Under this perspective, our contribution is to show that model components are likely to contain such pairs of subspaces that perfectly cancel out.
When this phenomenon is realized in the hidden activations of an MLP
layer, it leads to causally meaningful subspaces which have a substantial and crucial
component that is causally disconnected from model outputs, owing to the
high-dimensional kernel of an MLP layer's down-projection in a transformer (see
Figure \ref{fig:ioi-diagram}).

While it is, in principle, possible that subspaces that represent some variable
but cancel each other out exist in many model components, we find this unlikely.
Specifically, our results suggest that every MLP layer between two components
communicating some feature through residual connections is likely to contain a
subspace which appears to mediate the feature when activation patched. We find
this implausible on various grounds that we revisit in Section
\ref{sec:discussion}. Thus, we consider at least some of these subspaces to
exhibit a kind of \emph{interpretability illusion}
\citep{bolukbasi2021interpretability,sanity_checks_saliency}.

Our contributions can be summarized as follows:
\begin{itemize}
    \item In Section \ref{sec:methods}, we provide the key intuition for the
    illusion, and construct a distilled mathematical example.
    \item In Section \ref{sec:ioi}, we find a realization of this phenomenon `in
    the wild', in the context of the indirect object identification task
    \citep{wang2022interpretability}, where a 1-dimensional subspace of MLP
    activations found using DAS \citep{geiger2023finding} can seem to encode
    position information about names in the sentence, despite this MLP layer
    having negligible contribution to the circuit as argued by \citet{wang2022interpretability}.
    \item To contextualize our results, in Section \ref{sec:ground-truth} we
    also show how DAS can be used to find subspaces that faithfully represent a
    feature in a model's computation. Specifically, we find a 1-dimensional
    subspace encoding the same position information in the IOI task, and
    validate its role in model computations via mechanistic experiments beyond
    end-to-end causal effect. We argue that activation patching on subspaces of
    the residual stream is less prone to illusions.
    \item Going beyond the IOI task, in Section \ref{sec:facts} we also exhibit
    this phenomenon in the setting of \emph{fact editing}
    \citep{meng2022locating}.  We show that 1-dimensional activation patches
    imply approximately equivalent rank-1 model edits \citep{meng2022locating}.
    In particular, this shows that rank-1 model edits can also be achieved by
    activating a dormant pathway in the model, without necessarily relying on
    the presence of a fact in the weight being edited. This suggests a
    mechanistic explanation for the observation of \citep{hase2023does} that
    rank-1 model editing `works' regardless of whether the fact is present in
    the weights being edited.
    \item In Section \ref{sec:prevalent}, we collect arguments and evidence for
    why this interpretability illusion ought to be prevalent in real-world
    language models.
    \item Finally, in Section \ref{sec:discussion}, we provide conceptual
    discussion of these findings.
\end{itemize}
We have also released code to reproduce our findings\footnote{\url{https://github.com/amakelov/activation-patching-illusion}}.

\section{Related Work}
\label{sec:related-work}

\subsection{Discovering and Causally Intervening on Representations with
Activation Patching}
\label{subsection:}
Researchers have been exploring increasingly fine-grained ways of
reverse-engineering and steering model behavior. In this context,
\emph{activation patching} \citep{vig2020causal, geigerPatch} is a widely used
causal intervention, whereby the model is run on an input A, but chosen
activations are `patched in' from input B.  Motivated by causal mediation
analysis \citep{pearl2001direct} and causal abstraction
\cite{Geiger-etal:2023:CA}, activation patching has been used to localize model
components causally involved in various behaviors, such as gender bias
\citep{causal_meditation_analysis}, factual recall \citep{meng2022locating},
multiple choice questions \citep{lieberum2023does}, arithmetic
\citep{stolfo2023understanding} and natural language reasoning
\citep{geiger2021causal,wang2022interpretability,geiger2023finding,
wu2023interpretability}, code \citep{docstring}, and (in certain regimes)
topic/sentiment/style of free-form natural language
\citep{turner2023activation}.

Activation patching is an area of active research, and many recent works have
extended the method, with patching paths between components
\citep{goldowsky2023localizing}, automating the finding of sparse subgraphs
\citep{conmy2023towards}, fast approximations \citep{neelattribution}, and
automating the verification of hypotheses \citep{causal_scrubbing}.  

In particular, \emph{full-component activation patching} -- where the entire
activation of a model component such as attention head or MLP layer is replaced
-- is not the end of the story. A wide range of interpretability work
\citep{DBLP:conf/nips/MikolovSCCD13, conneau-etal-2018-cram, hewitt2019structural,
tenney-etal-2019-bert, burns2022discovering, nanda2023emergent} suggests the
\emph{linear representation hypothesis}: models encode features as linear
subspaces of component activations that can be arbitrarily rotated with respect
to the standard basis (due to phenomena like superposition, polysemanticity
\citep{arora2018linear, elhage2022superposition} and lack of privileged bases
\citep{Smolensky1986,elhage2021mathematical}).   

Motivated by this, recent works such as \citet{geiger2023finding,
wu2023interpretability, lieberum2023does} have explored \emph{subspace
activation patching}: a generalization of activation patching that operates only
on linear subspaces of features (as low as 1-dimensional) rather than patching entire components. 

Our work contributes to this research direction by demonstrating both (i) a
common illusion to avoid when looking for such subspaces and (ii) a detailed
case study of successfully localizing a binary feature to a 1-dimensional
subspace.

\subsection{Interpretability Illusions}
\label{subsection:}
Despite the promise of interpretability, it is difficult to be rigorous and easy
to mislead yourself. A common theme in the field is identifying ways that
techniques may lead to misleading conclusions about model behavior
\citep{lipton2016}. 
In computer vision, \citet{sanity_checks_saliency} show that a popular at the
time class of pixel attribution methods is not sensitive to whether or not the
model used to produce is has actually been trained or not. In
\citet{geirhos2023don}, the authors show how a circuit can be hardcoded into a
learned model so that it fools interpretability methods; this bears some
similarity to our illusion, especially its fact editing counterpart.  

In natural language processing, \citet{bolukbasi2021interpretability} show that
interpreting single neurons with maximum activating dataset examples may lead to
conflicting results across datasets due to subtle polysemanticity
\citep{elhage2022superposition}. Recently, \citet{mcgrath2023hydra} demonstrated
that full-component activation patching in large language models is vulnerable
to false negatives due to (ordinariliy dormant) backup behavior of downstream
components that activates when a component is ablated.

We contribute to the study of interpretability illusions by demonstrating a new
kind of illusion which can arise when intervening on model activations along
arbitrary subspaces, by demonstrating it in two real-world scenarios, and
providing recommendations on how to avoid it.

\subsection{Factual Recall}
\label{subsection:}

A well-studied domain for discovering and intervening on learned representations
is the localization and editing of factual knowledge in language models
\citep{singh-etal-2020-bertnesia, meng2022memit, 
dai-etal-2022-knowledge, geva2023dissecting, hernandez2023inspecting}.  A work of particular note is
\citet{meng2022locating}, which localizes factual information using a variation
of full-component activation patching, and then edits factual information with a
rank-1 intervention on model weights. However, recent work has shown that rank-1
editing can work even on weights where the fact supposedly is not encoded
\citep{hase2023does}, and that editing a single fact often fails to have its
expected common-sense effect on logically related downstream facts
\citep{cohen2023evaluating, zhong2023mquake}.

We contribute to this line of work by showing a formal and empirical connection
between activation patching along 1-dimensional subspaces and rank-1 model
editing. In particular, rank-1 model edits can work by creating a dormant
pathway of an MLP layer, regardless of whether the fact is stored there. This
provides a mechanistic explanation for the discrepancy observed in
\citet{hase2023does}.

\section{A Conceptual View of the Illusion}
\label{sec:methods}

\subsection{Preliminaries: (Subspace) Activation Patching}
\label{subsection:}

\emph{Activation patching} \citep{vig2020causal, geigerPatch,
wang2022interpretability, causal_scrubbing} is an interpretability technique
that intervenes upon model components, forcing them to take on values they would
have taken if a different input were provided.  For instance, consider a model
that has knowledge of the locations of famous landmarks, and completes e.g. the
sentence $A=\text{`The Eiffel Tower is in'}$ with `Paris'.  

How can we find which components of the model are responsible for knowing that
`Paris' is the right completion? Activation patching approaches this question by
\begin{enumerate}[(i)]
\item Running the model on $A$;
\item Storing the activation of a chosen model component $\mathcal{C}$, such as
the output of an attention head, the hidden activations of an MLP layer, or an
entire residual stream (a.k.a. bottleneck) layer;
\item Running the model on e.g. $B=\text{`The Colosseum is in'}$, \emph{but}
with the activation of $\mathcal{C}$ taken from $A$.
\end{enumerate}

If we find that the model outputs
`Paris' instead of `Rome' in step (iii), this suggests that  component
$\mathcal{C}$ is important for the task of recalling the location of a landmark.

The linear representation hypothesis proposes that \emph{linear subspaces} of
vectors will be the most interpretable model components.  To search for such
subspaces, we can adopt a natural generalization of full component activation
patching, which only replaces the values of a subspace $U$ (while leaving the
projection on its orthogonal complement $U^\perp$ unchanged). This was proposed
in \citet{geiger2023finding}, and closely related variants appear in
\citet{turner2023activation, nanda2023emergent, lieberum2023does}.

For the purposes of exposition, we now restrict our discussion to activation
patching of a 1-dimensional subspace (i.e., a \emph{direction}) spanned by a
unit vector $\mathbf{v}$ (i.e., $\l\|\mathbf{v}\r\|_2=1$). We remark that the
illusion also applies to higher-dimensional subspaces (see Appendix
\ref{app:higher-dim} for theoretical details; later on, in Appendix
\ref{app:generalization-high-dim}, we also show this empirically for the IOI
task).  If $\mathbf{act}_A, \mathbf{act}_B\in\mathbb{R}^d$ are the
activations of a model component $\mathcal{C}$ on examples $A, B$ and
$p_A=\mathbf{v}^\top \mathbf{act}_A, p_B=\mathbf{v}^\top \mathbf{act}_B$ are
their projections on $\mathbf{v}$, patching from $A$ into $B$ along
$\mathbf{v}$ results in the patched activation
\begin{align}
\label{eq:onedim-patch}
    \mathbf{act}_B^{\text{patched}} = \mathbf{act}_B + (p_A-p_B)\mathbf{v}.
\end{align}
For a concrete scenario motivating such a patch, consider a discrete binary
feature used by the model to perform a task, and prompts $A, B$ which only
differ in the value of this feature. A 1-dimensional subspace can easily encode
such a feature (and indeed we explore an example of this in great detail in
Sections \ref{sec:ioi} and \ref{sec:ground-truth}).

\subsection{Intuition for the Illusion}
\label{subsection:}
What would make activation patching a good attribution method? Intuitively, an
equivalence is needed: an activation patch should work \emph{if and only if} the
component/subspace being patched is indeed a \emph{faithful to the model's computation} representation of the concept we seek to localize.
Revisiting Equation \ref{eq:onedim-patch} with this in mind, it is quite
plausible that, if $\mathbf{v}$ indeed encodes a binary feature
relevant to the task, the patch will essentially overwrite the feature with its
value on $A$, and this would lead to the expected downstream effect on model
predictions\footnote{It is in principle possible that, even if the value of the
feature is overwritten, this has no effect on model predictions. For example, it
is possible that $\mathbf{v}$ is not the only location in the model's
computation where this feature is represented; or, it may be that there are
backup components that are normally inactive on the task, but activate when the
value of the subspace $\mathbf{v}$ is changed, as explored in
\citet{mcgrath2023hydra}. Such scenarios are beyond the scope of this simplified
discussion.}.

Going in the other direction of the equivalence, when will the update in
Equation \ref{eq:onedim-patch} change the model's output in the intended way?
Intuitively, two properties are necessary:
\begin{itemize}
    \item \textbf{correlation with the concept}: $\mathbf{v}$ must be activated
    differently by the two prompts.  Otherwise, $p_A\approx p_B$, and the patch
    has no effect; \item \textbf{potential for changing model outputs}:
    $\mathbf{v}$ must be `causally connected' to the model's outputs; in other
    words, it should be the case that changing the activation along $\mathbf{v}$
    can at least in some cases lead to a change in the next-token probabilities
    output by the model.
    Otherwise, if, for instance, $\mathbf{v}$ is in the nullspace of all
    downstream model components, changing the activation's projection along
    $\mathbf{v}$ alone won't have any effect on the model's predictions. 
    
    For
    example, if the component $\mathcal{C}$ we are patching is the
    post-nonlinearity activation of an MLP layer, the only way this activation
    affects the model's output is through matrix multiplication with a
    down-projection $W_{out}$. So, if $\mathbf{v}\in \ker W_{out}$, we will have
\begin{align*}
    W_{out}\mathbf{act}_B^{\text{patched}} = W_{out}\mathbf{act}_B +
    (p_A-p_B)W_{out}\mathbf{v} = W_{out}\mathbf{act}_B.
\end{align*}
In other words, the activation patch leads to the exact same output of the MLP
layer as when running the model on $B$ without an intervention. So, the patch
will leave model predictions unchanged.
\end{itemize}
The crux of the illusion is that $\mathbf{v}$ may obtain each of the two
properties from two `unrelated' directions in activation space (as shown in Figure
\ref{fig:illusion}) which `happen to be there' as a side effect of linear algebra. Specifically, we can form
\begin{align}
\label{eq:illusion-decomposition}
  \mathbf{v}_{\text{illusory}}=\frac{1}{\sqrt{2}}\l(\mathbf{v}_{\text{disconnected}}
  + \mathbf{v}_{\text{dormant}}\r),
\end{align}
for orthogonal unit vectors $\mathbf{v}_{\text{disconnected}}^\top
\mathbf{v}_{\text{dormant}}=0$ such that
\begin{itemize}
\item $\mathbf{v}_{\text{disconnected}}$ is a \textbf{causally disconnected
direction} in activation space: it distinguishes between the two prompts, but is in the nullspace of
all downstream model components (e.g., a vector in $\ker W_{out}$ for an MLP
layer with down-projection $W_{out}$);
\item $\mathbf{v}_{\text{dormant}}$ is a \textbf{dormant direction} in activation
space: it can
\emph{in principle} steer the model's output in the intended way, but is not
activated differently by the two prompts (in other words,
$\mathbf{v}_{\text{dormant}}^\top \mathbf{act}_A \approx
\mathbf{v}_{\text{dormant}}^\top\mathbf{act}_B$).
\end{itemize}
 
To illustrate this algebraically, consider what happens when we patch
along $\mathbf{v}_{\text{illusory}}$. We have 
\begin{align*}
    p_A = \mathbf{v}_{\text{illusory}}^\top \mathbf{act}_A = 
    \frac{1}{\sqrt{2}} \l(\mathbf{v}_{\text{disconnected}}^\top \mathbf{act}_A + \mathbf{v}_{\text{dormant}}^\top \mathbf{act}_A\r)
    \\
    p_B = \mathbf{v}_{\text{illusory}}^\top \mathbf{act}_B = 
    \frac{1}{\sqrt{2}}\l(\mathbf{v}_{\text{disconnected}}^\top \mathbf{act}_B + \mathbf{v}_{\text{dormant}}^\top \mathbf{act}_B\r)
\end{align*}
By assumption, $\mathbf{v}_{\text{dormant}}^\top \mathbf{act}_B =
\mathbf{v}_{\text{dormant}}^\top \mathbf{act}_A$. Thus,
\begin{align*}
    p_A - p_B = \frac{1}{\sqrt{2}}\l(\mathbf{v}_{\text{disconnected}}^\top \mathbf{act}_A - \mathbf{v}_{\text{disconnected}}^\top \mathbf{act}_B\r)
\end{align*}
so the patched activation is 
\begin{align*}
    \mathbf{act}_B^{\text{patched}} = \mathbf{act}_B +
    \frac{1}{\sqrt{2}}\l(\mathbf{v}_{\text{disconnected}}^\top \mathbf{act}_A -
    \mathbf{v}_{\text{disconnected}}^\top \mathbf{act}_B\r)\mathbf{v}_{\text{illusory}}.
\end{align*}
If for example $\mathbf{v}_{\text{illusory}}$ is in the space of
post-nonlinearity activations of an MLP layer with down-projection matrix $W_{out}$, and
$\mathbf{v}_{\text{disconnected}}\in\ker W_{out}$, the new output of the MLP
layer after the patch will be 
\begin{align*}
    W_{out} \mathbf{act}_B^{\text{patched}} &= W_{out}\mathbf{act}_B +
    \frac{1}{\sqrt{2}}\l(\mathbf{v}_{\text{disconnected}}^\top \mathbf{act}_A -
    \mathbf{v}_{\text{disconnected}}^\top
    \mathbf{act}_B\r)W_{out}\mathbf{v}_{\text{illusory}}
\end{align*}
\begin{align}
    \label{eq:patched-mlp-output-formula}
    &= W_{out}\mathbf{act}_B + \frac{1}{2}\l(\mathbf{v}_{\text{disconnected}}^\top \mathbf{act}_A -
    \mathbf{v}_{\text{disconnected}}^\top
    \mathbf{act}_B\r)W_{out}\mathbf{v}_{\text{dormant}}
\end{align}
where we used that $W_{out}\mathbf{v}_{\text{disconnected}} = 0$.
From this equation, we see that, by patching along the sum of a disconnected and
dormant direction, the variation in activation projections on the disconnected
part (which we assume is significant) `activates' the dormant
part: we get a new contribution to the MLP's output (along
$W_{out}\mathbf{v}_{\text{dormant}}$) which can
then possibly influence model outputs. This contribution would not exist if we patched only along $\mathbf{v}_{\text{disconnected}}$
(because it would be nullified by $W_{out}$) or
$\mathbf{v}_{\text{dormant}}$ (because then we would have $p_A\approx p_B$).

We make the concepts
of causally disconnected and dormant subspaces formal in Subsection
\ref{sub:formalize-intuitions}.  We also remark that, under the assumptions of
the above discussion, the optimal illusory patch will provably combine the
disconnected and dormant directions with equal weight $\frac{1}{\sqrt{2}}$ as in
Equation \ref{eq:illusion-decomposition}; the proof is given in Appendix
\ref{app:equal-norms}.

\subsection{The Illusion in a Toy Model}
\label{sub:toy-illusion}
With these concepts in mind, we can construct a distilled example of the
illusion in a toy (linear) neural network. Specifically, consider a network
$\mathcal{M}$ that takes in an input $x\in\mathbb{R}$, computes a three-dimensional
hidden representation $\mathbf{h} = x\mathbf{w}_1$, and then a real-valued
output $y = \mathbf{w}_2^T \mathbf{h}$. Define the weights to be 
\begin{align*}
    \mathbf{w}_1 = \l(1, 0, 1\r), \quad \text{and}\quad \mathbf{w}_2 = \l(0, 2, 1\r)
\end{align*}
and observe that $\mathcal{M}\l(x\r)=x$, i.e. the network computes the identity
function:
\begin{align*}
    x \quad\mapsto\quad \mathbf{h} = (x, 0, x) \quad\mapsto\quad y = 0\times x + 2\times 0 +1\times x = x.
\end{align*}
This network is illustrated in Figure
\ref{fig:mathematical-example-diagram-main}. We can analyze the 1-dimensional
subspaces (directions) spanned by each of the three hidden activations:
\begin{itemize}
\item the $h_1$ direction is causally disconnected: setting it to any value has
no effect on the output;
\item the $h_2$ direction is dormant: it is constant (always $0$) on the data,
but setting it to some other value will affect the model's output;
\item the $h_3$ direction mediates the signal through the network: the input $x$
is copied to it, and is in turn copied to the output\footnote{
An important note on this particular example is that the distinction between
causally disconnected, dormant and faithful to the computation directions is
artificial, and here it is only used for exposition. In particular, we show in
Appendix \ref{app:rotated-basis} that re-parametrizing the hidden layer of the
network via a rotation makes $\mathbf{v}_{\text{illusory}}$ take the role of the
faithful direction $\mathbf{e}_3$, and the two other (rotated) basis vectors
become a disconnected/dormant pair. By contrast, when we exhibit the illusion in
real-world scenarios, a reparametrization of this kind would need to combine
activations between different model components, such as MLP layers and residual
stream activations. We return to this point in Section \ref{sec:discussion}.
}.
\end{itemize}
As expected, patching along the direction $h_3$ overwrites the value of the $x$
feature (which in this example is identical to the input). That is, patching
along $h_3$ from $x'$ into $x$ makes the network output $x'$ instead of $x$.

However, patching along the sum of the causally disconnected direction $h_1$ and
the dormant direction $h_2$ represented by the unit vector
$\mathbf{v}_{\text{illusory}}=\l(\frac{1}{\sqrt{2}}, \frac{1}{\sqrt{2}}, 0\r)$
has the same effect: using Equation \ref{eq:onedim-patch}, patching from $x'$
into $x$ along $\mathbf{v}_{\text{illusory}}$ results in the hidden activation
\begin{align*}
    \mathbf{h}^{\text{patched}} = \l(\frac{x+x'}{2}, \frac{x'-x}{2}, x\r)^\top
\end{align*}
which when multiplied with $\mathbf{w}_2$ gives the final output $2\times
\frac{x'-x}{2} + 1\times x = x'$.

\begin{figure}
  \begin{center}
\resizebox{0.5\textwidth}{!}{
\begin{tikzpicture}[state/.style={circle, draw, 
  minimum size=2.5cm, font=\LARGE}]
\node[state] (X) {$x$};
\node[state] (H1) at (5,4) {$h_1 \gets x$};
\node[state] (H2) at (5,0) {$h_2\gets 0$};
\node[state] (H3) at (5,-4) {$h_3 \gets x$};
\node[state, right of= H2] at (10,0) (Y) {$y \gets x$};

\draw[-{Stealth[scale=2]}] (X) -> (H1) node[midway, above left, font=\LARGE] {$\times 1$};
\draw[-{Stealth[scale=2]}] (X) -> (H3) node[midway, below left, font=\LARGE] {$\times 1$};
\draw[-{Stealth[scale=2]}] (H2) -> (Y) node[midway, below, font=\LARGE] {$\times
2$};
\draw[-{Stealth[scale=2]}] (H3) -> (Y) node[midway, below, font=\LARGE] {$\times
1$};
\end{tikzpicture}
}
  \end{center}
\caption{A network $\mathcal{M}$ illustrating the illusion. The network computes
the identity function: $\mathcal{M}(x)=x$. The activation of the input, output
and each hidden neuron for a generic input $x$ are shown in the circles, with
arrows indicating the weight of the connections (no arrow means a weight of
$0$). The hidden unit $h_3$ stores the value of the input and passes this to the
output, while the unit $h_2$ is dormant and $h_1$ is disconnected from the output. However,
activation patching the 1-dimensional linear subspace spanned by the sum of the
$h_1$ and $h_2$ basis vectors (defined by the unit vector
$\mathbf{v}=(\frac{1}{\sqrt{2}},\frac{1}{\sqrt{2}}, 0)$) has the same effect on
model behavior as patching just the unit $h_3$.}
\label{fig:mathematical-example-diagram-main}
\end{figure}
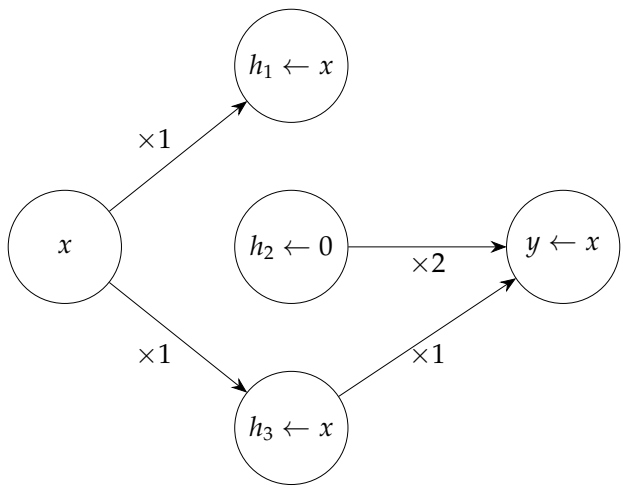

\subsection{Detecting the illusion in practice}
\label{sub:detecting-illusion}

How can we tell if this kind of phenomenon occurs for a given subspace activation
patch? Given a subspace spanned by a unit vector $\mathbf{v}$, suppose that
activation patching along this subspace has an effect on model outputs
consistent with changing the property that varies between the examples being
patched. We can attempt to
decompose it orthogonally into a causally disconnected part and a dormant part,
and argue that each of these parts has the properties described in the above
sections. 

Specifically, when $\mathbf{v}$ is in the post-GELU activations of an MLP layer
in a transformer with down-projection $W_{out}$ (almost all examples in this
paper are of this form), it is clear that the orthogonal
projection of $\mathbf{v}$ on $\ker W_{out}$ is causally disconnected. This
suggests writing $\mathbf{v} = \mathbf{v}_{\text{null}} +
\mathbf{v}_{\text{row}}$ where $\mathbf{v}_{\text{null}}\in\ker W_{out}$ is the
orthogonal projection on $\ker W_{out}$, and $\mathbf{v}_{\text{row}}$ is the
remainder, which is in $\l(\ker W_{out}\r)^\perp$, the rowspace of $W_{out}$.
Using this decomposition, we can perform several experiments:
\begin{itemize}
\item compare the strength of the patch to patching along the subspace spanned
by $\mathbf{v}_{\text{row}}$ alone, obtained by removing the causally
disconnected part of $\mathbf{v}$. If $\mathbf{v}_{\text{row}}$ is
indeed dormant as we hope to show, this patch should have no effect on model outputs; in
reality, $\mathbf{v}_{\text{row}}$ may only be approximately dormant, so the
patch may have a small effect. Conversely, if this patch has an effect similar
to the original patch along $\mathbf{v}$, this is evidence against the illusion;
\item check how dormant $\mathbf{v}_{\text{row}}$ is compared to
$\mathbf{v}_{\text{null}}$ by comparing the spread of projections of the
examples on both directions.
\end{itemize}

We use these experiments, as well as others, throughout the paper in order to
rule out or confirm the illusion.

\subsection{Formalization of Causally Disconnected and Dormant Subspaces}
\label{sub:formalize-intuitions}
For completeness, in this subsection we give a (somewhat) formal treatment of
the intuitive ideas introduced in the previous subsection. Readers may also want
to consult Appendix \ref{app:higher-dim} for background on patching
higher-dimensional subspaces, which is used to define these concepts. 

Let $\mathcal{M}:\mathcal{X}\to\mathcal{O}$ be a machine learning model that on input $x\in\mathcal{X}$ outputs a vector $y\in\mathcal{O}$ of probabilities over a set of output classes. Let $\mathcal{D}$ be a distribution over $\mathcal{X}$, and $\mathcal{C}$ be a component of $\mathcal{M}$, such that for $x\sim\mathcal{D}$ the hidden activation of $\mathcal{C}$ is a vector $c_x \in\mathbb{R}^d$. For a subspace $U_{\mathcal{C}}\subset\mathbb{R}^d$, we let $u_x$ be the orthogonal projection of $c_x$ onto $U_{\mathcal{C}}$. Finally, let $\mathcal{M}_{U_{\mathcal{C}} \leftarrow u_y}(x)$ be the result of running $\mathcal{M}$ with the input $x$ and setting the subspace $U_{\mathcal{C}}$ patched to $u_y$.

We say $U_{\mathcal{C}}$ is \emph{causally disconnected} if $\mathcal{M}_{U_{\mathcal{C}}
\leftarrow u'}(x) = \mathcal{M}(x)$ for all $u' \in U_{\mathcal{C}}$. In other
words, intervening on the model by setting
the orthogonal projection of $\mathcal{C}$'s activation on $U_{\mathcal{C}}$ to any other
value does not change the model's outputs. For a concrete example of a causally
disconnected subspace, consider an MLP layer in a transformer model with an
output projection matrix $W_{out}$; then, $\ker W_{out}$ is a causally
disconnected subspace of the hidden (post-nonlinearity) activations of the MLP layer.

We say $U_{\mathcal{C}}$ is \emph{dormant} if $\mathcal{M}_{U_{\mathcal{C}} \leftarrow u_y}(x)
\approx \mathcal{M}(x)$ with high probability over $x, y\sim\mathcal{D}$, but
there exists $u'\in \mathbb{R}^d$ such that $\mathcal{M}_{U_{\mathcal{C}}
\leftarrow u'}(x)$ is substantially different from $\mathcal{M}(x)$ (e.g.,
significantly changes the model's confidence on the task's answer).
In other words, a dormant subspace is approximately causally disconnected when
we patch its value using activations realized under the distribution
$\mathcal{D}$, but can have substantial causal effect if set to
other values.

\section{The Illusion in the Indirect Object Identification Task}
\label{sec:ioi}

\subsection{Preliminaries}
In \citet{wang2022interpretability}, the authors analyze how the decoder-only transformer language model GPT-2 Small \citep{radford2019language} performs the \emph{indirect object identification} task. 
In this task, the model is required to complete sentences of the form `When Mary
and John went to the store, John gave a bottle of milk to' (with the intended
completion in this case being ` Mary'). We refer to the repeated name (John) as
\textbf{S} (the subject) and the non-repeated name (Mary) as \textbf{IO} (the
indirect object). For each choice of the \textbf{IO} and \textbf{S} names, there
are two patterns the sentence can have: one where the \textbf{IO} name comes
first (we call these `ABB examples'), and one where it comes second (we call
these `BAB examples').
Additional details on the data distribution, model and task performance are given in Appendix \ref{app:ioi-dataset-details}.

\citet{wang2022interpretability} suggest the model uses the algorithm `Find the
two names in the sentence, detect the repeated name, and predict the
non-repeated name' to do this task.  In particular, they find a set of four
heads in layers 7 and 8 -- the \textbf{S-Inhibition heads} -- that output the
signal responsible for \emph{not} predicting the repeated name.  The dominant
part of this signal is of the form `Don't attend to the name in first/second
position in the first sentence' depending on where the \textbf{S} name appears
(see Appendix A in \citet{wang2022interpretability} for details). In other
words, this signal detects whether the example is an ABB or BAB example. This signal
is added to the residual stream\footnote{We follow the conventions of
\citet{elhage2021mathematical} when describing internals of transformer models.
The residual stream at layer $k$ is the sum of the output of all layers up to
$k-1$, and is the input into layer $k$.} at the last token position, and is then
picked up by another class of heads in layers 9, 10 and 11 -- the \textbf{Name
Mover heads} -- which incorporate it in their queries to shift attention to the
\textbf{IO} name and copy it to the last token position, so that it can be
predicted (Figure \ref{fig:ioi-diagram}).

\begin{wrapfigure}{R}{0.5\textwidth}
    \centering
    \begin{tikzpicture}
        \node[anchor=south west,inner sep=0] (image) at (0,0)
        {\includegraphics[width=0.5\textwidth]{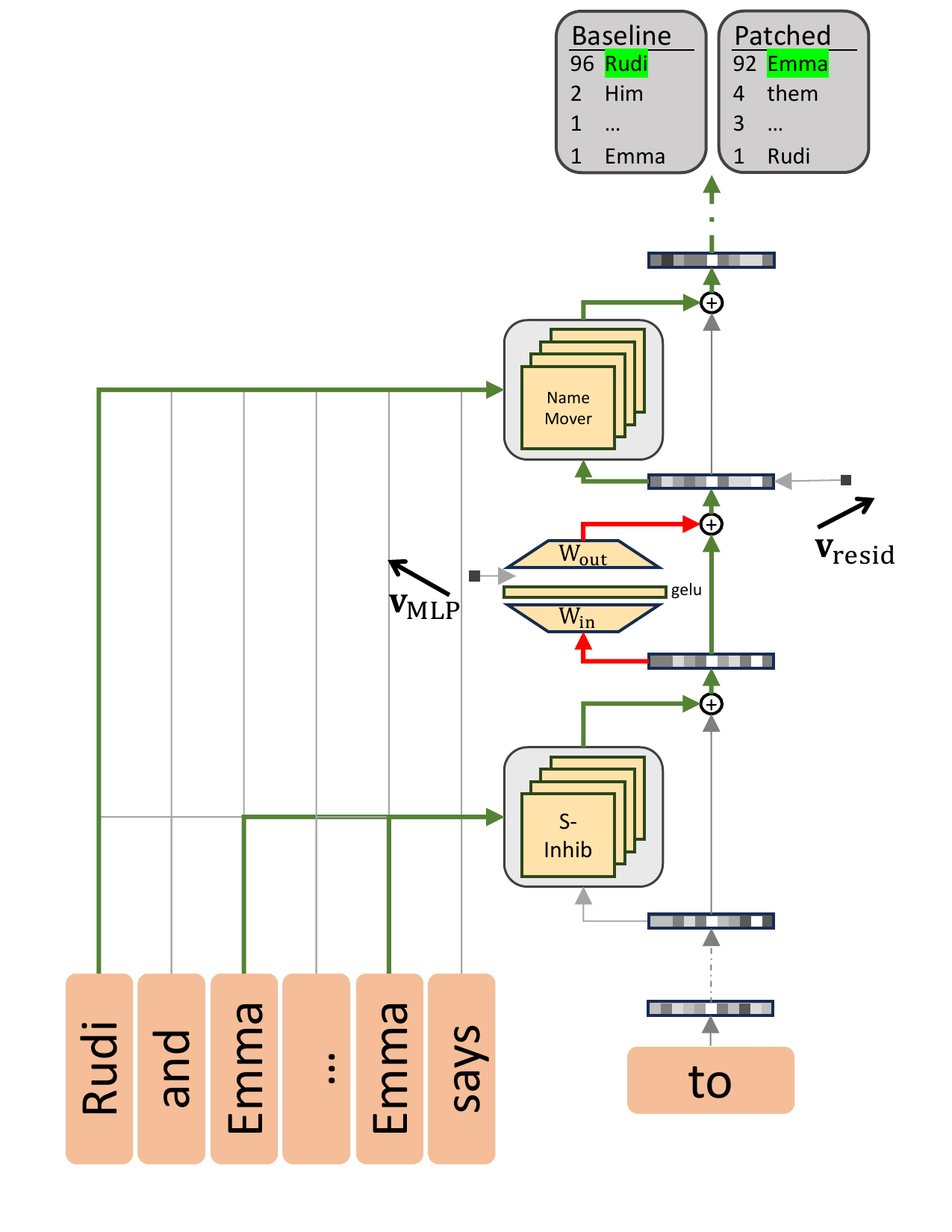}};
    \end{tikzpicture}
    \caption{Schematic of the IOI circuit and locations of key interventions.
    As argued in \citet{wang2022interpretability}, GPT2-Small predicts the
    correct name by S-inhibition heads writing positional information to the
    residual stream, which is used by the name movers to copy the non-duplicated
    name (green arrows). Location of subspace interventions
    $\textbf{v}_{\text{MLP}}$ (analyzed in Subsection \ref{sub:demonstrating-mlp-illusion}) and
    $\textbf{v}_{\text{resid}}$ (analyzed in Section \ref{sec:ground-truth}) are
    marked.  Patching the illusory subspace $\textbf{v}_{\text{MLP}}$ adds a new
    path (red) along the established one that is used to flip positional
    information when patched.}
    \label{fig:ioi-diagram}
\end{wrapfigure}

\subsection{Finding Subspaces Mediating Name Position Information}
\label{sub:ioi-methodology}
How, precisely, is the positional signal communicated?  In particular, `don't
attend to the first/second name' is plausibly a binary feature represented by a
1-dimensional subspace. In this subsection, we present methods to look for such
a subspace.

\textbf{Gradient of name mover attention scores}. As shown in
\cite{wang2022interpretability}, the three name mover heads identified therein
will attend to one of the names, and the model will predict whichever name is
attended to. The position feature matters mechanistically by determining whether
they attend to \textbf{IO} over \textbf{S}. This motivates us to consider the
gradient $\mathbf{v}_{\text{grad}}$ of the difference of attention scores of
these heads on the \textbf{S} and \textbf{IO} names with respect to the residual
stream at the last token, right after layer 8. We choose this layer because it
right after the S-Inhibition heads (in layers 7 and 8) and before the name mover
heads (in layers 9 and 10); see Figure \ref{fig:ioi-diagram}. This gradient is
the direction in the space of residual stream activations at this location that maximally shifts
attention between the two names (per unit $\ell_2$ norm), so we expect it to be
a strong mediator of the position signal. Implementation details are given in Appendix
\ref{app:gradient-details}.

Importantly, the transformation from residual stream activations to attention
scores is an approximately linear map: it consists of layer normalization
followed by matrix multiplication. Layer normalization is a linear operation
modulo the scaling step, and empirically, the scales of different examples in a
trained model at inference time are tightly concentrated (see also `Handling
Layer Normalization' in \citet{elhage2021mathematical}). This justifies the use
of the gradient -- which is in general only locally meaningful -- as a direction
in the residual stream globally meaningful for the attention scores of the name
mover heads.

\textbf{Distributed alignment search}. We can also directly optimize for a
direction that mediates the position signal. This is the approach taken by DAS
\citep{geiger2023finding}. In our context, DAS optimizes for an activation
subspace which, when activation patched from prompt $B$ into prompt $A$, makes
the model behave as if the relative position of the \textbf{IO} and \textbf{S}
names in the sentence is as in prompt $B$. Specifically, if we patch between examples where the positions of the two names are the
same, we optimize for a patch that \emph{maximizes} the difference in predicted logits
for the \textbf{IO} and \textbf{S} names. Conversely, if we patch between examples where the positions of the two names are
switched, we optimize to \emph{minimize} this difference.
This approach is based purely on the
model's predictions, and does not make any assumptions about its internal
computations. 

We let $\mathbf{v}_{\text{MLP}}$ and $\mathbf{v}_{\text{resid}}$
be 1-dimensional subspaces found by DAS in the layer 8 MLP activations and layer
8 residual stream output at the last token, respectively (see Figure
\ref{fig:ioi-diagram}). Both of these locations are between the
S-Inhibition and Name Mover heads; however, \cite{wang2022interpretability} did
not find any significant contribution from the MLP layer, making it a potential
location for our illusion. Implementation details are given in Apendix
\ref{app:das-training-details}.

\subsection{Measuring Patching Success via the Logit Difference Metric}
\label{sub:ioi-metrics}

In our experiments, we perform all patches between examples that only differ in
the variable we want to localize in the model, i.e. the position of the
\textbf{S} and \textbf{IO} names in the first sentence. In other words, we patch
from an example of the form `Then, Mary and John went to the store. John gave a
book to' (an ABB example) into the corresponding example `Then, John and Mary went to the store.
John gave a book to' (a BAB example), and vice-versa. Our activation patches have the goal of
making the model output the \textbf{S} name instead of the \textbf{IO} name. 

Accordingly, we use the \emph{logit difference} between the logits assigned to
the \textbf{IO} and \textbf{S} names as our main measure of how well a patch
performs. We note that the logit difference is a meaningful quantifier of the
model's confidence for one name over the other (it is equal to the log-odds
between the two names assigned by the model), and has been extensively used in
the original IOI circuit work \citet{wang2022interpretability} to measure
success on the IOI task.

Given a prompt $x$ from the IOI distribution, let
$\operatorname{logit}_{\textbf{IO}}\l(x\r),
\operatorname{logit}_{\textbf{S}}\l(x\r)$ denote the last-token logits output by
the model on input $x$, for the \textbf{IO} and \textbf{S} names in the prompt
$x$ respectively (note that in our IOI distribution, all names are single tokens
in the vocabulary of the model).
The logit difference
\begin{align*}
    \operatorname{logitdiff}\l(x\r) := \operatorname{logit}_{\textbf{IO}}\l(x\r) - \operatorname{logit}_{\textbf{S}} \l(x\r)
\end{align*}
when $x$ is sampled from the IOI distribution is $>0$ for almost all examples
($99+\%$), and is on average $\approx 3.5$ (for this average value, the probability
ratio in favor of the \textbf{IO} name is $e^{3.5}\approx 33$).

Similarly, for an activation patching intervention $\iota$, let
$\operatorname{logit}_{\textbf{IO}}^{\iota(x\gets x')}(x),
\operatorname{logit}_{\textbf{S}}^{\iota(x\gets x')}(x)$ denote the last-token
logits output by the model when run on input $x$ but patching from $x'$ using
$\iota$. The logit difference after intervening via $\iota$ is thus
\begin{align*}
    \operatorname{logitdiff}_{\iota(x\gets x')}\l(x\r) := 
\operatorname{logit}_{\textbf{IO}}^{\iota(x\gets x')}(x) -
    \operatorname{logit}_{\textbf{S}}^{\iota(x\gets x')}(x)
\end{align*}
Our main metric is the average \textbf{fractional logit difference decrease
(FLDD)} due to the intervention $\iota$, where
\begin{align}
\label{eq:fractional-logit-diff}
    \operatorname{FLDD}_{\iota \l(x\gets x'\r)}(x) 
    :=\frac{\operatorname{logitdiff}(x) - \operatorname{logitdiff}_{\iota(x\gets
    x')}}
    {\operatorname{logitdiff}(x)}
    = 1 - \frac{\operatorname{logitdiff}_{\iota(x\gets
    x')}}{\operatorname{logitdiff}(x)}
\end{align}
The average FLDD is $0$ when the patch does not, on average, change the model's
log-odds. The more positive FLDD is, the more successful the patch, with values
above $100\%$ indicating that the patch more often than not makes the model
prefer the \textbf{S} name over the \textbf{IO} name. Finally, an average FLDD
below $0\%$ means that the patch on average helps the model do the task (and
thus the patch has failed).

We also measure the \textbf{interchange accuracy} of the intervention: the 
fraction of patches for which the model predicts the \textbf{S} (i.e., wrong)
name for the patched run. This is a `hard' 0-1 counterpart to the FLDD metric.

\paragraph{Why prefer the FLDD metric to interchange accuracy?} We argue that
our main metric, which is based on logit difference (Equation
\ref{eq:fractional-logit-diff}), is a better reflection of the success of a
patch than the accuracy-based interchange accuracy.  Specifically, there are
practical cases (e.g. the results in Subsection
\ref{sub:demonstrating-mlp-illusion}) in which an intervention consistently
achieves FLDD $\approx 50\%$, even though the interchange accuracy is $\approx
0\%$. In practice, circuits often have multiple components contributing to the
same signal (including the IOI circuit found in
\citet{wang2022interpretability}). So a single non-residual-stream component
consistently responsible for shifting $50\%$ of the model's log-odds towards
another prediction is significant (even more so for a low-dimensional subspace
of the component). Indeed, even if this component's contribution alone is
insufficient to cause the predicted token to change, three such components would
robustly change the prediction.

\subsection{Results: Demonstrating the Illusion for the
$\mathbf{v}_{\text{MLP}}$ Direction}
\label{sub:demonstrating-mlp-illusion}

\begin{figure}
  \begin{minipage}[b]{.42\linewidth}
    \centering
    \begin{tikzpicture}
        \node[anchor=south west,inner sep=0] (image) at (0,0) {\includegraphics[width=1.0\textwidth]{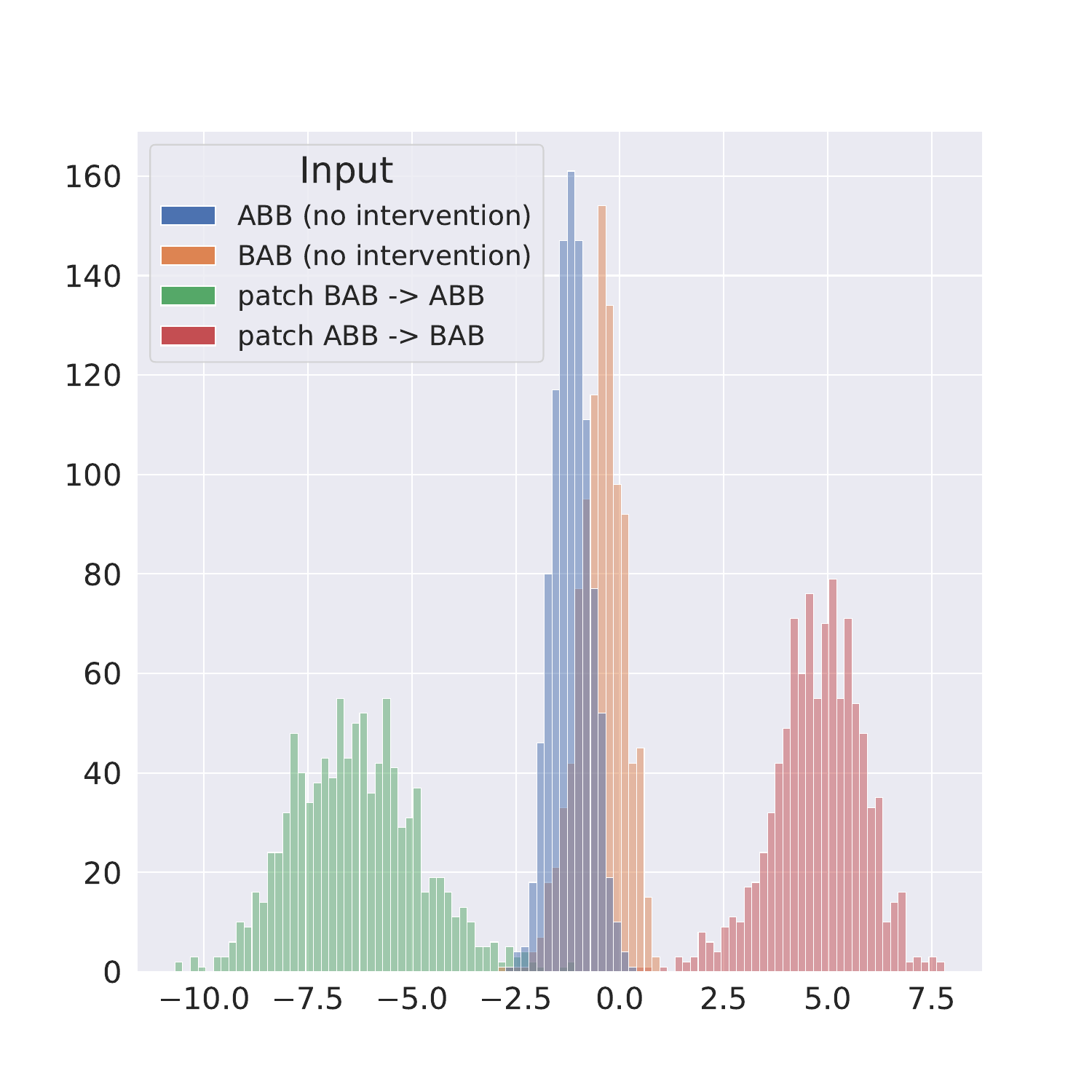}};
        \begin{scope}[x={(image.south east)},y={(image.north west)}]
            % Add labels
            \node [anchor=east, rotate=90] at (-0.01,0.75) {Number of examples};
            \node [anchor=north] at (0.5,0.05) {Activation projection};
        \end{scope}
    \end{tikzpicture}
    \captionof{figure}{
    Projections of the output of the MLP layer on the gradient direction
    $\mathbf{v}_{\text{grad}}$ before (blue/orange) and after (green/red) the
    activation patch along $\mathbf{v}_{\text{MLP}}$. In the legend, `ABB'
    denotes examples where the \textbf{IO} name comes before the \textbf{S}
    name, and `BAB' the other kind of examples. 
    \\    
    While before the patch the contribution of the MLP layer to the causally
    relevant direction $\mathbf{v}_{\text{grad}}$ distinguishes between values
    of the \textbf{IO} position in the prompt, after the patch there is a strong
    distinction (in the opposite direction). This shows that the patch activates
    a potential mediator of this feature that is normally dormant, taking model
    activations off-distribution.
    }%
    \label{fig:mlp8-output-projections}
  \end{minipage}
  \hspace{0.5cm}
  \begin{minipage}[b]{.53\linewidth}
    \centering
    \begin{tabular}{lrr}
    \toprule
    \multicolumn{1}{p{1cm}}{\centering Patching\\ subspace} &
    \multicolumn{1}{p{1.5cm}}{\centering FLDD} &
    \multicolumn{1}{p{1.8cm}}{\centering Interchange\\ accuracy} \\
    \midrule
    full MLP & -8\% & 0.0\% \\
    $\mathbf{v}_{\text{MLP}}$ & 46.7\% & 4.2\% \\
    $\mathbf{v}_{\text{MLP}}$ rowspace & 13.5\% & 0.2\% \\
    $\mathbf{v}_{\text{MLP}}$ nullspace & 0\% & 0.0\% \\
    full residual stream & 123.6\% & 54.8\% \\
    $\mathbf{v}_{\text{resid}}$ & 140.7\% & 74.8\% \\
    $\mathbf{v}_{\text{resid}}$ rowspace & 127.5\% & 63.1\% \\
    $\mathbf{v}_{\text{resid}}$ nullspace & 13.9\% & 0.4\% \\
    $\mathbf{v}_{\text{grad}}$ & 111.5\% & 45.1\% \\
    $\mathbf{v}_{\text{grad}}$ rowspace & 106.47\% & 40.6\% \\
    $\mathbf{v}_{\text{grad}}$ nullspace & 2.2\% & 0.0\% \\
    \bottomrule
    \end{tabular}
    \captionof{table}{Effects of activation patching of full components and
    1-dimensional subspaces on the IOI task: fractional logit difference
    decrease (FLDD, higher means more successful patch; $0\%$ means no change)
    and interchange accuracy (fraction of predictions flipped; higher means more
    successful patch). 
    \\
    The first 5 interventions are described in more detail in Section
    \ref{sec:ioi}, and the next 6 in Section \ref{sec:ground-truth}.
    \\
    An FLDD metric of $>100\%$ indicates that the patch is more successful than
    not on average; however, an FLDD of $\approx 50\%$ is still significant,
    even if the associated interchange accuracy may be $\approx 0\%$. See
    Subsection \ref{sub:ioi-metrics} for more on interpreting the FLDD metric.
    }
    \label{tab:ioi-metrics}
  \end{minipage}
\end{figure}

We now show that patching the $\mathbf{v}_{\text{MLP}}$ direction exhibits the
illusion from Section \ref{sec:methods}. By contrast, we revisit
$\mathbf{v}_{\text{grad}}$ and $\mathbf{v}_{\text{resid}}$ in Section
\ref{sec:ground-truth}, where we show that both are representations of the name
position information that are highly faithful to the model's computation.

\paragraph{Methodology and interventions considered} To contextualize the effect
of the $\textbf{v}_{\text{MLP}}$ patch, we compare it to several additional
subspace- and component-level activation patching interventions:
\begin{itemize}
\item \textbf{full MLP}: patching the full value of the hidden activation of the
8-th MLP layer at the last token.
\item \textbf{$\mathbf{v}_{\text{MLP}}$}: patching along the 1-dimensional
subspace spanned by
the direction $\textbf{v}_{\text{MLP}}$ found in Subsection
\ref{sub:ioi-methodology}.
\item \textbf{$\mathbf{v}_{\text{MLP}}$ nullspace}: patching along the
1-dimensional subspace spanned by the causally
disconnected component of $\mathbf{v}_{\text{MLP}}$. This is the orthogonal
projection $\mathbf{v}_{\text{MLP}}^{\text{nullspace}}$ of
$\mathbf{v}_{\text{MLP}}$ on the nullspace $\ker W_{out}$ of the down-projection
$W_{out}$ of the MLP layer. Note that $W_{out}\in \mathbb{R}^{768\times 3072}$,
so its kernel occupies at least $2304$ dimensions, or $3/4$ of the total
dimension of the space of MLP activations.
\item \textbf{$\mathbf{v}_{\text{MLP}}$ rowspace}: patching along the
1-dimensional subspace spanned by causally relevant
component of $\mathbf{v}_{\text{MLP}}$. This is the orthogonal projection
$\mathbf{v}_{\text{MLP}}^{\text{rowspace}}$ of $\mathbf{v}_{\text{MLP}}$ on the
rowspace of $W_{out}$. Note that we have the orthogonal decomposition
\begin{align*}
    \mathbf{v}_{\text{MLP}} = \mathbf{v}_{\text{MLP}}^{\text{nullspace}} +
    \mathbf{v}_{\text{MLP}}^{\text{rowspace}}.
\end{align*}
\item \textbf{full residual stream}: patching the entire activation of the
residual stream at the last token after layer 8 of the model. This is indicated
as the location of $\mathbf{v}_{\text{resid}}$ in Figure \ref{fig:ioi-diagram}.
\end{itemize}

\paragraph{Results.} Metrics are shown in Table \ref{tab:ioi-metrics}. In
particular, we confirm the mechanics of the illusion are at play through the
following observations.

\subparagraph{The causally disconnected component of $\mathbf{v}_{\text{MLP}}$
drives the effect.}
While patching the $\mathbf{v}_{\text{MLP}}$ direction has a significant effect
on the FLDD metric ($46.7\%$), this effect is greatly diminished when we remove
the component of $\mathbf{v}_{\text{MLP}}$ in $\ker W_{out}$ whose activations
are (provably) causally disconnected from model predictions ($13.5\%$), or when
we patch the entire MLP activation ($-8\%$, actually increasing confidence).  By
contrast, performing analogous ablations on $\mathbf{v}_{\text{resid}}$ leads to
roughly the same numbers for the three analogous interventions ($140.7\% / 127.5\% / 123.6\%$; we refer the reader to
Section \ref{sec:ground-truth} for details on the $\mathbf{v}_{\text{resid}}$
experiments). 

\subparagraph{Patching $\mathbf{v}_{\text{MLP}}$ activates a dormant pathway
through the MLP.} To corroborate these findings, in Figure
\ref{fig:mlp8-output-projections}, we plot the projection of the MLP layer's
contribution to the residual stream on the gradient direction
$\mathbf{v}_{\text{grad}}$ before and after patching, in order to see how it
contributes to the attention of name mover heads.  We observe that in the
absence of intervention, the MLP output is weakly sensitive to the name position
information, whereas after the patch this changes significantly.

\subparagraph{Further validations of the illusion.}

We observe that the
disconnected-dormant decomposition from the illusion approximately holds: the
causally disconnected component of $\mathbf{v}_{\text{MLP}}$ (the one in $\ker
W_{out}$) is significantly more discriminating between ABB and BAB examples than
the component in $\l(\ker W_{out}\r)^\top$, which is the one driving the causal
effect (Figure \ref{fig:mlp8-projections-by-pattern}); in this sense, the
causally relevant component is `dormant' relative to the causally diconnected
one\footnote{The projection of $\mathbf{v}_{\text{MLP}}$ onto $\ker W_{out}$ is
substantial: it has norm $\approx 0.65$, and the orthogonal projection onto
$\l(\ker W_{out}\r)^\top$ has norm $\approx 0.75$ (as predicted by our model,
the two components are approximately equal in norm; see Appendix
\ref{app:equal-norms}).}. 

While the contribution of the $\mathbf{v}_{\text{MLP}}$ patch
to the FLDD metric may appear relatively small, and the interchange accuracy of
this intervention is very low, in Appendix
\ref{app:illusion-magnitude} we argue that this is significant for a single
component. 

A potential concern when evaluating these results is overfitting by DAS. In our
experiments, we always evaluate trained subspaces on a held-out test dataset
which uses different names, objects, places and templates for the sentences;
this makes sure that we learn a general (relative to our IOI distribution)
subspace representing position information and not a subspace that only works
for particular names or other details of the sentence. We investigated
overfitting in DAS further in Appendix \ref{app:overfitting}, and found that
when DAS is trained on a dataset with a small number of names, overfitting is a
real concern. However, the extent of overfitting is not such that DAS works in
layers of the model where a generalizing DAS solution can also be found.

Another potential concerns is that the model could be somehow representing the
position information in the MLP layer in a higher-dimensional subspace, and that
our 1-dimensional intervention is perhaps not fit to illuminate the properties
of that larger representation. In Appendix \ref{app:generalization-high-dim}, we
show that the illusion occurs when patching 100-dimensional subspaces as well,
and the quantitative effect of the illusion is just a little stronger than that
for 1-dimensional subspaces (as measured by the FLDD metric).

Finally, in Appendix \ref{app:random-mlp}, we show that we can find a
direction within the post-$\operatorname{gelu}$ activations that has an even
stronger effect on the model's behavior, \emph{even when} we replace the MLP
weights with random matrices.

\begin{figure}[ht]
    \centering
    \begin{minipage}[b]{0.45\textwidth}
    \begin{tikzpicture}
        \node[anchor=south west,inner sep=0] (image) at (0,0) {\includegraphics[width=1.0\textwidth]{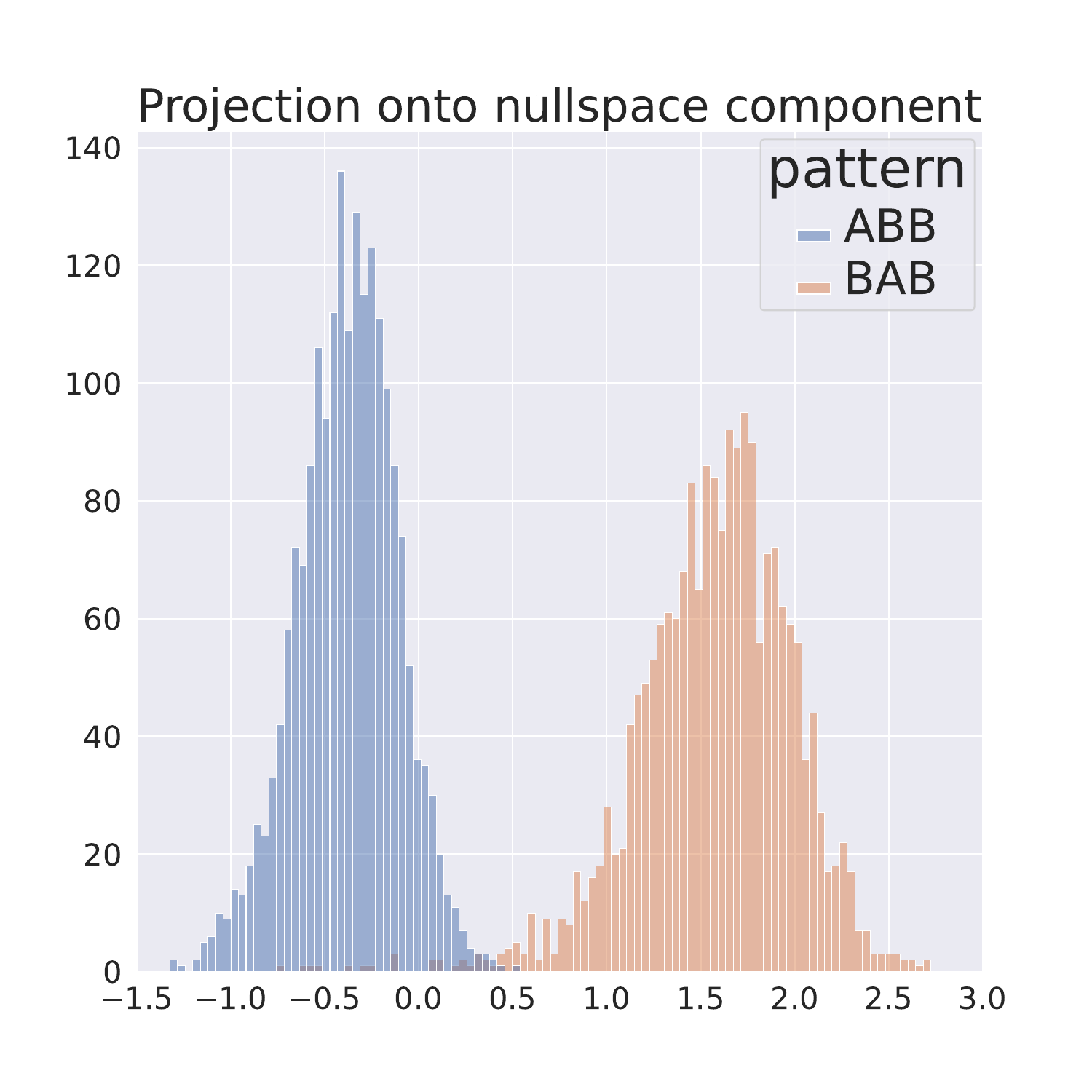}};
        \begin{scope}[x={(image.south east)},y={(image.north west)}]
            % Add labels
            \node [anchor=east, rotate=90] at (-0.05,0.8) {Number of examples};
            \node [anchor=north] at (0.5,0.0) {Activation projection};
        \end{scope}
    \end{tikzpicture}
    \end{minipage}
    \begin{minipage}[b]{0.45\textwidth}
    \begin{tikzpicture}
        \node[anchor=south west,inner sep=0] (image) at (0,0) {\includegraphics[width=1.0\textwidth]{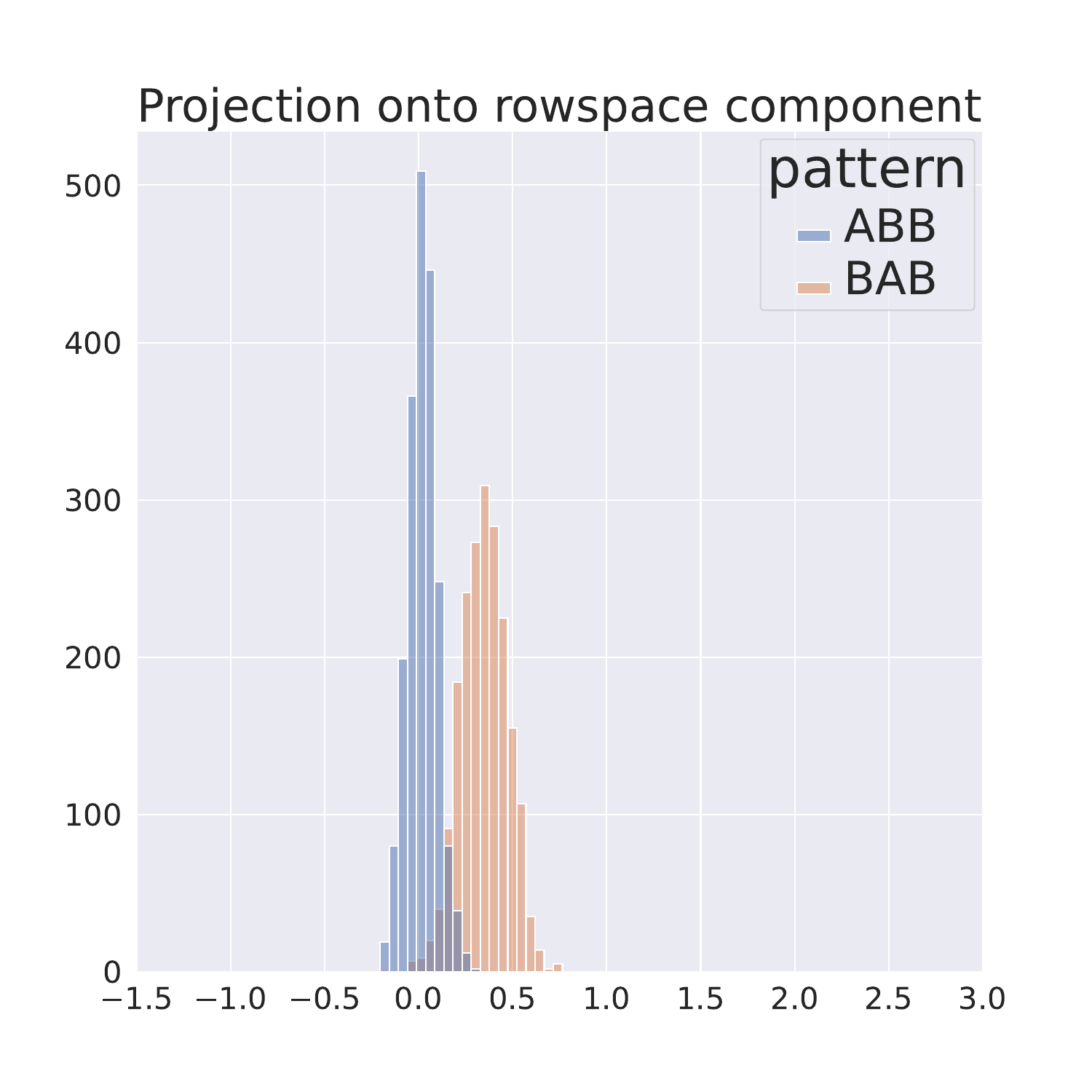}};
        \begin{scope}[x={(image.south east)},y={(image.north west)}]
            % Add labels
            \node [anchor=north] at (0.5,0.0) {Activation projection};
        \end{scope}
    \end{tikzpicture}
    \end{minipage}
    
    \caption{Projections of dataset examples onto the two (normalized to unit $\ell_2$ norm) components of the illusory patching direction found in MLP8: the nullspace (irrelevant) component (left), and the rowspace (dormant) component (right).}
    \label{fig:mlp8-projections-by-pattern}
\end{figure}

\section{Finding and Validating a Faithful Direction Mediating Name Position in the IOI Task}
\label{sec:ground-truth}

As a counterpoint to the illusion, in this section we demonstrate a success case
for subspace activation patching, as well as for DAS as a method for finding
meaningful subspaces, by revisiting the directions $\mathbf{v}_{\text{grad}}$
and $\mathbf{v}_{\text{resid}}$ defined in Subsection \ref{sub:ioi-methodology},
and arguing they are faithful to the model's computation to a high degree. 

Specifically, we subject these directions to the same tests we used for the
illusory direction $\mathbf{v}_{\text{MLP}}$, and arrive at significantly
different results. Through these and additional validations, we demonstrate that
these directions possess the necessary and sufficient properties of a successful
activation patch -- being both correlated with input variation and causal for
the targeted behavior -- in a way that does not rely on a large causally
disconnected component for the effect.

\subsection{Ruling Out the Illusion}
\label{sub:ruling-out-illusion}
Intuitively, the main property of $\mathbf{v}_{\text{resid}}$ we want to
establish in order to rule out the illusion is that it is simultaneously (1) strongly
discriminating between ABB and BAB examples (i.e., projections of activations on
$\mathbf{v}_{\text{resid}}$ separate these two classes well), and (2) is highly aligned with the
direction $\mathbf{v}_{\text{grad}}$ that downstream model components read this
information from in order to put attention on the \textbf{IO} name and not the
\textbf{S} name. To this end, we define a notion of
causally disconnected component for $\mathbf{v}_{\text{resid}}$, and we show
that removing it does not diminish the effect of the patch; we further show that
$\mathbf{v}_{\text{resid}}$ and $\mathbf{v}_{\text{grad}}$ are quite similar,
and that $\mathbf{v}_{\text{grad}}$ is also strongly activated by position information.

\paragraph{What is the causally (dis)connected subspace of the residual stream?}
While for an MLP layer it is clear that $\ker W_{out}$ is the subspace of
post-GELU activations which is causally disconnected from model outputs, the
residual stream after layer 8 has no subspace which is simultaneously in the
kernel of all downstream model components, or even of all the query matrices of
downstream attention heads (we checked this empirically).

To overcome this, recall from Section \ref{sec:ioi} that
\citet{wang2022interpretability} argued that the three Name Mover
heads in layers 9 and 10 are mostly responsible for the IOI task
specifically. Let
$W_Q^{\text{NM}}\in\mathbb{R}^{(3\times64)\times768}=\mathbb{R}^{192\times768}$
be the stacked query matrices of the three name mover heads (which are full-rank). We use the
192-dimensional subspace $\l(\ker W_Q^{NM}\r)^\top$ as a proxy for the causally
relevant subspace\footnote{Note that, while technically all attention heads in
layers 9, 10 and 11 read information from the residual stream after layer 8,
using their collective query matrices instead of just the name movers would lead
to a vacuous concept of a causally relevant subspace, because their collective
query matrices' rowspaces span the entire residual stream. As a rough baseline,
a random isotropic unit vector would have on average
$\sqrt{\frac{192}{768}}=\frac{1}{2}$ of its $\ell_2$-norm in $\l(\ker
W_Q^{NM}\r)^\top$. We also note that this is on par with the decomposition of
$\mathbf{v}_{\text{MLP}}$ we considered in Section \ref{sec:ioi}, where $\ker
W_{out}$ occupied $3/4$ of the dimension of the full space of activations.} of
the residual stream at the last token position after layer 8. 

To further narrow down the precise subspace read by the Name Mover heads, we
also compare $\mathbf{v}_{\text{resid}}$ with the gradient
$\mathbf{v}_{\text{grad}}$, which is the direction that the Name Mover's
attention on the \textbf{IO} vs. \textbf{S} name is most sensitive to.

\paragraph{Results.}
In Table \ref{tab:ioi-metrics}, we report the fractional logit difference
decrease (FLDD, recall Subsection \ref{sub:ioi-metrics}) and interchange
accuracy when patching $\mathbf{v}_{\text{resid}}$ and
$\mathbf{v}_{\text{grad}}$, as well as their components along $\ker
W_Q^{\text{NM}}$ (denoted `nullspace') and its orthogonal complement $\l(\ker
W_Q^{NM}\r)^\top$ (denoted `rowspace'). We observe that the non-nullspace
metrics are broadly similar\footnote{We observe that the
$\mathbf{v}_{\text{resid}}$ patch is more successful than the
$\mathbf{v}_{\text{grad}}$ patch; we conjecture that this is due to
$\mathbf{v}_{\text{resid}}$ being able to contribute to all downstream attention
heads, not just the three name-mover heads. In particular, the original IOI
paper \citet{wang2022interpretability} found that there is another class of
heads, Backup Name Movers, which act somewhat like Name Mover heads.}; in
particular, removing the causally disconnected component of
$\mathbf{v}_{\text{resid}}$ does not significantly diminish the effect of the
patch in terms of the logit difference metrics (as it does for
$\mathbf{v}_{\text{MLP}}$). 

Furthermore, we find that the cosine similarity between
$\mathbf{v}_{\text{resid}}$ and $\mathbf{v}_{\text{grad}}$ is $\approx0.78$,
which is significant (the baseline for random vectors in the residual stream is
on the order of $\frac{1}{\sqrt{768}}\approx0.03$). Both
$\mathbf{v}_{\text{resid}}$ and $\mathbf{v}_{\text{grad}}$ have a significant
fraction of their norm in the $\l(\ker W_Q^{NM}\r)^\top$ subspace (91\% and
98\%, respectively). 
These results suggest that this $\mathbf{v}_{\text{resid}}$ and
$\mathbf{v}_{\text{grad}}$ are highly similar directions, and that they're both
strongly causally connected to the model's output. 

In Figure \ref{fig:resid-projections-by-pattern}, we also find that both
directions are strongly discriminating between ABB and BAB examples.

\paragraph{Discussion.}
A key observation about the residual stream at the last token is that it is a
full bottleneck for the model's computation over the last token position: all
updates to that position are added to it. This provides another viewpoint on why
the successful patches we find don't rely on a dormant subspace: there can be no
earlier model component that activates the directions we find in a way that
skips over the patch via a residual connection (unlike for
$\mathbf{v}_{\text{MLP}}$). Indeed, in Figure \ref{fig:which-heads-to-resid} in
Appendix \ref{app:ground-truth} we show that the $\mathbf{v}_{\text{resid}}$
direction gets written to by the S-Inhibition heads.

\subsection{Additional Validations}
\label{subsection:}
In Appendix \ref{app:ground-truth}, we further validate these directions'
faithfulness to the computation of the IOI circuit from
\citet{wang2019structured} by finding the model components that write to them
and studying how they generalize on the pre-training distribution (OpenWebText);
representative samples annotated with attention scores are shown in Figures
\ref{fig:openwebtext10}, \ref{fig:openwebtext2}, \ref{fig:openwebtext8} in
Appendix \ref{app:ground-truth}.

\begin{figure}[ht]
    \centering
    \begin{minipage}[b]{0.45\textwidth}
    \begin{tikzpicture}
        \node[anchor=south west,inner sep=0] (image) at (0,0) {\includegraphics[width=1.0\textwidth]{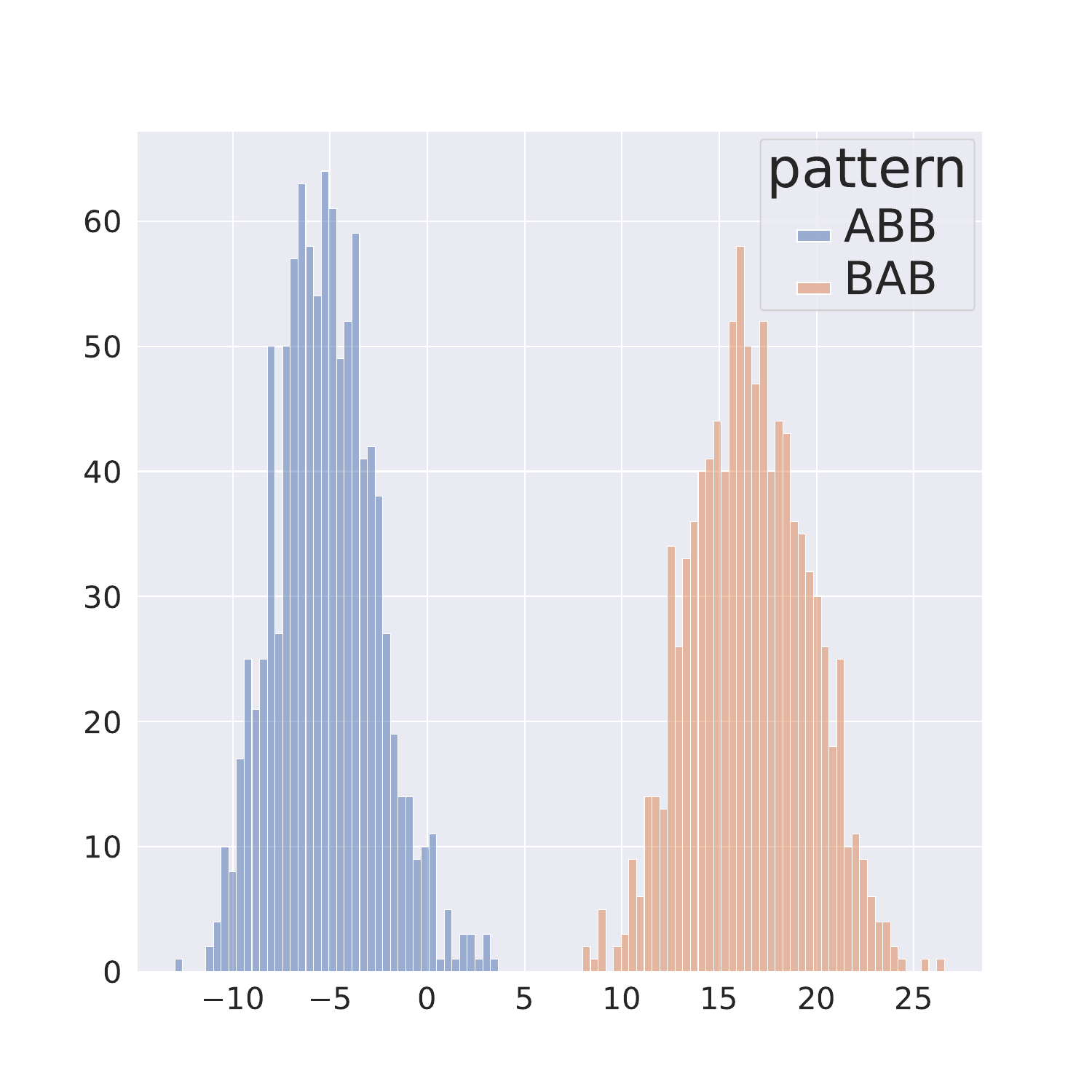}};
        \begin{scope}[x={(image.south east)},y={(image.north west)}]
            % Add labels
            \node [anchor=east, rotate=90] at (-0.05,0.8) {Number of examples};
            \node [anchor=north] at (0.5,0.0) {Activation projection};
        \end{scope}
    \end{tikzpicture}
    \end{minipage}
    \begin{minipage}[b]{0.45\textwidth}
    \begin{tikzpicture}
        \node[anchor=south west,inner sep=0] (image) at (0,0) {\includegraphics[width=1.0\textwidth]{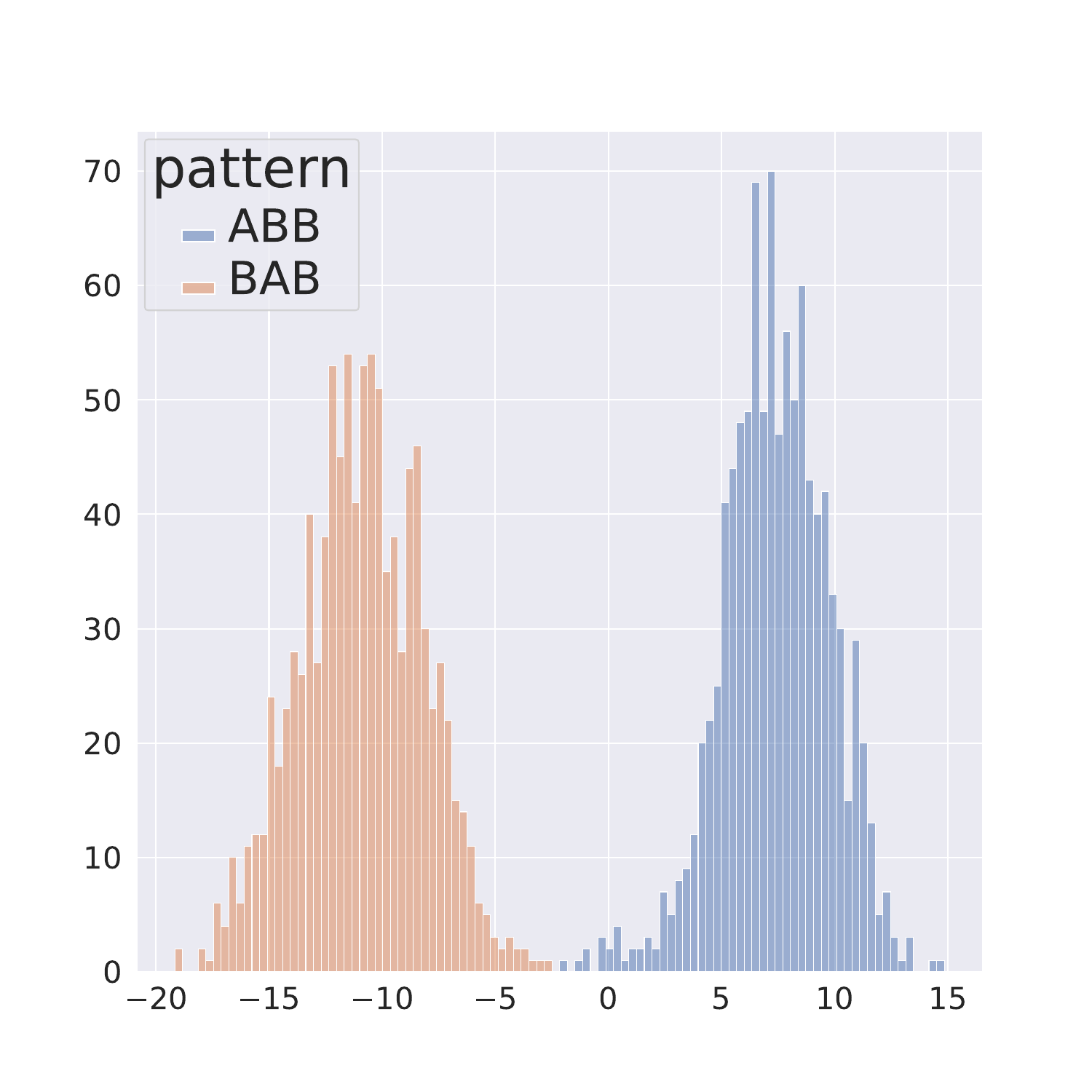}};
        \begin{scope}[x={(image.south east)},y={(image.north west)}]
            % Add labels
            \node [anchor=north] at (0.5,0.0) {Activation projection};
        \end{scope}
    \end{tikzpicture}
    \end{minipage}
    
    \caption{Projections of dataset examples' activations in the residual stream
    after layer 8 onto the $\mathbf{v}_{\text{resid}}$ direction found by DAS
    (left) and the $\mathbf{v}_{\text{grad}}$ direction (right) which is the
    gradient for
    difference in attention of the name mover heads to the two names.}
    \label{fig:resid-projections-by-pattern}
\end{figure}

\section{Factual Recall}
\label{sec:facts}

This section has two major goals. One is to show that the interpretability illusion can
also be exhibited for the factual recall capability of language models, a much
broader setting than the IOI task. The other is to exhibit in practice an
approximate equivalence between two seemingly different interventions: rank-1
weight editing and interventions on 1-dimensional subspaces of activations. We do this in several complementary ways:
\begin{enumerate}
\item we show that DAS \citep{geiger2023finding} finds illusory 1-dimensional
subspace patches that change factual recall (e.g., to make a model complete `The
Eiffel Tower is in' with ` Rome' instead of ` Paris'). The patches found
strongly update the model's confidence towards the false completion, but the
effect disappears when the causally disconnected component of the subspace is
removed, or when the entire MLP activation containing the subspace is patched
instead. 
\item we show that for a wide range of layers in the middle of the model (GPT2-XL
\citet{radford2019language}), rank-1 fact editing using the ROME method
\citep{meng2022locating} is approximately equivalent to a 1-dimensional subspace
intervention that generalizes activation patching. The same arguments from
Sections \ref{sec:methods} and \ref{sec:discussion} apply to this intervention,
suggesting that it is likely to work successfully in a wide range of MLP layers,
regardless of the role of these MLP layers for factual recall.
\item Finally, we show that the existence of the illusory patches from 1. implies the existence of rank-1 weight edits which have identical effect at the token being
patched, and comparable overall effect on the model. This provides the other
direction of an approximate equivalence between 1-dimensional subspace
interventions and rank-1 editing, which may be of independent interest.
\end{enumerate}

In particular, these findings provide a mechanistic explanation for the
observation of prior work \citep{hase2023does} that ROME works even in layers
where the fact is supposedly not stored. As we discuss in Section
\ref{sec:discussion}, we expect that in practice all MLP layers between two
model components communicating some feature are likely to contain an illusory
subspace -- and, by virtue of the approximate equivalence we demonstrate,
rank-one fact edits will exist in these MLP layers, regardless of whether they
are responsible for recalling the fact being edited. 

\subsection{Finding Illusory 1-Dimensional Patches for Factual Recall}   
\label{sub:fact-patching}

Given a fact expressed as a subject-relation-object triple $(s, r, o)$ (e.g.,
$s=\text{`Eiffel Tower'}, r=\text{`is in'}, o=\text{`Paris'}$), we say that a
model $M$ \emph{recalls} the fact $(s, r, o)$ if $M$ completes a prompt
expressing just the subject-relation pair $(s, r)$ (e.g., `The Eiffel Tower is
in') with the object $o$ (`Paris'). 

Let us be given two facts $(s, r, o)$ and $(s', r, o')$ for the same relation that a model recalls correctly, with corresponding factual prompts $A$ expressing $(s, r)$ and $B$ expressing $(s', r)$ (e.g., $r=\text{`is in'}, A=\text{`The Eiffel Tower is in'}, B=\text{`The Colosseum is in'}$). In this subsection, we patch from $B$ into $A$, with the goal of changing the model's output from $o$ to $o'$. Implementation details are given in Appendix \ref{app:fact-patching-details}.

Results are shown in figure \ref{fig:fact-patching}.  We find a stronger version
of the same qualitative phenomena as for the IOI illusory direction from Section
\ref{sec:ioi}: (i) the directions we find have a strong causal effect
(successfully changing $o$ to $o'$), but (ii) this effect disappears when we
instead patch along the subspace spanned by the component orthogonal to $\ker W_{out}$, and (iii) patching the
entire MLP activation instead has a negligible effect on the difference in
logits between the correct and incorrect objects. Further experiments
confirming the illusion are in Appendix \ref{app:additional-fact-patching}.

We conclude that it is possible to make a model output a different object for a
given fact by exploiting a 1-dimensional subspace patch that activates a dormant
circuit in the model; in particular, using such a patch to localize the fact in
the model is prone to interpretability illusions. Next, we turn to a more
sophisticated intervetion that has been used to edit a fact in a more holistic
way, so that related facts update accordingly while the model otherwise stays
mostly the same.

\begin{figure}
    \centering
    \begin{tikzpicture}
        \node [anchor=east, rotate=90] at (-0.15,4.2) {Fraction of facts changed};
        \node [anchor=north] at (3.5,0.0) {Intervention layer};
        \node[anchor=south west,inner sep=0] (image) at (0,0) {\includegraphics[width=0.5\textwidth]{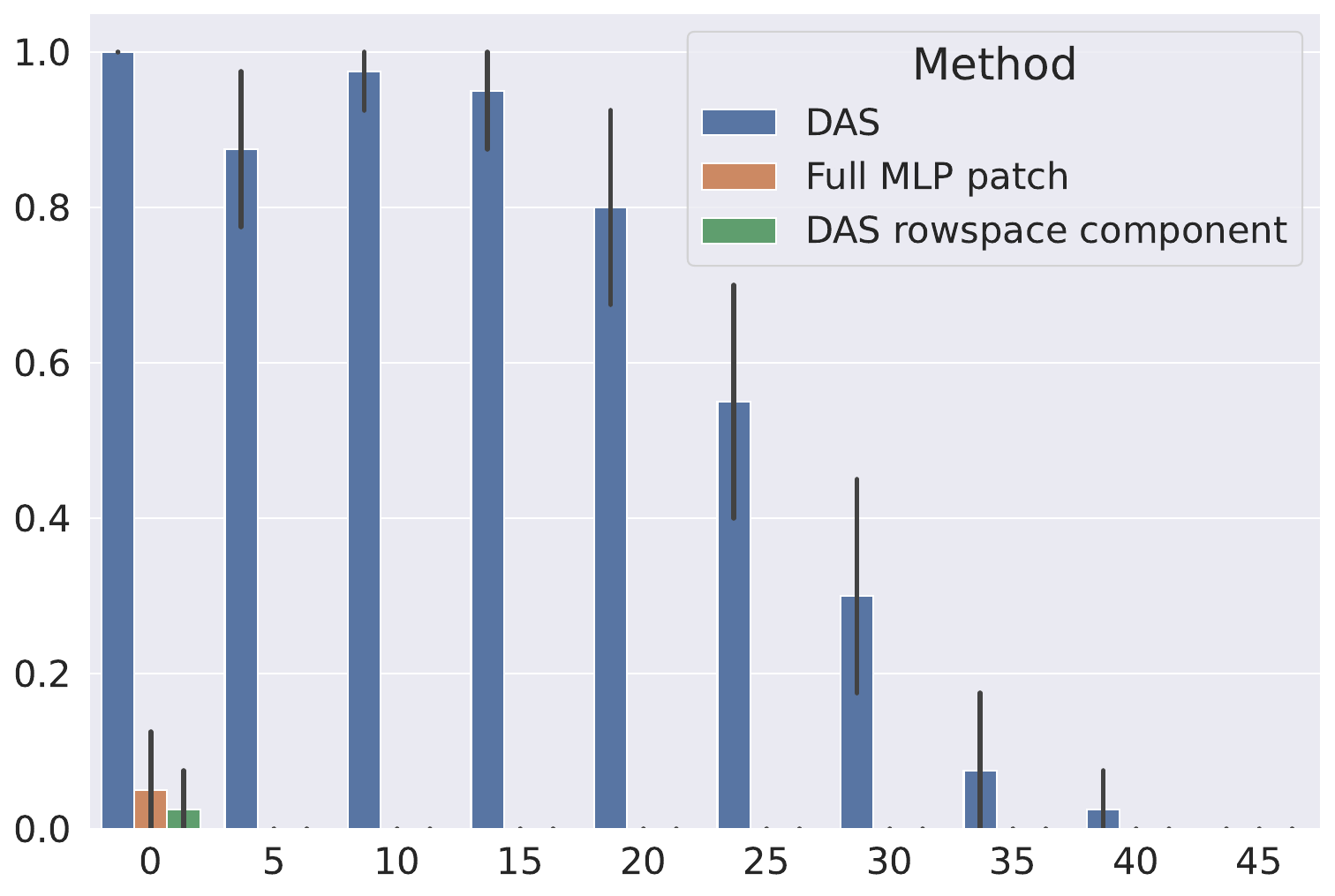}};
    \end{tikzpicture}
    \caption{Fraction of successful fact patches under three interventions: patching along the direction found by DAS (blue), patching the component of the DAS direction in the rowspace of $W_{out}$ (green), and patching the entire hidden MLP activation (orange).}
    \label{fig:fact-patching}
\end{figure}

\subsection{Background on ROME}
\label{subsection:}
\citet{meng2022locating} propose an intervention to overwrite a fact $(s, r, o)$
with another fact $(s, r, o')$ while minimally
changing the model otherwise. The intervention is a \emph{rank-1 weight edit},
which updates the down-projection $W_{out}$ of a chosen MLP layer to become
$W_{out}'=W_{out} + \mathbf{a}\mathbf{b}^\top$ for some
$\mathbf{a}\in\mathbb{R}^{d_{\text{resid}}},
\mathbf{b}\in\mathbb{R}^{d_{\text{MLP}}}$. The edit takes a `key' vector $\mathbf{k}\in\mathbb{R}^{d_{\text{MLP}}}$ representing
the subject (e.g., an average of its last-token MLP post-$\operatorname{gelu}$
activations over many prompts containing it) and a `value' vector
$\mathbf{v}\in\mathbb{R}^{d_{\text{resid}}}$ which, when output by the MLP
layer, will cause the model to predict the new object $o'$ for the factual
prompt (together with some other conditions incentivizing the model to not
change much otherwise).

Importantly, we demonstrate that ROME can be formulated as an optimization
problem with a natural objective, and this objective allows us to compare it to
related interventions. Namely, the vectors $\mathbf{a},\mathbf{b}$ are the solution to
\begin{align}
\label{eq:rome-opt-objective}
    \min_{\mathbf{a}, \mathbf{b}}
    \operatorname{trace}\l(\operatorname{Cov}_{\mathbf{x}\sim \mathcal{N}\l(0, \Sigma\r)}\left[\mathbf{a}\mathbf{b}^\top \mathbf{x}\right]\r) \quad\text{subject to}\quad W_{out}'\mathbf{k} = \mathbf{v}.
\end{align}
where $\operatorname{Cov}\left[\mathbf{r}\right] = \mathbb{E}\left[\l(\mathbf{r}
- \boldsymbol{\mu}\r)\l(\mathbf{r} - \boldsymbol{\mu}\r)^\top\right]$ denotes the covariance
matrix of a random vector $\mathbf{r}$ with mean $\boldsymbol{\mu}$, and
$\Sigma\succeq0$ is an empirical (uncentered) covariance matrix for the MLP
activations (proof in Appendix \ref{app:rome-as-opt}). In words, the ROME update
is the update that alters $W_{out}$ so it outputs $\mathbf{v}$ on input
$\mathbf{k}$, and minimizes the total variance of the extra contribution of the
update in the output of the MLP layer under the assumption that the
pre-$W_{out}$ activations are normally distributed with mean zero and covariance
$\Sigma\succeq0$.

\subsection{Rank-1 Fact Edits Imply Approximately Equivalent 1-Dimensional Subspace Interventions}
\label{sub:from-rome-to-patch}
Comparing the effect of a rank-1 edit to the MLP layer's output with equation
\ref{eq:patched-mlp-output-formula} expressing the effect of patching on the
MLP's outputs, we see that the two are quite similar.
This leads to a natural question: given a rank-1 weight edit $W_{out}' = W_{out}
+ \mathbf{ab}^\top$ such as ROME, can we find a 1-dimensional activation patch that has
the same contribution to the MLP's output for any MLP activation $\mathbf{x}$? 

\paragraph{Motivation and details.} 
As it turns out, finding a patch that has the same effect as a rank-1 edit is
not feasible in practice. For an activation $\mathbf{x}$, the extra contribution to the
MLP's output due to a rank-1 edit is $\mathbf{\left(b^\top x\right)a}$, whereas the extra
contribution of a 1-dimensional patch from activation $\mathbf{x}'$ is $\mathbf{\left(v^\top x' -
v^\top x\right)}W \mathbf{v}$, where crucially $\|\mathbf{v}\|_2=1$. In particular, the vectors $\mathbf{a},
\mathbf{b}$ are not norm-constrained, unlike $\mathbf{v}$. This restricts the magnitude of the
contribution of a patch, and we find this matters in practice. 

To overcome this, we consider a closely related subspace intervention,
\begin{align*}
    \mathbf{x}_{\text{intervention}} = \mathbf{x} + \left(\mathbf{v}^\top \mathbf{0} - \mathbf{v}^\top \mathbf{x}\right)\mathbf{v} = \mathbf{x} - \left(\mathbf{v}^\top \mathbf{x}\right)\mathbf{v}
\end{align*}
where $\mathbf{v}$ is no longer restricted to be unit norm, and $\mathbf{x}'$ is chosen to be $\mathbf{0}$
to match the expectation of the rank-1 edit's contribution (see Appendix
\ref{app:from-rank1-to-subspace} for details). This intervention bears many
similarities to subspace patching; in particular, this intervention leaves the
projections on all directions orthogonal to $\mathbf{v}$ the same, and the intuitions
about the illusion from Sections \ref{sec:methods} and \ref{sec:prevalent} also
apply to this intervention. We also remark that, at the same time, this
intervention is exactly equivalent to the rank-1 edit $W_{out}'' = W_{out} +
W_{out}\mathbf{v} \left(-\mathbf{v}\right)^\top$ in terms of contribution to the
MLP output.

It turns out that this more general intervention can often approximate the ROME
intervention well. Specifically, given a rank-1 edit $W_{out}+\mathbf{a} \mathbf{b}^\top$, we can
treat the problem probabilistically over activations $\mathbf{x}\sim \mathcal{N}\left(0,
\Sigma\right)$ like done in \citet{meng2022locating}, and ask for a direction $\mathbf{v}$
with the following properties:
\begin{itemize}
\item the expected value of both interventions is the same;
\item the resulting extra contribution $\left(-\mathbf{v}^\top\mathbf{x}\right)W_{out}\mathbf{v}$ to the MLP's output
points in the same direction as the extra contribution $\left(\mathbf{b}^\top\mathbf{x}\right)\mathbf{a}$ of the
rank-1 edit;
\item the total variance
$\operatorname{trace}\left(\operatorname{Cov}\left[\left(-\mathbf{v}^\top\mathbf{x}\right)W_{out}\mathbf{v} - \left(\mathbf{b}^\top\mathbf{x}\right)\mathbf{a}\right]\right)$ of the
difference of these two contributions is minimized over $\mathbf{x}\sim \mathcal{N}\left(0,
\Sigma\right)$.
\end{itemize}
Details are given in Appendix \ref{app:from-rank1-to-subspace}. The important
takeaway is that the solution $\mathbf{v}$ has the form
\begin{align*}
    \mathbf{v} = \alpha W_{out}^+\mathbf{a} + \mathbf{u} \quad \text{where} \quad \mathbf{u}\in \ker W_{out}
\end{align*}
for a constant $\alpha\geq0$ that is optimized. In particular, $W_{out}\mathbf{v}$ points in
the direction $\mathbf{a}$, and the component $\mathbf{u}$ (which is causally disconnected) is a
'free variable' that is essentially optimized to bring $\mathbf{v}^\top\mathbf{x}$ close to
$-\mathbf{b}^\top\mathbf{x} / \alpha$ (subject to accounting for $\Sigma$).

\paragraph{Metrics and evaluation.}
We apply this method to find subspace interventions corresponding to edits
extracted from the \textsc{CounterFact} dataset \citep{meng2022locating}; see
Appendix \ref{app:fact-patching-details} for details. Specifically, we run ROME
for all the edits, and we find the approximate subspace intervention (defined by
a vector $\mathbf{v}$) corresponding to each ROME edit. To compare the
interventions, we consider the following metrics:

\subparagraph{Rewrite score.}
Defined in \citet{hase2023does} (and a closely related metric is optimized by
\citet{meng2022locating}), the rewrite score is a relative measure of how well a
change to the model (ROME or our subspace intervention) increases the
probability of the false target $o'$ we are trying to substitute for $o$.
Specifically, if $p_{\text{clean}}\l(o\r)$ is the probability assigned by the
model to output $o$ under normal operation, and $p_{\text{intervention}}\l(o\r)$
is the probability assigned when the intervention is applied, the rewrite score
is
\begin{align*}
    \frac{p_{\text{intervention}}\l(o'\r) - p_{\text{clean}}\l(o'\r)}{1-p_{\text{clean}}\l(o'\r)}\in (-\infty, 1].
\end{align*}
with a value of $1$ indicating the model assigns probability $1$ to $o'$ after
the intervention. We measure the rewrite score for the ROME intervention, our
approximation of it, and also the corresponding subspace intervention with the
$\ker W_{out}$ component of $\mathbf{v}$ removed, in analogy with how we examine
the subspace patches in Sections \ref{sec:ioi} and \ref{sub:fact-patching}. That
is, if $\mathbf{v}_{\text{null}}$ is the orthogonal projection of $\mathbf{v}$
on $\ker W_{out}$ and $\mathbf{v}_{\text{rowsp}}=\mathbf{v}-\mathbf{v}_{\text{null}}$, we apply the intervention
\begin{align*}
    \mathbf{x}_{\text{rowspace intervention}} = \mathbf{x} - \left(\mathbf{v}_{\text{rowsp}}^\top \mathbf{x}\right)\mathbf{v}_{\text{rowsp}}.
\end{align*}
Results comparing ROME and the subspace intervention we use to approximate it
are shown in Figure \ref{fig:rank1-to-subspace-rewrite-scores}. When using the
rowspace intervention, all rewrite scores are less than $10^{-3}$, indicating a
strong reliance on the nullspace component.

\subparagraph{Cosine similarity of $\mathbf{v}$ and $\mathbf{b}$.}

Our intervention contributes $-\left(\mathbf{v}^\top \mathbf{x}\right)W\mathbf{v}$, and the ROME edit contributes
$\left(\mathbf{b}^\top \mathbf{x}\right)\mathbf{a}$. Note that, by construction, the cosine similarity of $W\mathbf{v}$
with $\mathbf{a}$ is $1$. So, the cosine similarity of $\mathbf{v}$ and $\mathbf{b}$ measures how well the
direction we are projecting the activation $\mathbf{x}$ on matches that from the ROME
edit. Results are shown in Figure \ref{fig:rank1-to-subspace-metrics} (left); in
a range of layers we observe cosine similarity significantly close to $1$.

\subparagraph{Overall change to the model relative to ROME.}
This is the total variance introduced by this intervention as a fraction of the
total variance introduced by the corresponding ROME edit. It measures the extent
to which the subspace intervention damages the model overall, following our
formulation of ROME as an optimization problem (see Appendix
\ref{app:rome-as-opt}). Results are shown in Figure
\ref{fig:rank1-to-subspace-metrics} (right). Note that this metric is a ratio of
variances; a ratio of standard deviations can be obtained by taking the square
root.

In conclusion, we observe that in layers 20-35 inclusive, the two interventions
are very similar according to all metrics considered. 

\paragraph{What is the interpretability illusion implied by this?}
An important difference between the IOI case study from Section \ref{sec:ioi}
and the factual recall results from the current section is that, while
activating a dormant circuit is contrary to activation patching's
interpretability  fact editing is,
by definition, allowed to alter the model. In this sense, activating a dormant
circuit via a rank-1 edit should no longer be considered a sign of spuriosity.

Instead, we argue that the interpretability illusion is to assume that the
success of ROME means that the fact is stored in the layer being edited. This
was already observed in \citet{hase2023does}. Our work provides a 
mechanistic explanation for this observation.

We also note that we have evaluated the success of ROME and our
approximately-equivalent subspace intervention only using the rewrite score
metric and the measure of total variance implicit in the ROME algorithm.
Ideally, there would be other validations of a fact edit that test the behavior
of the intervened-upon model on other facts that should be changed by the edit.

%%%%%%%%%%%%%%%%%%%%%%%%%%%%%%%%%%%%%%%%%%%%%%%%%%%%%%%%%%%%%%%%%%%%%%%%%%%%%%%%
% rewrite score scatterplots
%%%%%%%%%%%%%%%%%%%%%%%%%%%%%%%%%%%%%%%%%%%%%%%%%%%%%%%%%%%%%%%%%%%%%%%%%%%%%%%%
\begin{figure}[ht]
    \centering
    \begin{minipage}[b]{0.45\textwidth}
    \begin{tikzpicture}
        \node[anchor=south west,inner sep=0] (image) at (0,0)
        {\includegraphics[width=1.0\textwidth]{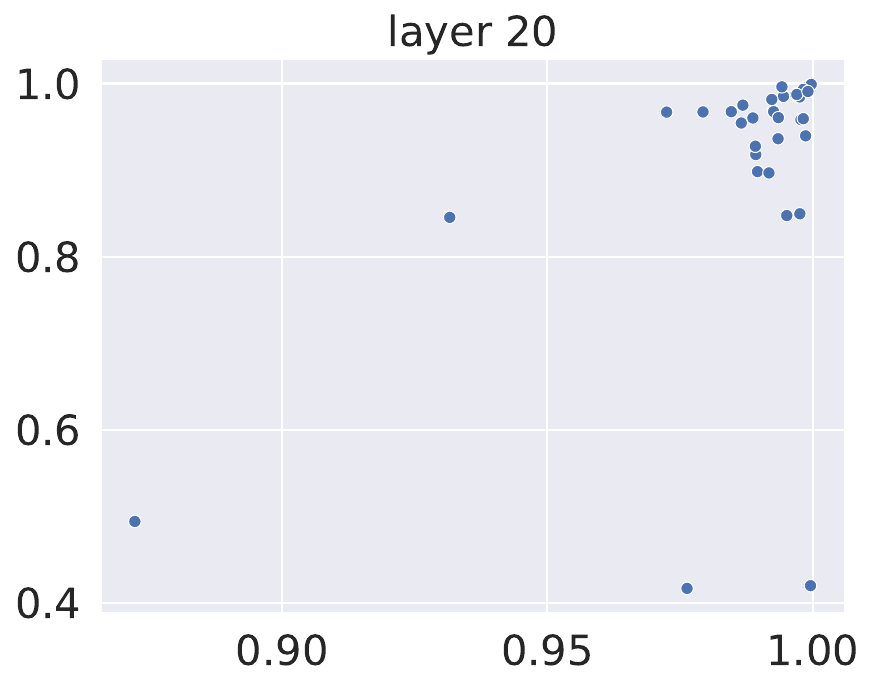}};
        \begin{scope}[x={(image.south east)},y={(image.north west)}]
            % Add labels
            \node [anchor=east, rotate=90] at (-0.05,0.8) {Rewrite score (ours)};
        \end{scope}
    \end{tikzpicture}
    \end{minipage}
    \hfill
    \begin{minipage}[b]{0.45\textwidth}
    \begin{tikzpicture}
        \node[anchor=south west,inner sep=0] (image) at (0,0)
        {\includegraphics[width=1.0\textwidth]{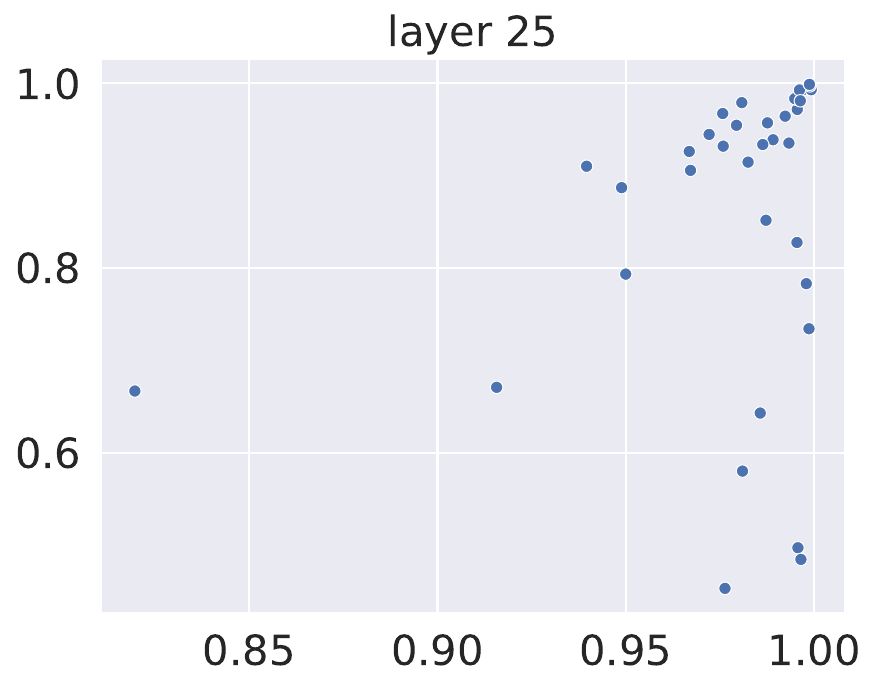}};
        \begin{scope}[x={(image.south east)},y={(image.north west)}]
        \end{scope}
    \end{tikzpicture}
    \end{minipage}

    \begin{minipage}[b]{0.45\textwidth}
    \begin{tikzpicture}
        \node[anchor=south west,inner sep=0] (image) at (0,0)
        {\includegraphics[width=1.0\textwidth]{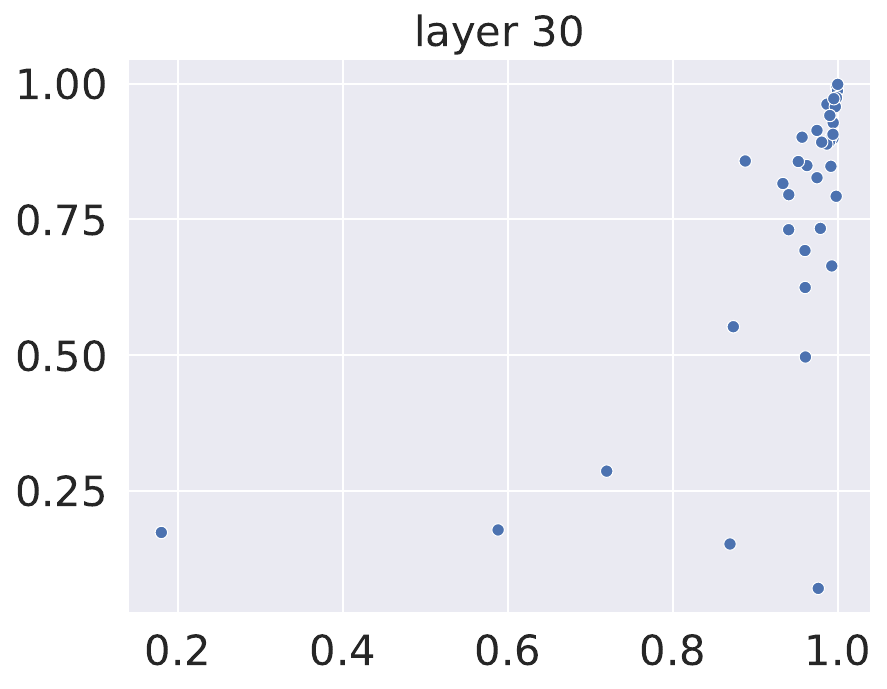}};
        \begin{scope}[x={(image.south east)},y={(image.north west)}]
            % Add labels
            \node [anchor=east, rotate=90] at (-0.05,0.8) {Rewrite score (ours)};
            \node [anchor=north] at (0.5,0.0) {Rewrite score (ROME)};
        \end{scope}
    \end{tikzpicture}
    \end{minipage}
    \hfill
    \begin{minipage}[b]{0.45\textwidth}
    \begin{tikzpicture}
        \node[anchor=south west,inner sep=0] (image) at (0,0)
        {\includegraphics[width=1.0\textwidth]{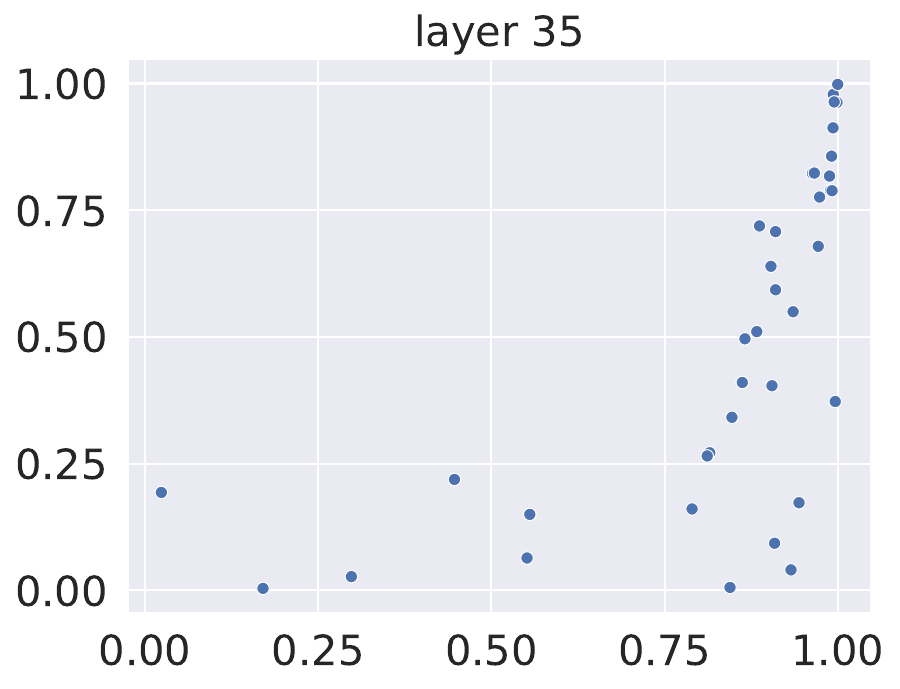}};
        \begin{scope}[x={(image.south east)},y={(image.north west)}]
            \node [anchor=north] at (0.5,0.0) {Rewrite score (ROME)};
        \end{scope}
    \end{tikzpicture}
    \end{minipage}
    \caption{Rewrite score comparison between ROME (x-axis) and our approximation to it (y-axis) via a subspace intervention for layers 20, 25, 30, 35.}
    \label{fig:rank1-to-subspace-rewrite-scores}
\end{figure}

\begin{figure}[ht]
    \centering
    \begin{minipage}[b]{0.42\textwidth}
    \begin{tikzpicture}
        \node[anchor=south west,inner sep=0] (image) at (0,0) {\includegraphics[width=1.0\textwidth]{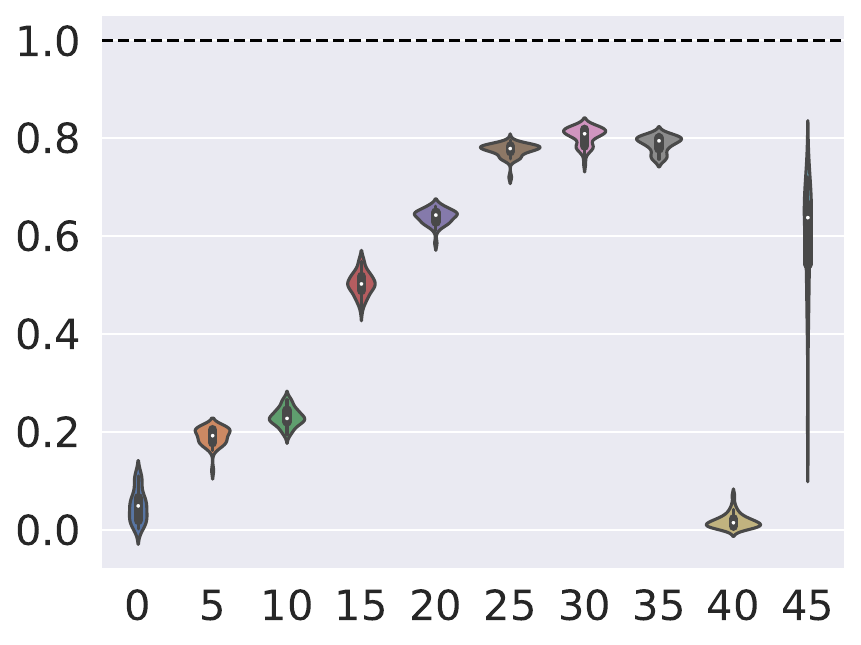}};
        \begin{scope}[x={(image.south east)},y={(image.north west)}]
            % Add labels
            \node [anchor=east, rotate=90] at (-0.05,0.8) {Cosine similarity};
            \node [anchor=north] at (0.5,0.0) {Intervention layer};
        \end{scope}
    \end{tikzpicture}
    \end{minipage}
    \hfill
    \begin{minipage}[b]{0.42\textwidth}
    \begin{tikzpicture}
        \node[anchor=south west,inner sep=0] (image) at (0,0)
        {\includegraphics[width=1.0\textwidth]{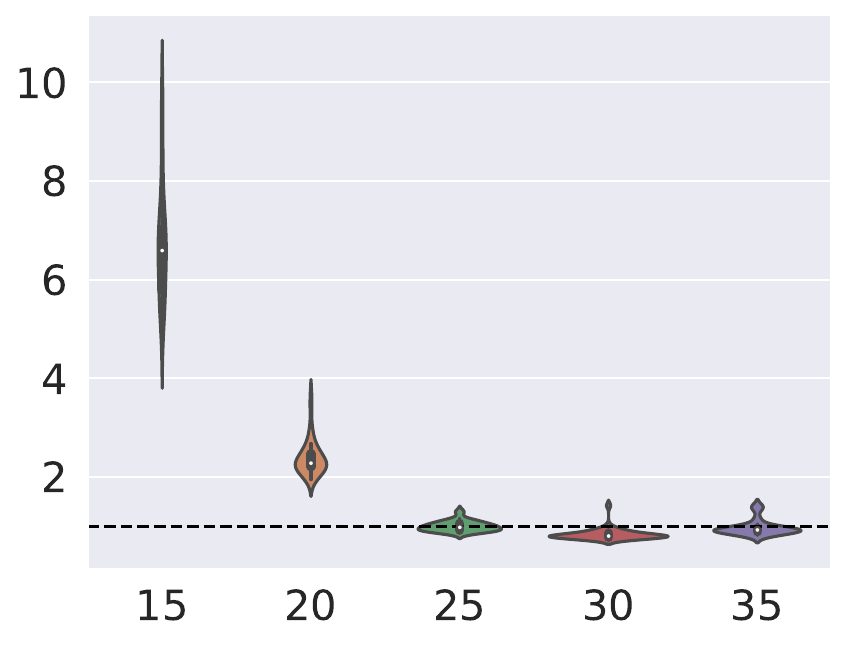}};
        \begin{scope}[x={(image.south east)},y={(image.north west)}]
            % Add labels
            \node [anchor=east, rotate=90] at (-0.05,1.0) {Variance ratio (ours/ROME)};
            \node [anchor=north] at (0.5,0.0) {Intervention Layer};
        \end{scope}
    \end{tikzpicture}
    \end{minipage}
    
    \caption{Comparisons between ROME rank-1 edits and our approximation via a
    subspace intervention. Left: cosine similarity between the vector
    $\mathbf{v}$ defining the subspace we intervene on and the vector
    $\mathbf{b}$ from the ROME edit (dashed horizontal line is at $y=1$). Right: ratio of the total variance
    introduced by the subspace intervention to the total variance of the ROME
    intervention (x-axis scale is restricted to make the plot readable; dashed
    horizontal line is at $y=1$).}
    \label{fig:rank1-to-subspace-metrics}
\end{figure}

\subsection{1-Dimensional Fact Patches Imply Equivalent Rank-1 Fact Edits}
\label{sub:patch-implies-rome}
Finally, we show that the existence of an activation patch as in Subsection
\ref{sub:fact-patching} implies the existence of a rank-1 weight edit which has
the same contribution to the MLP's output at the token being patched, and
otherwise results in very similar model outputs as the activation patch. 

Intuitively, a `fact patch' as in Subsection \ref{sub:fact-patching} should have
a corresponding rank-1 edit with the same effect: the last subject token MLP
activation $\mathbf{u}_A$ for prompt A takes the role of $\mathbf{k}$, and the
patch modifies the MLP's output (making it $\mathbf{v}$) to change the model's
output to $o'$. We make this intuition formal in Appendix \ref{app:fact-lemma},
where we show that for each 1-dimensional activation patch between a pair of
examples in the post-$\operatorname{GELU}$ activations of an MLP layer, there is
a rank-1 model edit to $W_{out}$ that results in the same change to the MLP
layer's output at the token where we do the patching, and minimizes the variance
of the extra contribution in the sense of Equation \ref{eq:rome-opt-objective}.

While this shows that the patch implies a rank-1 edit with the same behavior
\emph{at the token where we perform the patch}, the rank-1 edit is applied \emph{permanently}
to the model, which means that it (unlike the activation patch) applies to
\emph{every} token. Thus, it is not a priori obvious whether the rank-1 edit
will still succeed in making the model predict $o'$ instead of $o$.
To this end, in Appendix \ref{app:rome-extra-exps}, we evaluate empirically how
using the rank-1 edit derived in Appendix \ref{app:fact-lemma} instead of the
activation patch changes model predictions, and we find negligible differences.

\section{Reasons to Expect the MLP-in-the-middle Illusion to be Prevalent}
\label{sec:prevalent}
We only exhibit our illusion empirically in two settings: the IOI task and
factual recall. However, we believe it is likely prevalent in practice. In this
section, we provide several theoretical, heuristic and empirical arguments in
support of this.

Specifically, we expect the illusion to be likely occur whenever we have an MLP $M$ which
is not used in the model's computation on a given task, but is between two
components $A$ and $B$ which \emph{are} used, and communicate by writing to /
reading from the direction $\mathbf{v}$ via the skip connections of the model.
This structure has been frequently observed in the mechanistic interpretability
literature \citep{lieberum2023does, wang2022interpretability, olsson2022context,
geva_key_value}: circuits contain components composing with each other separated
by multiple layers, and circuits have often been observed to be sparse, with
most components (including most MLP layers) not playing a significant role.

\subsection{Assumptions: a Simple Model of Linear Features in the Residual
Stream}
\label{sub:prevalent-assumptions}

The linear representation hypothesis suggests a natural way to formalize this
intuition. Namely, let's assume for simplicity that there is a binary feature
$C$ in the data, and the value of $C$ influences the model's behavior on a
task, by e.g. making some next-token predictions more likely than others. Concretely, there is
a residual stream direction $\mathbf{v}\in \mathbb{R}^{d_{resid}}$ that mediates
this effect: projections on $\mathbf{v}$ (at an appropriate token position)
linearly separate examples according to the value of $C$, and intervening on
this projection by setting it to e.g. the mode of the opposite class has the
same effect on model outputs as changing the value of $C$ in the input itself.
Furthermore, we assume that this direction has this property in all residual
stream layers between some two layers $a < b$. 

We note that these assumptions can be realized `in the wild': the highly similar directions $\mathbf{v}_{\text{grad}}$
and $\mathbf{v}_{\text{resid}}$ discussed in Section \ref{sec:ground-truth} are
both examples of such directions $\mathbf{v}$ for the binary concept of whether the
\textbf{IO} name comes first or second in the sentence, as we argued empirically.

\subsection{Overview of Argument}
\label{subsection:}

The key hypothesis is that, given the setup from the previous Subsection
\ref{sub:prevalent-assumptions}, the post-nonlinearity activations of every MLP
layer between layers $a$ and $b$ are likely to contain a 1-dimensional subspace
whose patching will have the same effect (possibly with a smaller magnitude) on
model outputs as changing the value of $C$ in the input. For this, it is
sufficient to have two kinds of directions in the MLP's activation space:
\begin{itemize}
\item a `causal' direction, such that changing the projection of an activation
along this direction results in the
expected change of model outputs. Such a direction will exist simply because
$W_{out}$ is a full-rank matrix, so we can simply pick $W_{out}^+ \mathbf{v}$.
We give an empirically-supported theoretical argument for this in Appendix \ref{subsection:dormant-evidence}.
\item a `correlated' direction that linearly discriminates between the values of
$C$: such a direction will exist because the pre-nonlinearity activations (which
are an approximately linear image of the residual stream) will linearly
discriminate $C$, and the transformation $\mathbf{x}\mapsto
\operatorname{gelu}(\mathbf{x}) \mapsto \operatorname{proj}_{\ker W_{out}}
\operatorname{gelu}\l(\mathbf{x}\r)$ approximately preserves linear
separability. We give an empirically-supported theoretical argument for this in
Appendix \ref{subsection:disconnected-evidence}.
\end{itemize}

\section{Discussion, Limitations, and Recommendations}
\label{sec:discussion}

Throughout this paper, we have seen that interventions on arbitrary linear
subspaces of model activations, such as subspace activation patching, can have
counterintuitive properties. In this section, we take a step back and provide a
more conceptual point of view on these findings, as well as concrete advice for
interpretability practitioners.

\paragraph{Why should this phenomenon be considered an illusion?}
One argument for the illusory nature of the subspaces we find is the reliance on
a large causally disconnected component (in all our examples, this component is
in the kernel of the down-projection $W_{out}$ of an MLP layer). In particular,
patching along only the causally-relevant component of the subspace (the one in
$\l(\ker W_{out}\r)^\perp$) destroys the effect of the subspace patch;
we find this a convincing reason to be suspicious of the explanatory
faithfulness of these subspaces. 

Beyond this argument, there are several more subtle considerations. For an
explanation to be `illusory', there has to be some notion of what the `true'
explanation is.  We admit that a definition of a `ground truth' mechanistic
explanation is conceptually challenging. In the absence of such a definition,
our claims rest on various observations about model's inner workings that we now
collect in one place and make more explicit. We believe these findings
collectively point to meaningful constraints on mechanistic explanations.

For example, the IOI circuit work of \citet{wang2022interpretability} finds
through various component-level interventions that the layer 8 MLP 
\emph{as a whole} does not contribute significantly to the model's ability to do
the IOI task. However, does this imply that there aren't individual subspaces of
the MLP layer that mediate the model's behavior on the IOI task? Not
necessarily: there could be, for example, two subspaces mediating the position
signal, but which have opposite contributions to the MLP's output that cancel
out. This is compatible with our model of the illusion from Section
\ref{sec:methods}: for example, we can form two `cancelling' 1-dimensional
subspaces by taking the sum and difference of the causally disconnected and
dormant directions in our model. From this point of view, our subspace
intervention decouples these (ordinarily coupled) subspaces by changing the
activation only along one of them. This is impossible for an intervention that
operates on entire model components.

Should we prefer the view under which the MLP layer simply does not participate
in any meaningful way in the IOI task, or the view under which it contains
subspaces that mediate information about the IOI task, but whose contributions
cancel out? Note that meaningful cancellation behavior between entire model
components has been observed to some extent in the mechanistic interpretability
literature, such as with negative heads \citep{wang2022interpretability},
anti-induction heads \citep{olsson2022context} and copy suppression heads
\citep{mcdougall2023copy}. Furthermore, it is not clear that, in general, a
component-level explanation should take precendence over subspace-level
explanations. So, a priori, we have a conundrum: two different kinds of
interventions arrive at conflicting interpretations.

Nevertheless, based on our experiments, we suggest that the view under which the
MLP layer contains meaningful subspaces that cancel out is the less likely
mechanistic explanation for several reasons. A first argument is that, as we
argue in Section \ref{sec:prevalent}, the existence of the illusory subspace
only relies on the existence of certain directions in the residual stream; the
MLP weights themselves don't play a role. In some sense, the illusory subspace
is a `necessity of linear algebra'. This is further reinforced by the fact that
we find illusory directions even when the MLP weights are replaced by random
matrices (see Appendix \ref{app:random-mlp}). A second argument is that features
that are individually strong, but whose contributions almost exactly cancel out,
seem unlikely to be prevalent. 

Finally, we again remark that circuits for specific tasks have been observed to
be sparse (recall Section \ref{sec:prevalent}). Our model of the illusion from
Section \ref{sec:methods} and the evidence from Section \ref{sec:prevalent}
suggest that any MLP layer between two circuit components using the residual
stream as a communication bottleneck for some feature will contain a subspace
that appears to mediate this feature. Thus, even if we cannot conclusively rule
out any given MLP layer on the path as not being meaningfully involved in the
computation, it would be quite surprising if always all of them are involved. So
we expect that at least some of these subspaces will be illusory.

\paragraph{The importance of correct model units.} A further implicit assumption
in our work is that model components are meaningful boundaries for mechanistic
explanation. As we illustrate in Appendix \ref{app:rotated-basis}, our toy
example of the illusion can be considered in a rotated basis, in which the
`illusory' direction appears `meaningful'. In a similar way, if we allow
ourselves to arbitrarily reparametrize spaces of activations by crossing the
boundaries between e.g. attention heads and MLP layers, calling the MLP subspace
`illusory' is much more tenuous. 

To respond to this criticism, we point to the many observations in the
mechanistic literature that different components (like heads and MLP layers)
perform qualitatively different functions in a model. For example, tasks
involving algorithmic manipulations of in-context text, such as the IOI task,
often rely predominantly on attention heads
\citep{wang2022interpretability,docstring}. On the other hand, MLP layers have
so far been implicated in tasks having to do with recalling bigrams and facts \citep{meng2022locating,gurnee2023finding}. On these grounds, mixing
activations between them is likely to lead to less parsimonious and less principled mechanistic explanations. 

\paragraph{Takeaways and recommendations.} As we have seen, optimization-based
methods using subspace activation patching, such as DAS
\citep{geiger2023finding} can find both faithful (Section
\ref{sec:ground-truth}) and illusory (Section \ref{sec:ioi}) features with
respect to the model's computation. We recommend running such methods in
activation bottlenecks, especially the residual stream, as well as using
validations beyond end-to-end evaluations to ascertain the precise role of such
features. 

\section{Acknowledgments}
\label{section:}

We are deeply indebted to Atticus Geiger for many useful discussions and helpful feedback, as well as help writing parts of the paper. We particularly appreciate his thoughtful pushback on framing and narrative, and commitment to rigour, without which this manuscript would be far poorer. We would also like to thank Christopher
Potts, Curt Tigges, Oskar Hollingsworth,  Tom Lieberum, Senthooran Rajamanoharan and Peli Grietzer for valuable
feedback and discussions. The authors used the open-source library
\verb|transformerlens| \citep{nanda2022transformerlens} to carry out the
experiments related to the IOI task. AM and GL did this work as part of the SERI
MATS independent research program, with support from AI Safety Support.

\section{Author Contributions}
\label{section:}

NN was the main supervisor for this project, and provided high level feedback on
experiments, prioritisation, and writing throughout. NN came up with the
original idea of the illusion and the conceptual example. AM came up with the
correspondence with factual recall, developed the factual recall results, and
ran the experiments for Sections \ref{sec:facts}, \ref{sec:prevalent} and part
of \ref{sec:ground-truth} (with the exception of experiments from Appendix
\ref{app:linearly-separable} ran by GL), and wrote the majority of the paper and
appendices, as well as the public version of the code for the paper. GL also ran the experiments for Sections \ref{sec:ioi} and
\ref{sec:ground-truth} with some guidance from AM, wrote and ran the experiments
for Appendices
\ref{app:random-mlp}, \ref{app:generalization-high-dim}, \ref{app:overfitting},
\ref{app:ground-truth} and \ref{app:linearly-separable}, and contributed to
Section \ref{sec:ground-truth} and various other sections. 
\bibliography{refs}

\begin{thebibliography}{63}
\providecommand{\natexlab}[1]{#1}
\providecommand{\url}[1]{\texttt{#1}}
\expandafter\ifx\csname urlstyle\endcsname\relax
  \providecommand{\doi}[1]{doi: #1}\else
  \providecommand{\doi}{doi: \begingroup \urlstyle{rm}\Url}\fi

\bibitem[Abdou et~al.(2021)Abdou, Kulmizev, Hershcovich, Frank, Pavlick, and
  S{\o}gaard]{abdou2021can}
Mostafa Abdou, Artur Kulmizev, Daniel Hershcovich, Stella Frank, Ellie Pavlick,
  and Anders S{\o}gaard.
\newblock Can language models encode perceptual structure without grounding? a
  case study in color.
\newblock \emph{arXiv preprint arXiv:2109.06129}, 2021.

\bibitem[Adebayo et~al.(2018)Adebayo, Gilmer, Muelly, Goodfellow, Hardt, and
  Kim]{sanity_checks_saliency}
Julius Adebayo, Justin Gilmer, Michael Muelly, Ian~J. Goodfellow, Moritz Hardt,
  and Been Kim.
\newblock Sanity checks for saliency maps.
\newblock In Samy Bengio, Hanna~M. Wallach, Hugo Larochelle, Kristen Grauman,
  Nicol0 Cesa-Bianchi, and Roman Garnett (eds.), \emph{Advances in Neural
  Information Processing Systems 31: Annual Conference on Neural Information
  Processing Systems 2018, NeurIPS 2018, December 3-8, 2018, Montreal, Canada},
  pp.\  9525--9536, 2018.
\newblock URL
  \url{https://proceedings.neurips.cc/paper/2018/hash/294a8ed24b1ad22ec2e7efea049b8737-Abstract.html}.

\bibitem[Arora et~al.(2018)Arora, Li, Liang, Ma, and Risteski]{arora2018linear}
Sanjeev Arora, Yuanzhi Li, Yingyu Liang, Tengyu Ma, and Andrej Risteski.
\newblock Linear algebraic structure of word senses, with applications to
  polysemy.
\newblock \emph{Transactions of the Association for Computational Linguistics},
  6:\penalty0 483--495, 2018.

\bibitem[Awad et~al.(2015)Awad, Khanna, Awad, and Khanna]{awad2015support}
Mariette Awad, Rahul Khanna, Mariette Awad, and Rahul Khanna.
\newblock Support vector machines for classification.
\newblock \emph{Efficient Learning Machines: Theories, Concepts, and
  Applications for Engineers and System Designers}, pp.\  39--66, 2015.

\bibitem[Bolukbasi et~al.(2021)Bolukbasi, Pearce, Yuan, Coenen, Reif,
  Vi{\'e}gas, and Wattenberg]{bolukbasi2021interpretability}
Tolga Bolukbasi, Adam Pearce, Ann Yuan, Andy Coenen, Emily Reif, Fernanda
  Vi{\'e}gas, and Martin Wattenberg.
\newblock An interpretability illusion for bert.
\newblock \emph{arXiv preprint arXiv:2104.07143}, 2021.

\bibitem[Brown et~al.(2020)Brown, Mann, Ryder, Subbiah, Kaplan, Dhariwal,
  Neelakantan, Shyam, Sastry, Askell, et~al.]{brown2020language}
Tom Brown, Benjamin Mann, Nick Ryder, Melanie Subbiah, Jared~D Kaplan, Prafulla
  Dhariwal, Arvind Neelakantan, Pranav Shyam, Girish Sastry, Amanda Askell,
  et~al.
\newblock Language models are few-shot learners.
\newblock \emph{Advances in neural information processing systems},
  33:\penalty0 1877--1901, 2020.

\bibitem[Burns et~al.(2022)Burns, Ye, Klein, and
  Steinhardt]{burns2022discovering}
Collin Burns, Haotian Ye, Dan Klein, and Jacob Steinhardt.
\newblock Discovering latent knowledge in language models without supervision.
\newblock \emph{arXiv preprint arXiv:2212.03827}, 2022.

\bibitem[Chan et~al.(2022)Chan, Garriga-Alonso, Goldowsky-Dill, Greenblatt,
  Nitishinskaya, Radhakrishnan, Shlegeris, and Thomas]{causal_scrubbing}
Lawrence Chan, Adria Garriga-Alonso, Nicholas Goldowsky-Dill, Ryan Greenblatt,
  Jenny Nitishinskaya, Ansh Radhakrishnan, Buck Shlegeris, and Nate Thomas.
\newblock Causal scrubbing: a method for rigorously testing interpretability
  hypotheses, 2022.
\newblock URL
  \url{https://www.alignmentforum.org/posts/JvZhhzycHu2Yd57RN/causal-scrubbing-a-method-for-rigorously-testing}.

\bibitem[Cohen et~al.(2023)Cohen, Biran, Yoran, Globerson, and
  Geva]{cohen2023evaluating}
Roi Cohen, Eden Biran, Ori Yoran, Amir Globerson, and Mor Geva.
\newblock Evaluating the ripple effects of knowledge editing in language
  models.
\newblock \emph{arXiv preprint arXiv:2307.12976}, 2023.

\bibitem[Conmy et~al.(2023)Conmy, Mavor-Parker, Lynch, Heimersheim, and
  Garriga-Alonso]{conmy2023towards}
Arthur Conmy, Augustine~N Mavor-Parker, Aengus Lynch, Stefan Heimersheim, and
  Adri{\`a} Garriga-Alonso.
\newblock Towards automated circuit discovery for mechanistic interpretability.
\newblock \emph{arXiv preprint arXiv:2304.14997}, 2023.

\bibitem[Conneau et~al.(2018)Conneau, Kruszewski, Lample, Barrault, and
  Baroni]{conneau-etal-2018-cram}
Alexis Conneau, German Kruszewski, Guillaume Lample, Loic Barrault, and Marco
  Baroni.
\newblock What you can cram into a single {\$}{\&}!{\#}* vector: Probing
  sentence embeddings for linguistic properties.
\newblock In \emph{Proceedings of the 56th Annual Meeting of the Association
  for Computational Linguistics (Volume 1: Long Papers)}, pp.\  2126--2136,
  Melbourne, Australia, July 2018. Association for Computational Linguistics.
\newblock \doi{10.18653/v1/P18-1198}.
\newblock URL \url{https://aclanthology.org/P18-1198}.

\bibitem[Dai et~al.(2022)Dai, Dong, Hao, Sui, Chang, and
  Wei]{dai-etal-2022-knowledge}
Damai Dai, Li~Dong, Yaru Hao, Zhifang Sui, Baobao Chang, and Furu Wei.
\newblock Knowledge neurons in pretrained transformers.
\newblock In \emph{Proceedings of the 60th Annual Meeting of the Association
  for Computational Linguistics (Volume 1: Long Papers)}, pp.\  8493--8502,
  Dublin, Ireland, May 2022. Association for Computational Linguistics.
\newblock \doi{10.18653/v1/2022.acl-long.581}.
\newblock URL \url{https://aclanthology.org/2022.acl-long.581}.

\bibitem[Devlin et~al.(2019)Devlin, Chang, Lee, and Toutanova]{bert}
Jacob Devlin, Ming-Wei Chang, Kenton Lee, and Kristina Toutanova.
\newblock {BERT}: Pre-training of deep bidirectional transformers for language
  understanding.
\newblock In \emph{Proceedings of the 2019 Conference of the North {A}merican
  Chapter of the Association for Computational Linguistics: Human Language
  Technologies, Volume 1 (Long and Short Papers)}, pp.\  4171--4186,
  Minneapolis, Minnesota, 2019. Association for Computational Linguistics.
\newblock \doi{10.18653/v1/N19-1423}.
\newblock URL \url{https://aclanthology.org/N19-1423}.

\bibitem[Elhage et~al.(2021)Elhage, Nanda, Olsson, Henighan, Joseph, Mann,
  Askell, Bai, Chen, Conerly, DasSarma, Drain, Ganguli, Hatfield-Dodds,
  Hernandez, Jones, Kernion, Lovitt, Ndousse, Amodei, Brown, Clark, Kaplan,
  McCandlish, and Olah]{elhage2021mathematical}
Nelson Elhage, Neel Nanda, Catherine Olsson, Tom Henighan, Nicholas Joseph, Ben
  Mann, Amanda Askell, Yuntao Bai, Anna Chen, Tom Conerly, Nova DasSarma, Dawn
  Drain, Deep Ganguli, Zac Hatfield-Dodds, Danny Hernandez, Andy Jones, Jackson
  Kernion, Liane Lovitt, Kamal Ndousse, Dario Amodei, Tom Brown, Jack Clark,
  Jared Kaplan, Sam McCandlish, and Chris Olah.
\newblock A mathematical framework for transformer circuits.
\newblock \emph{Transformer Circuits Thread}, 2021.
\newblock URL \url{https://transformer-circuits.pub/2021/framework/index.html}.

\bibitem[Elhage et~al.(2022)Elhage, Hume, Olsson, Schiefer, Henighan, Kravec,
  Hatfield-Dodds, Lasenby, Drain, Chen, Grosse, McCandlish, Kaplan, Amodei,
  Wattenberg, and Olah]{elhage2022superposition}
Nelson Elhage, Tristan Hume, Catherine Olsson, Nicholas Schiefer, Tom Henighan,
  Shauna Kravec, Zac Hatfield-Dodds, Robert Lasenby, Dawn Drain, Carol Chen,
  Roger Grosse, Sam McCandlish, Jared Kaplan, Dario Amodei, Martin Wattenberg,
  and Christopher Olah.
\newblock Toy models of superposition.
\newblock \emph{Transformer Circuits Thread}, 2022.
\newblock URL \url{https://transformer-circuits.pub/2022/toy_model/index.html}.

\bibitem[Geiger et~al.(2020)Geiger, Richardson, and Potts]{geigerPatch}
Atticus Geiger, Kyle Richardson, and Christopher Potts.
\newblock Neural natural language inference models partially embed theories of
  lexical entailment and negation.
\newblock In \emph{Proceedings of the Third BlackboxNLP Workshop on Analyzing
  and Interpreting Neural Networks for NLP}, pp.\  163--173, Online, November
  2020. Association for Computational Linguistics.
\newblock \doi{10.18653/v1/2020.blackboxnlp-1.16}.
\newblock URL \url{https://www.aclweb.org/anthology/2020.blackboxnlp-1.16}.

\bibitem[Geiger et~al.(2021)Geiger, Lu, Icard, and Potts]{geiger2021causal}
Atticus Geiger, Hanson Lu, Thomas Icard, and Christopher Potts.
\newblock Causal abstractions of neural networks.
\newblock \emph{Advances in Neural Information Processing Systems},
  34:\penalty0 9574--9586, 2021.

\bibitem[Geiger et~al.(2023{\natexlab{a}})Geiger, Potts, and
  Icard]{Geiger-etal:2023:CA}
Atticus Geiger, Christopher Potts, and Thomas Icard.
\newblock Causal abstraction for faithful model interpretation.
\newblock Ms., Stanford University, 2023{\natexlab{a}}.
\newblock URL \url{https://arxiv.org/abs/2301.04709}.

\bibitem[Geiger et~al.(2023{\natexlab{b}})Geiger, Wu, Potts, Icard, and
  Goodman]{geiger2023finding}
Atticus Geiger, Zhengxuan Wu, Christopher Potts, Thomas Icard, and Noah~D
  Goodman.
\newblock Finding alignments between interpretable causal variables and
  distributed neural representations.
\newblock \emph{arXiv preprint arXiv:2303.02536}, 2023{\natexlab{b}}.

\bibitem[Geirhos et~al.(2023)Geirhos, Zimmermann, Bilodeau, Brendel, and
  Kim]{geirhos2023don}
Robert Geirhos, Roland~S Zimmermann, Blair Bilodeau, Wieland Brendel, and Been
  Kim.
\newblock Don't trust your eyes: on the (un) reliability of feature
  visualizations.
\newblock \emph{arXiv preprint arXiv:2306.04719}, 2023.

\bibitem[Geva et~al.(2021)Geva, Schuster, Berant, and Levy]{geva_key_value}
Mor Geva, Roei Schuster, Jonathan Berant, and Omer Levy.
\newblock Transformer feed-forward layers are key-value memories.
\newblock In \emph{Proceedings of the 2021 Conference on Empirical Methods in
  Natural Language Processing}, pp.\  5484--5495, Online and Punta Cana,
  Dominican Republic, 2021. Association for Computational Linguistics.
\newblock \doi{10.18653/v1/2021.emnlp-main.446}.
\newblock URL \url{https://aclanthology.org/2021.emnlp-main.446}.

\bibitem[Geva et~al.(2023)Geva, Bastings, Filippova, and
  Globerson]{geva2023dissecting}
Mor Geva, Jasmijn Bastings, Katja Filippova, and Amir Globerson.
\newblock Dissecting recall of factual associations in auto-regressive language
  models.
\newblock \emph{arXiv preprint arXiv:2304.14767}, 2023.

\bibitem[Gokaslan \& Cohen(2019)Gokaslan and Cohen]{gokaslan2019open}
Aaron Gokaslan and Vanya Cohen.
\newblock Openwebtextcorpus, 2019.
\newblock URL \url{http://Skylion007.github.io/OpenWebTextCorpus}.

\bibitem[Goldowsky-Dill et~al.(2023)Goldowsky-Dill, MacLeod, Sato, and
  Arora]{goldowsky2023localizing}
Nicholas Goldowsky-Dill, Chris MacLeod, Lucas Sato, and Aryaman Arora.
\newblock Localizing model behavior with path patching.
\newblock \emph{arXiv preprint arXiv:2304.05969}, 2023.

\bibitem[Grand et~al.(2018)Grand, Blank, Pereira, and
  Fedorenko]{grand2018semantic}
G~Grand, I~Blank, F~Pereira, and E~Fedorenko.
\newblock Semantic projection: Recovering human knowledge of multiple, distinct
  object features from word embeddings. arxiv.
\newblock \emph{arXiv preprint arXiv:1802.01241}, 2018.

\bibitem[Gurnee et~al.(2023)Gurnee, Nanda, Pauly, Harvey, Troitskii, and
  Bertsimas]{gurnee2023finding}
Wes Gurnee, Neel Nanda, Matthew Pauly, Katherine Harvey, Dmitrii Troitskii, and
  Dimitris Bertsimas.
\newblock Finding neurons in a haystack: Case studies with sparse probing.
\newblock \emph{arXiv preprint arXiv:2305.01610}, 2023.

\bibitem[Hase et~al.(2023)Hase, Bansal, Kim, and Ghandeharioun]{hase2023does}
Peter Hase, Mohit Bansal, Been Kim, and Asma Ghandeharioun.
\newblock Does localization inform editing? surprising differences in
  causality-based localization vs. knowledge editing in language models.
\newblock \emph{arXiv preprint arXiv:2301.04213}, 2023.

\bibitem[Heimersheim \& Janiak()Heimersheim and Janiak]{docstring}
Stefan Heimersheim and Jett Janiak.
\newblock A circuit for {P}ython docstrings in a 4-layer attention-only
  transformer.
\newblock URL
  \url{https://www.alignmentforum.org/posts/u6KXXmKFbXfWzoAXn/a-circuit-for-python-docstrings-in-a-4-layer-attention-only}.

\bibitem[Hernandez et~al.(2023)Hernandez, Li, and
  Andreas]{hernandez2023inspecting}
Evan Hernandez, Belinda~Z. Li, and Jacob Andreas.
\newblock Inspecting and editing knowledge representations in language models,
  2023.

\bibitem[Hewitt \& Manning(2019)Hewitt and Manning]{hewitt2019structural}
John Hewitt and Christopher~D Manning.
\newblock A structural probe for finding syntax in word representations.
\newblock In \emph{Proceedings of the 2019 Conference of the North American
  Chapter of the Association for Computational Linguistics: Human Language
  Technologies, Volume 1 (Long and Short Papers)}, pp.\  4129--4138, 2019.

\bibitem[Li et~al.(2021)Li, Nye, and Andreas]{li2021implicit}
Belinda~Z Li, Maxwell Nye, and Jacob Andreas.
\newblock Implicit representations of meaning in neural language models.
\newblock \emph{arXiv preprint arXiv:2106.00737}, 2021.

\bibitem[Li et~al.(2023{\natexlab{a}})Li, Patel, Vi{\'e}gas, Pfister, and
  Wattenberg]{li2023inference}
Kenneth Li, Oam Patel, Fernanda Vi{\'e}gas, Hanspeter Pfister, and Martin
  Wattenberg.
\newblock Inference-time intervention: Eliciting truthful answers from a
  language model.
\newblock \emph{arXiv preprint arXiv:2306.03341}, 2023{\natexlab{a}}.

\bibitem[Li et~al.(2023{\natexlab{b}})Li, Davies, and Nadeau]{li2023circuit}
Maximilian Li, Xander Davies, and Max Nadeau.
\newblock Circuit breaking: Removing model behaviors with targeted ablation.
\newblock \emph{arXiv preprint arXiv:2309.05973}, 2023{\natexlab{b}}.

\bibitem[Lieberum et~al.(2023)Lieberum, Rahtz, Kram{\'a}r, Irving, Shah, and
  Mikulik]{lieberum2023does}
Tom Lieberum, Matthew Rahtz, J{\'a}nos Kram{\'a}r, Geoffrey Irving, Rohin Shah,
  and Vladimir Mikulik.
\newblock Does circuit analysis interpretability scale? evidence from multiple
  choice capabilities in chinchilla.
\newblock \emph{arXiv preprint arXiv:2307.09458}, 2023.

\bibitem[Lipton(2016)]{lipton2016}
Zachary~C. Lipton.
\newblock The mythos of model interpretability, 2016.
\newblock URL \url{https://arxiv.org/abs/1606.03490}.

\bibitem[McDougall et~al.(2023)McDougall, Conmy, Rushing, McGrath, and
  Nanda]{mcdougall2023copy}
Callum McDougall, Arthur Conmy, Cody Rushing, Thomas McGrath, and Neel Nanda.
\newblock Copy suppression: Comprehensively understanding an attention head.
\newblock \emph{arXiv preprint arXiv:2310.04625}, 2023.

\bibitem[McGrath et~al.(2023)McGrath, Rahtz, Kramar, Mikulik, and
  Legg]{mcgrath2023hydra}
Thomas McGrath, Matthew Rahtz, Janos Kramar, Vladimir Mikulik, and Shane Legg.
\newblock The hydra effect: Emergent self-repair in language model
  computations.
\newblock \emph{arXiv preprint arXiv:2307.15771}, 2023.

\bibitem[Meng et~al.(2022{\natexlab{a}})Meng, Bau, Andonian, and
  Belinkov]{meng2022locating}
Kevin Meng, David Bau, Alex~J Andonian, and Yonatan Belinkov.
\newblock Locating and editing factual associations in {GPT}.
\newblock In \emph{Advances in Neural Information Processing Systems},
  2022{\natexlab{a}}.

\bibitem[Meng et~al.(2022{\natexlab{b}})Meng, Sen~Sharma, Andonian, Belinkov,
  and Bau]{meng2022memit}
Kevin Meng, Arnab Sen~Sharma, Alex Andonian, Yonatan Belinkov, and David Bau.
\newblock Mass editing memory in a transformer.
\newblock \emph{arXiv preprint arXiv:2210.07229}, 2022{\natexlab{b}}.

\bibitem[Mikolov et~al.(2013)Mikolov, Sutskever, Chen, Corrado, and
  Dean]{DBLP:conf/nips/MikolovSCCD13}
Tom{\'{a}}s Mikolov, Ilya Sutskever, Kai Chen, Gregory~S. Corrado, and Jeffrey
  Dean.
\newblock Distributed representations of words and phrases and their
  compositionality.
\newblock In Christopher J.~C. Burges, L{\'{e}}on Bottou, Zoubin Ghahramani,
  and Kilian~Q. Weinberger (eds.), \emph{Advances in Neural Information
  Processing Systems 26: 27th Annual Conference on Neural Information
  Processing Systems 2013. Proceedings of a meeting held December 5-8, 2013,
  Lake Tahoe, Nevada, United States}, pp.\  3111--3119, 2013.
\newblock URL
  \url{https://proceedings.neurips.cc/paper/2013/hash/9aa42b31882ec039965f3c4923ce901b-Abstract.html}.

\bibitem[Nanda(2023{\natexlab{a}})]{nanda2023actually}
Neel Nanda.
\newblock Actually, othello-gpt has a linear emergent world representation.
\newblock 2023{\natexlab{a}}.

\bibitem[Nanda(2023{\natexlab{b}})]{neelattribution}
Neel Nanda.
\newblock Attribution patching: Activation patching at industrial scale,
  2023{\natexlab{b}}.
\newblock URL
  \url{https://www.neelnanda.io/mechanistic-interpretability/attribution-patching}.

\bibitem[Nanda \& Bloom(2022)Nanda and Bloom]{nanda2022transformerlens}
Neel Nanda and Joseph Bloom.
\newblock Transformerlens.
\newblock \url{https://github.com/neelnanda-io/TransformerLens}, 2022.

\bibitem[Nanda et~al.(2023)Nanda, Lee, and Wattenberg]{nanda2023emergent}
Neel Nanda, Andrew Lee, and Martin Wattenberg.
\newblock Emergent linear representations in world models of self-supervised
  sequence models.
\newblock \emph{arXiv preprint arXiv:2309.00941}, 2023.

\bibitem[Olah(2022)]{olah2022mechanistic}
Chris Olah.
\newblock Mechanistic interpretability, variables, and the importance of
  interpretable bases.
\newblock \emph{https://www.transformer-circuits.pub}, 2022.
\newblock URL
  \url{https://www.transformer-circuits.pub/2022/mech-interp-essay.}

\bibitem[Olah et~al.(2020)Olah, Cammarata, Schubert, Goh, Petrov, and
  Carter]{olah2020zoom}
Chris Olah, Nick Cammarata, Ludwig Schubert, Gabriel Goh, Michael Petrov, and
  Shan Carter.
\newblock Zoom in: An introduction to circuits.
\newblock \emph{Distill}, 2020.
\newblock \doi{10.23915/distill.00024.001}.

\bibitem[Olsson et~al.(2022)Olsson, Elhage, Nanda, Joseph, DasSarma, Henighan,
  Mann, Askell, Bai, Chen, Conerly, Drain, Ganguli, Hatfield-Dodds, Hernandez,
  Johnston, Jones, Kernion, Lovitt, Ndousse, Amodei, Brown, Clark, Kaplan,
  McCandlish, and Olah]{olsson2022context}
Catherine Olsson, Nelson Elhage, Neel Nanda, Nicholas Joseph, Nova DasSarma,
  Tom Henighan, Ben Mann, Amanda Askell, Yuntao Bai, Anna Chen, Tom Conerly,
  Dawn Drain, Deep Ganguli, Zac Hatfield-Dodds, Danny Hernandez, Scott
  Johnston, Andy Jones, Jackson Kernion, Liane Lovitt, Kamal Ndousse, Dario
  Amodei, Tom Brown, Jack Clark, Jared Kaplan, Sam McCandlish, and Chris Olah.
\newblock In-context learning and induction heads.
\newblock \emph{Transformer Circuits Thread}, 2022.
\newblock URL
  \url{https://transformer-circuits.pub/2022/in-context-learning-and-induction-heads/index.html}.

\bibitem[OpenAI(2023)]{gpt4}
OpenAI.
\newblock Gpt-4 technical report, 2023.

\bibitem[Pearl(2001)]{pearl2001direct}
Judea Pearl.
\newblock Direct and indirect effects.
\newblock In \emph{Proceedings of the Seventeenth Conference on Uncertainty in
  Artificial Intelligence}, UAI'01, pp.\  411--420, San Francisco, CA, USA,
  2001. Morgan Kaufmann Publishers Inc.
\newblock ISBN 1558608001.

\bibitem[Radford et~al.(2019)Radford, Wu, Child, Luan, Amodei, Sutskever,
  et~al.]{radford2019language}
Alec Radford, Jeffrey Wu, Rewon Child, David Luan, Dario Amodei, Ilya
  Sutskever, et~al.
\newblock Language models are unsupervised multitask learners.
\newblock \emph{OpenAI blog}, 1\penalty0 (8):\penalty0 9, 2019.

\bibitem[Smolensky(1986)]{Smolensky1986}
P.~Smolensky.
\newblock Neural and conceptual interpretation of {PDP} models.
\newblock In \emph{Parallel Distributed Processing: Explorations in the
  Microstructure, Vol. 2: Psychological and Biological Models}, pp.\  390--431.
  MIT Press, Cambridge, MA, USA, 1986.
\newblock ISBN 0262631105.

\bibitem[Stolfo et~al.(2023)Stolfo, Belinkov, and
  Sachan]{stolfo2023understanding}
Alessandro Stolfo, Yonatan Belinkov, and Mrinmaya Sachan.
\newblock Understanding arithmetic reasoning in language models using causal
  mediation analysis.
\newblock \emph{arXiv preprint arXiv:2305.15054}, 2023.

\bibitem[Tenney et~al.(2019)Tenney, Das, and Pavlick]{tenney-etal-2019-bert}
Ian Tenney, Dipanjan Das, and Ellie Pavlick.
\newblock {BERT} rediscovers the classical {NLP} pipeline.
\newblock In \emph{Proceedings of the 57th Annual Meeting of the Association
  for Computational Linguistics}, pp.\  4593--4601, Florence, Italy, July 2019.
  Association for Computational Linguistics.
\newblock \doi{10.18653/v1/P19-1452}.
\newblock URL \url{https://aclanthology.org/P19-1452}.

\bibitem[Turner et~al.(2023)Turner, Thiergart, Udell, Leech, Mini, and
  MacDiarmid]{turner2023activation}
Alex Turner, Lisa Thiergart, David Udell, Gavin Leech, Ulisse Mini, and Monte
  MacDiarmid.
\newblock Activation addition: Steering language models without optimization.
\newblock \emph{arXiv preprint arXiv:2308.10248}, 2023.

\bibitem[Vaswani et~al.(2017)Vaswani, Shazeer, Parmar, Uszkoreit, Jones, Gomez,
  Kaiser, and Polosukhin]{vaswani2017attention}
Ashish Vaswani, Noam Shazeer, Niki Parmar, Jakob Uszkoreit, Llion Jones,
  Aidan~N. Gomez, Lukasz Kaiser, and Illia Polosukhin.
\newblock Attention is all you need.
\newblock In Isabelle Guyon, Ulrike von Luxburg, Samy Bengio, Hanna~M. Wallach,
  Rob Fergus, S.~V.~N. Vishwanathan, and Roman Garnett (eds.), \emph{Advances
  in Neural Information Processing Systems 30: Annual Conference on Neural
  Information Processing Systems 2017, December 4-9, 2017, Long Beach, CA,
  {USA}}, pp.\  5998--6008, 2017.
\newblock URL
  \url{https://proceedings.neurips.cc/paper/2017/hash/3f5ee243547dee91fbd053c1c4a845aa-Abstract.html}.

\bibitem[Vig et~al.()Vig, Gehrmann, Belinkov, Qian, Nevo, Sakenis, Huang,
  Singer, and Shieber]{causal_meditation_analysis}
Jesse Vig, Sebastian Gehrmann, Yonatan Belinkov, Sharon Qian, Daniel Nevo,
  Simas Sakenis, Jason Huang, Yaron Singer, and Stuart Shieber.
\newblock Causal mediation analysis for interpreting neural nlp: The case of
  gender bias.
\newblock URL \url{https://arxiv.org/abs/2004.12265}.

\bibitem[Vig et~al.(2020)Vig, Gehrmann, Belinkov, Qian, Nevo, Sakenis, Huang,
  Singer, and Shieber]{vig2020causal}
Jesse Vig, Sebastian Gehrmann, Yonatan Belinkov, Sharon Qian, Daniel Nevo,
  Simas Sakenis, Jason Huang, Yaron Singer, and Stuart Shieber.
\newblock Causal mediation analysis for interpreting neural nlp: The case of
  gender bias.
\newblock \emph{arXiv preprint arXiv:2004.12265}, 2020.

\bibitem[Wallat et~al.(2020)Wallat, Singh, and
  Anand]{singh-etal-2020-bertnesia}
Jonas Wallat, Jaspreet Singh, and Avishek Anand.
\newblock {BERT}nesia: Investigating the capture and forgetting of knowledge in
  {BERT}.
\newblock In \emph{Proceedings of the Third BlackboxNLP Workshop on Analyzing
  and Interpreting Neural Networks for NLP}, pp.\  174--183, Online, November
  2020. Association for Computational Linguistics.
\newblock \doi{10.18653/v1/2020.blackboxnlp-1.17}.
\newblock URL \url{https://aclanthology.org/2020.blackboxnlp-1.17}.

\bibitem[Wang et~al.(2023)Wang, Variengien, Conmy, Shlegeris, and
  Steinhardt]{wang2022interpretability}
Kevin~Ro Wang, Alexandre Variengien, Arthur Conmy, Buck Shlegeris, and Jacob
  Steinhardt.
\newblock Interpretability in the wild: a circuit for indirect object
  identification in {GPT}-2 small.
\newblock In \emph{The Eleventh International Conference on Learning
  Representations}, 2023.
\newblock URL \url{https://openreview.net/forum?id=NpsVSN6o4ul}.

\bibitem[Wang et~al.(2020)Wang, Wohlwend, and Lei]{wang2019structured}
Ziheng Wang, Jeremy Wohlwend, and Tao Lei.
\newblock Structured pruning of large language models.
\newblock In \emph{Proceedings of the 2020 Conference on Empirical Methods in
  Natural Language Processing (EMNLP)}, pp.\  6151--6162, Online, 2020.
  Association for Computational Linguistics.
\newblock \doi{10.18653/v1/2020.emnlp-main.496}.
\newblock URL \url{https://aclanthology.org/2020.emnlp-main.496}.

\bibitem[Wei et~al.(2022)Wei, Tay, Bommasani, Raffel, Zoph, Borgeaud, Yogatama,
  Bosma, Zhou, Metzler, et~al.]{wei2022emergent}
Jason Wei, Yi~Tay, Rishi Bommasani, Colin Raffel, Barret Zoph, Sebastian
  Borgeaud, Dani Yogatama, Maarten Bosma, Denny Zhou, Donald Metzler, et~al.
\newblock Emergent abilities of large language models.
\newblock \emph{arXiv preprint arXiv:2206.07682}, 2022.

\bibitem[Wu et~al.(2023)Wu, Geiger, Potts, and Goodman]{wu2023interpretability}
Zhengxuan Wu, Atticus Geiger, Christopher Potts, and Noah~D Goodman.
\newblock Interpretability at scale: Identifying causal mechanisms in alpaca.
\newblock \emph{arXiv preprint arXiv:2305.08809}, 2023.

\bibitem[Zhong et~al.(2023)Zhong, Wu, Manning, Potts, and
  Chen]{zhong2023mquake}
Zexuan Zhong, Zhengxuan Wu, Christopher~D. Manning, Christopher Potts, and
  Danqi Chen.
\newblock Mquake: Assessing knowledge editing in language models via multi-hop
  questions, 2023.

\end{thebibliography}
\bibliographystyle{style}

\appendix

\section{Additional Notes on Section \ref{sec:methods}}
\label{section:}

\begin{figure}
\begin{center}
\begin{minipage}[b]{.49\textwidth}
\centering
\resizebox{\textwidth}{!}{
\begin{tikzpicture}
    % Draw coordinate axes
    \draw[line width=2pt,-{Stealth[length=8pt, scale=2]}] (-8,0)--(8,0)
    node[below left,font=\LARGE, xshift=0.7cm]{causally disconnected direction};
    \draw[line width=2pt,-{Stealth[length=8pt, scale=2]}] (0,-6)--(0,6) node[above left,rotate=90,font=\LARGE]{dormant direction};
\end{tikzpicture}
}
\captionof{figure}{Consider a 2-dimensional subspace of model activations, with
an orthogonal basis where the $x$-axis is \emph{causally disconnected} 
(changing the activation along it makes no difference to model outputs) and
values on the $y$ axis are always zero for examples in the data distribution (a
specical case of a \emph{dormant} direction).}
\end{minipage}%
\hfill
\begin{minipage}[b]{.49\textwidth}
\centering
\resizebox{\textwidth}{!}{
\begin{tikzpicture}
    \draw[line width=2pt, myred, -{Stealth[length=8pt, scale=2]}] (-135:7)--(45:7)
    node[above left, font=\LARGE, rotate=45, xshift=1.0cm]{direction to patch along ($\mathbf{v}$)};
    
    % Draw coordinate axes
    \draw[line width=2pt,-{Stealth[length=8pt, scale=2]}] (-8,0)--(8,0)
    node[below left,font=\LARGE, xshift=0.7cm]{causally disconnected direction};
    \draw[line width=2pt,-{Stealth[length=8pt, scale=2]}] (0,-6)--(0,6)
    node[above left,rotate=90,font=\LARGE]{dormant direction};
    % Draw zero-tailed vectors
    \draw[-{Stealth[length=8pt, scale=2]}, mygreen, line width=2pt] (0,0)--(4.5,0)
    node[above,font=\LARGE]{example to patch into};
    \draw[-{Stealth[length=8pt, scale=2]}, mygreen, line width=2pt] (0,0)--(-4.5,0)
    node[above,font=\LARGE]{example to patch from};

\end{tikzpicture}
}
\captionof{figure}{Suppose we have two examples (green) which differ in their projection on the causally disconnected direction (and have zero projection on the dormant direction, by definition). Let's consider what happens when we patch from the example on the left into the example on the right along the 1-dimensional subspace $\mathbf{v}$ spanned by the vector $(1, 1)$ (red)}
\end{minipage}

\medskip

\begin{minipage}[b]{.49\textwidth}
\centering

\resizebox{\textwidth}{!}{
\begin{tikzpicture}
    \draw[line width=2pt, myred, -{Stealth[length=8pt, scale=2]}] (-135:7)--(45:7)
    node[above left, font=\LARGE, rotate=45, xshift=1.0cm]{direction to patch along ($\mathbf{v}$)};
    \draw[line width=2pt, myred, -{Stealth[length=8pt, scale=2]}] (135:7)--(-45:7)
    node[pos=0, below right,font=\LARGE, rotate=-45, xshift=-1.0cm]{orthogonal complement
    $\mathbf{v}^\perp$};
    
    % Projections of the examples on the direction
    \draw[dashed, line width=2pt, myblue] (-4.5,0)--(0,-4.5);
    % \draw (-1.8,-1.5) circle [radius=0.3] node {$1$};
    \fill[myblue] (-1.8,-1.2) circle (0.3);
    \node at (-1.8,-1.2) [font={\large\bfseries}, text=white] {1};

    \fill[myblue] (1.8,-1.2) circle (0.3);
    \node at (1.8,-1.2) [font={\large\bfseries}, text=white] {2};

    \draw[dashed, -{Stealth[length=8pt, scale=2]}, line width=2pt, myblue]
    (0,0)--(-2.25,-2.25);
    \draw[dashed, line width=2pt, myblue] (4.5,0)--(0,-4.5);
    \draw[dashed, -{Stealth[length=8pt, scale=2]}, line width=2pt, myblue]
    (0,0)--(2.25,-2.25);

    % Draw coordinate axes
    \draw[line width=2pt,-{Stealth[length=8pt, scale=2]}] (-8,0)--(8,0)
    node[below left,font=\LARGE, xshift=0.7cm]{causally disconnected direction};
    \draw[line width=2pt,-{Stealth[length=8pt, scale=2]}] (0,-6)--(0,6) node[above left,rotate=90,font=\LARGE]{dormant direction};
    % Draw zero-tailed vectors
    \draw[-{Stealth[length=8pt, scale=2]}, mygreen, line width=2pt] (0,0)--(4.5,0)
    node[above,font=\LARGE]{example to patch into};

    \draw[-{Stealth[length=8pt, scale=2]}, mygreen, line width=2pt] (0,0)--(-4.5,0)
    node[above,font=\LARGE]{example to patch from};

\end{tikzpicture}
}
\captionof{figure}{To patch along $\mathbf{v}$ from the left into the right example, we
match the projection on $\mathbf{v}$ from the left one, and leave the projection
on $\mathbf{v}^\perp$
unchanged. In other words, we take the component of the left example along $\mathbf{v}$
(\raisebox{.5pt}{\textcircled{\raisebox{-.9pt} {1}}}) and sum it with the
$\mathbf{v}^\perp$ component (\raisebox{.5pt}{\textcircled{\raisebox{-.9pt} {2}}}) of the original activation.}
\end{minipage}%
\hfill
\begin{minipage}[b]{.49\textwidth}
\centering

\resizebox{\textwidth}{!}{
\begin{tikzpicture}
    \draw[line width=2pt, myred, -{Stealth[length=8pt, scale=2]}] (-135:7)--(45:7)
    node[above left, font=\LARGE, rotate=45, xshift=1.0cm]{direction to patch along ($\mathbf{v}$)};
    \draw[line width=2pt, myred, -{Stealth[length=8pt, scale=2]}] (135:7)--(-45:7)
    node[pos=0, below right,font=\LARGE, rotate=-45, xshift=-1.0cm]{orthogonal complement
    $\mathbf{v}^\perp$};
    
    % Projections of the examples on the direction
    \draw[dashed, line width=2pt, myblue] (-4.5,0)--(0,-4.5);
    % \draw (-1.8,-1.5) circle [radius=0.3] node {$1$};
    \fill[myblue] (-1.8,-1.2) circle (0.3);
    \node at (-1.8,-1.2) [font={\large\bfseries}, text=white] {1};

    \fill[myblue] (1.8,-1.2) circle (0.3);
    \node at (1.8,-1.2) [font={\large\bfseries}, text=white] {2};

    \draw[dashed, -{Stealth[length=8pt, scale=2]}, line width=2pt, myblue]
    (0,0)--(-2.25,-2.25);
    \draw[dashed, line width=2pt, myblue] (4.5,0)--(0,-4.5);
    \draw[dashed, -{Stealth[length=8pt, scale=2]}, line width=2pt, myblue]
    (0,0)--(2.25,-2.25);

    % Draw coordinate axes
    \draw[line width=2pt,-{Stealth[length=8pt, scale=2]}] (-8,0)--(8,0)
    node[below left,font=\LARGE, xshift=0.7cm]{causally disconnected direction};
    \draw[line width=2pt,-{Stealth[length=8pt, scale=2]}] (0,-6)--(0,6) node[above left,rotate=90,font=\LARGE]{dormant direction};
    % Draw zero-tailed vectors
    \draw[-{Stealth[length=8pt, scale=2]}, mygreen, line width=2pt] (0,0)--(4.5,0)
    node[above,font=\LARGE]{example to patch into};
    \draw[-{Stealth[length=8pt, scale=2]}, mygreen, line width=2pt] (0,0)--(-4.5,0)
    node[above,font=\LARGE]{example to patch from};
    \draw[-{Stealth[length=8pt, scale=2]}, myblue, line width=2pt] (0,0)--(0,-4.5)
    node[above,font=\LARGE, rotate=-90]{result of the patch};

    % Draw 1 + 2
    \fill[myblue] (-1.8,-4.5) circle (0.3);
    \node at (-1.8,-4.5) [font={\large\bfseries}, text=white] {1};
    \node at (-1.2,-4.5) [font={\large\bfseries}, text=black] {+};
    \fill[myblue] (-0.6,-4.5) circle (0.3);
    \node at (-0.6,-4.5) [font={\large\bfseries}, text=white] {2};

\end{tikzpicture}
}
\captionof{figure}{This results in the patched activation 
  \raisebox{.5pt}{\textcircled{\raisebox{-.9pt} {1}}}
+\raisebox{.5pt}{\textcircled{\raisebox{-.9pt} {2}}}, which points completely
along the dormant direction. In this way, activation patching makes the
variation of activations along the causally disconnected $x$-axis result in
activations along the previously dormant $y$-axis.}
\end{minipage}
\end{center}
\caption{A step-by-step illustration of the phenomenon shown in Figure
\ref{fig:illusion}.}
\label{fig:illusion-step-by-step}
\end{figure}

\subsection{The Illusion for Higher-Dimensional Subspaces}
\label{app:higher-dim}

In the main text, we mostly discuss the illusion for activation patching of 1-dimensional subspaces for ease of exposition. Here, we develop a more complete picture of the mechanics of the illusion for higher-dimensional subspaces.

Let $\mathcal{C}$ be a model component taking values in $\mathbb{R}^d$, and let $U\subset \mathbb{R}^d$ be a linear subspace.
Let $V$ be a matrix whose columns form an orthonormal basis for $U$.
If the $\mathcal{C}$ activations for examples $A$ and $B$ are $\mathbf{act}_A, \mathbf{act}_B\in\mathbb{R}^d$ respectively, patching $U$ from $A$ into $B$ gives the patched activation
\begin{align*}
    \mathbf{act}_B^{patched} = \mathbf{act}_B + VV^\top (\mathbf{act}_A - \mathbf{act}_B) = (I - VV^\top)\mathbf{act}_B + VV^\top \mathbf{act}_A
\end{align*}
For intuition,  note that $VV^\top$ is the orthogonal projection on $U$, so this formula says to replace the orthogonal projection of $\mathbf{act}_B$ on $U$ with that of $\mathbf{act}_A$, and keep the rest of $\mathbf{act}_B$ the same.

Generalizing the discussion from Section \ref{sec:methods}, for the illusion to
occur for subspace $S$, we need $S$ to be sufficiently aligned with a causally
disconnected subspace $V_{\text{disconnected}}$ that is correlated with the
feature being patched, and a dormant but causal subspace $V_{\text{dormant}}$
which, when set to out of distribution values, can achieve the wanted causal
effect.

For example, a particularly simple way in which this could happen is if we let
$V_{\text{disconnected}}, V_{\text{dormant}}$ be 1-dimensional subspaces (like
in the setup for the 1-dimensional illusion), and we form $S$ by combining
$V_{\text{disconnected}} \oplus V_{\text{dormant}}$ with a number of orthogonal
directions that are approximately constant on the data with respect to the
feature we are patching.  These extra directions effectively don't matter for
the patch (because they are cancelled by the $\mathbf{act}_A - \mathbf{act}_B$
term).  Given a specific feature, it is likely that such weakly-activating
directions will exist in a high-dimensional activation space.  Thus, if the
1-dimensional illusion exist, so will higher-dimensional ones.

\subsection{Optimal Illusory Patches are Equal Parts Causally Disconnected and Dormant}
\label{app:equal-norms}
In this subsection, we prove a quantitative corollary of the model of our
illusion that suggests that we should expect optimal illusory patching
directions to be of the form
\begin{align*}
    \mathbf{v}_{\text{illusory}}=\frac{1}{\sqrt{2}}\left(\mathbf{v}_{\text{disconnected}}
+ \mathbf{v}_{\text{dormant}}\right)
\end{align*}
for unit vectors $\mathbf{v}_{\text{disconnected}}\perp
\mathbf{v}_{\text{dormant}}$. In other words, we expect the strongest illusory
patches to be formed by combining a disconnected and illusory direction with
\emph{equal} coefficients, like depicted in Figure \ref{fig:illusion}:

\begin{lemma}
   Suppose we have two distributions of input prompts $\mathcal{D}_{A},
   \mathcal{D}_{B}$. In the terminology of Section \ref{sec:methods}, let
   $\mathbf{v}_{\text{disconnected}}\perp \mathbf{v}_{\text{dormant}}$ be unit
   vectors such that the subspace spanned by $\mathbf{v}_{\text{disconnected}}$
   is a causally disconnected subspace, and the subspace spanned by
   $\mathbf{v}_{\text{dormant}}$ is \textbf{strongly} dormant, in the sense that
   the projections of the activations of all examples $\mathcal{D}_{source}\cup
   \mathcal{D}_{base}$ onto $\mathbf{v}_{\text{dormant}}$ are equal to some
   constant $c$.
   
   Let $\mathbf{v} = \mathbf{v}_{\text{disconnected}}\cos\alpha  +
   \mathbf{v}_{\text{dormant}}\sin\alpha$ be a unit-norm linear combination of
   the two directions parametrized by an angle $\alpha$. Then the magnitude of
   the expected change in projection along $\mathbf{v}_{\text{dormant}}$ when
   patching from $x_A\sim\mathcal{D}_{A}$ into $x_B\sim\mathcal{D}_{B}$ is
   maximized when $\alpha=\frac{\pi}{4}$, i.e.
   $\cos\alpha=\sin\alpha=\frac{1}{\sqrt{2}}$.
\end{lemma}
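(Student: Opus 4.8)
The plan is to compute the expected change in the projection along $\mathbf{v}_{\text{dormant}}$ directly from Equation \ref{eq:onedim-patch} and optimize over $\alpha$. First I would write the patch along $\mathbf{v}=\mathbf{v}_{\text{disconnected}}\cos\alpha+\mathbf{v}_{\text{dormant}}\sin\alpha$ applied to a base activation $\mathbf{act}_B$ using source activation $\mathbf{act}_A$: the patched activation is $\mathbf{act}_B^{\text{patched}}=\mathbf{act}_B+(p_A-p_B)\mathbf{v}$ where $p_A=\mathbf{v}^\top\mathbf{act}_A$ and $p_B=\mathbf{v}^\top\mathbf{act}_B$. Taking the inner product with $\mathbf{v}_{\text{dormant}}$, and using that $\mathbf{v}_{\text{dormant}}^\top\mathbf{v}=\sin\alpha$, the change in the dormant projection is $(p_A-p_B)\sin\alpha$.

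Next I would expand $p_A-p_B$. Since $\mathbf{v}_{\text{disconnected}}\perp\mathbf{v}_{\text{dormant}}$, we have $p_A-p_B=\cos\alpha\,\bigl(\mathbf{v}_{\text{disconnected}}^\top(\mathbf{act}_A-\mathbf{act}_B)\bigr)+\sin\alpha\,\bigl(\mathbf{v}_{\text{dormant}}^\top(\mathbf{act}_A-\mathbf{act}_B)\bigr)$. The strong dormancy hypothesis says $\mathbf{v}_{\text{dormant}}^\top\mathbf{act}_A=\mathbf{v}_{\text{dormant}}^\top\mathbf{act}_B=c$, so the second term vanishes and $p_A-p_B=\cos\alpha\,\bigl(\mathbf{v}_{\text{disconnected}}^\top(\mathbf{act}_A-\mathbf{act}_B)\bigr)$. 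Hence the change in the dormant projection equals $\cos\alpha\sin\alpha\,\bigl(\mathbf{v}_{\text{disconnected}}^\top(\mathbf{act}_A-\mathbf{act}_B)\bigr)$. Taking expectations over $x_A\sim\mathcal{D}_A$, $x_B\sim\mathcal{D}_B$ (the angular factor $\cos\alpha\sin\alpha$ is deterministic and pulls out), the magnitude of the expected change is
\[
  |\cos\alpha\sin\alpha|\cdot\bigl|\mathbb{E}\,[\mathbf{v}_{\text{disconnected}}^\top(\mathbf{act}_{x_A}-\mathbf{act}_{x_B})]\bigr|
  =\tfrac12|\sin 2\alpha|\cdot\bigl|\mathbb{E}\,[\mathbf{v}_{\text{disconnected}}^\top(\mathbf{act}_{x_A}-\mathbf{act}_{x_B})]\bigr|.
\]

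The second factor is a constant independent of $\alpha$, so maximizing the whole expression reduces to maximizing $|\sin 2\alpha|$, which is attained at $\alpha=\pi/4$, i.e.\ $\cos\alpha=\sin\alpha=\tfrac{1}{\sqrt2}$, giving $\mathbf{v}_{\text{illusory}}=\tfrac{1}{\sqrt2}(\mathbf{v}_{\text{disconnected}}+\mathbf{v}_{\text{dormant}})$ as claimed. I don't anticipate a serious obstacle here — the argument is a short direct computation — but the one point that needs care is being explicit that the appeal to strong dormancy is what kills the $\sin\alpha$ cross-term in $p_A-p_B$, and that the factor $\mathbb{E}[\mathbf{v}_{\text{disconnected}}^\top(\mathbf{act}_{x_A}-\mathbf{act}_{x_B})]$ is genuinely $\alpha$-independent so that the optimization over $\alpha$ really is just over the trigonometric prefactor. (If one wanted the statement for the magnitude of the expected change rather than the expected magnitude, the same computation works verbatim; I would state which convention is used.)
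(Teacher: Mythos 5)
Your proposal is correct and follows essentially the same argument as the paper: expand $p_A-p_B$, use strong dormancy to eliminate the $\sin\alpha$ cross-term, and maximize the resulting $\cos\alpha\sin\alpha$ prefactor at $\alpha=\pi/4$. The only (harmless) difference is cosmetic — you phrase the final optimization via $|\sin 2\alpha|$ while the paper maximizes $\cos\alpha\sin\alpha$ over $\alpha\in[0,\pi/2]$.
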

\begin{proof}
Recall that the patched activation from $x_A$ into $x_B$ along $\mathbf{v}$ is
\begin{align*}
    \mathbf{act}_B^{\text{patched}} = \mathbf{act}_B + (p_A-p_B)\mathbf{v}
\end{align*}
where $p_A = \mathbf{v}^\top \mathbf{act}_A, p_B = \mathbf{v}^\top \mathbf{act}_B$ are the projections of the two examples' activations on $v$. The change along $\mathbf{v}_{\text{dormant}}$ is thus
\begin{align*}
    &\mathbf{v}_{\text{dormant}}^\top\left(\mathbf{act}_B^{\text{patched}} - \mathbf{act}_B\right) = (p_A-p_B)\sin\alpha = (\mathbf{v}^\top \mathbf{act}_A - \mathbf{v}^\top \mathbf{act}_B)\sin\alpha
    \\
    &= \mathbf{v}_{\text{disconnected}}^\top (\mathbf{act}_A - \mathbf{act}_B)\cos\alpha \sin\alpha
\end{align*}
where we used the assumption that $\mathbf{v}_{\text{dormant}}^\top \mathbf{act}_A = \mathbf{v}_{\text{disconnected}}^\top\mathbf{act}_B$. Hence, the expected change is
\begin{align*}
    \cos\alpha\sin\alpha\, \mathbf{v}_{\text{disconnected}}^\top\mathbb{E}_{A\sim\mathcal{D}_A, B\sim\mathcal{D}_B}\left[\mathbf{act}_A - \mathbf{act}_B\right].
\end{align*}
The function $f(\alpha) = \cos\alpha \sin\alpha$ for $\alpha\in[0, \pi/2]$ is maximized for $\alpha=\pi/4$, concluding the proof.
\end{proof}

\subsection{The Toy Illusion in a Rotated Basis}
\label{app:rotated-basis}

There is a subtlety in the toy example of the illusion from
\ref{sub:toy-illusion}. Suppose we reparametrized the hidden layer of the
network so that, instead of the standard basis $(\mathbf{e}_1, \mathbf{e}_2,
\mathbf{e}_3)$, we use a rotated basis where one of the directions is
$\mathbf{e}_1+\mathbf{e}_2$, the other direction is orthogonal to it and to
$\mathbf{w}_2$ (hence will be causally disconnected), and the last direction is
the unique direction orthogonal to the first two. 

The unit basis vectors for this new basis are given by 
\begin{align*}
    \mathbf{d}_1 &= \l(\mathbf{e}_1 + \mathbf{e}_2\r) / \sqrt{2}, \\
    \mathbf{d}_2 &=\l(-\mathbf{e}_1 + \mathbf{e}_2 - 2 \mathbf{e}_3\r) / \sqrt{6}, \\
    \mathbf{d}_3 &= \l(-\mathbf{e}_1 + \mathbf{e}_2 + \mathbf{e}_3\r) / \sqrt{3}.
\end{align*}
If we assemble these into the rows of a rotation matrix 
\begin{align*}
R = \begin{bmatrix}
\frac{1}{\sqrt{2}} & \frac{1}{\sqrt{2}} & 0 \\
\frac{-1}{\sqrt{6}} & \frac{1}{\sqrt{6}} & \frac{-2}{\sqrt{6}} \\
\frac{-1}{\sqrt{3}} & \frac{1}{\sqrt{3}} & \frac{1}{\sqrt{3}}
\end{bmatrix}
\end{align*}
the re-parametrized network is then given by 
\begin{align*}
    x \quad\mapsto\quad \mathbf{h}' = R \mathbf{w}_1 x \quad\mapsto\quad y = \l(R \mathbf{w}_2\r)^\top \mathbf{h}'
\end{align*}
and a diagram of this new network is shown in Figure
\ref{fig:mathematical-example-rotated-basis}. From this point of view,
$\mathbf{d}_1$ takes the role that $\mathbf{e}_3$ had before: the input is
essentially copied to it (modulo scalar multiplication), and then read from it
at the output. By contrast, $\mathbf{d}_2$ is now a causally disconnected
direction, and $\mathbf{d}_3$ is a dormant direction.

\begin{figure}
  \begin{center}
\resizebox{0.5\textwidth}{!}{
\begin{tikzpicture}[state/.style={circle, draw, 
  minimum size=2.5cm, font=\LARGE}]
\node[state] (X) {$x$};
\node[state] (H1) at (7,5) {$h_1' \gets x/\sqrt{2}$};
\node[state] (H2) at (7,0) {$h_2'\gets -\sqrt{3/2}x$};
\node[state] (H3) at (7,-5) {$h_3' \gets 0$};
\node[state, right of= H2] at (14,0) (Y) {$y \gets \sqrt{2}h_1' + \sqrt{3}h_3'$};

\draw[-{Stealth[scale=2]}] (X) -> (H1) node[midway, above left, font=\LARGE]
{$\times 1/\sqrt{2}$};
\draw[-{Stealth[scale=2]}] (X) -> (H2) node[midway, above, font=\LARGE]
{$\times \l(-\sqrt{3/2}\r)$};
\draw[-{Stealth[scale=2]}] (H1) -> (Y) node[midway, above, font=\LARGE] {$\times
\sqrt{2}$};
\draw[-{Stealth[scale=2]}] (H3) -> (Y) node[midway, below, font=\LARGE] {$\times
\sqrt{3}$};
\end{tikzpicture}
}
  \end{center}
\caption{}
\label{fig:mathematical-example-rotated-basis}
\end{figure}

\section{Additional details for Section \ref{sec:ioi}}
\label{section:}

\subsection{Dataset, Model and Evaluation Details for the IOI Task}
\label{app:ioi-dataset-details}
We use GPT2-Small for the IOI task, with a dataset that spans 216 single-token names, 144 single-token objects and 75 single-token places, which are split $1:1$ across a training and test set. 
Every example in the data distribution includes (i) an initial clause introducing the indirect object (\textbf{IO}, here `Mary') and the subject (\textbf{S}, here `John'),
and (ii) a main clause that refers to the subject a second time.
Beyond that, the dataset varies in the two names, the initial clause content, and the main clause content.
Specifically, use three templates as shown below:
\begin{center}
    \text{Then, [ ] and [ ] had a long and really crazy argument. Afterwards, [ ] said to}
    \\
    \text{Then, [ ] and [ ] had lots of fun at the [place]. Afterwards, [ ] gave a [object] to}
    \\
    \text{Then, [ ] and [ ] were working at the [place]. [ ] decided to give a [object] to}
\end{center}
and we use the first two in training and the last in the test set. Thus, the test set relies on unseen templates, names, objects and places. We used fewer templates than the IOI paper \cite{wang2019structured} in order to simplify tokenization (so that the token positions of our names always align), but our results also hold with shifted templates like in the IOI paper.

On the test partition of this dataset, GPT2-Small achieves an accuracy of
$\approx 91\%$. The average difference of logits between the correct and
incorrect name is $\approx 3.3$, and the logit of the correct name is greater
than that of the incorrect name in $\approx 99\%$ of examples. Note that, while
the logit difference is closely related to the model's correctness, it being
$>0$ does not imply that the model makes the correct prediction, because there
could be a third token with a greater logit than both names.

\subsection{Details for Computing the Gradient Direction $\mathbf{v}_{\text{grad}}$}
\label{app:gradient-details}
For a given example from the test distribution and a given name mover head, we compute the gradient of the difference of attention scores from the final token position to the 3rd and 5th token in the sentence (where the two name tokens always are in our data). We then average these gradients over a large sample of the full test distribution and over the three name mover heads, and finally normalize the resulting vector to have unit $\ell_2$ norm. 

We note that there is a `closed form' way to compute approximately the same quantity that requires no optimization. Namely, for a single example we can collect the keys $k_{S}, k_{IO}$ to the name mover heads at the first two names in the sentence (the \textbf{S} and \textbf{IO} name). Then, for a single name mover head with query matrix $W_Q$, a maximally causal direction $v$ in the residual stream at the last token position after layer 8 will be one such that $W_Qv$ is in the direction of $k_S - k_{IO}$, because the attention score is simply the dot product between the keys and queries. We can use this to `backpropagate' to $v$ by multiplying with the pseudoinverse $W_Q^+$. This is slightly complicated by the fact that we have been ignoring layer normalization, which can be approximately accounted for by estimating the scaling parameters (which tend to concentrate well) from the IOI data distribution. We note that this approach leads to broadly similar results.

\subsection{Training Details for DAS}
\label{app:das-training-details}

To train DAS, we always sample examples from the training IOI distribution. We sample equal amounts of pairs of
base (which will be patched into) and source (where we take the activation to
patch in from) prompts where the two names are the same between the prompts, and
pairs of prompts where all four names are distinct. We optimize DAS to maximize
the logit difference between the name that should be predicted if the position
information from the source example is correct and the other name. 

For training, we use a learned rotation matrix as in the original DAS paper \citep{geiger2023finding}, parametrized with \texttt{torch.nn.utils.parametrizations.orthogonal}. We use the Adam optimizer and minibatch training over a training set of several hundred patching pairs. We note that results remain essentially the same when using a higher number of training examples.

\subsection{Discussion of the Magnitude of the Illusion}
\label{app:illusion-magnitude}
While the contribution of the $\mathbf{v}_{\text{MLP}}$ patch to logit difference may appear relatively small, we note that this is the result of patching a direction in a single model component at a single token position. Typical circuits found in real models (including the IOI circuit from \citet{wang2022interpretability}) are often composed of multiple model components, each of which contribute. In particular, the position signal itself is written to by 4 heads, and chiefly read by 3 other heads. As computation tends to be distributed, when patching an individual component accuracy may be a misleading metric (eg patching 1 out of 3 heads is likely insufficient to change the output), and a fractional logit diff indicates a significant contribution.
By contrast, patching in the residual stream is a more potent intervention,
because it can affect \textit{all} information accumulated in the model that is
communicated to downstream components.

\subsection{Random ablation of MLP weights}
\label{app:random-mlp}
How certain are we that MLP8 doesn’t actually matter for the IOI task? While we find the IOI paper analysis convincing, to make our results more robust to the possibility that it does matter, we also design a further experiment. 

Given our conceptual picture of the illusion, the computation performed by the MLP layer where we find the illusory subspace does not matter as long as it propagates the correlational information about the position feature from the residual stream to the hidden activations, and as long as the output matrix $W_{out}$ is full rank (also, see the discussion in \ref{sec:discussion}). Thus, we expect that if we replace the MLP weights by randomly chosen ones with the same statistics, we should still be able to exhibit the illusion.

Specifically, we randomly sampled MLP weights and biases such that the norm of the output activations matches those of MLP8. As random MLPs might lead to nonsensical text generation, we don’t replace the layer with the random weights, but rather train a subspace using DAS on the MLP activations, and add the difference between the patched and unpatched output of the random MLP to the real output of MLP8. This setup finds a subspace that reduces logit difference even more than the $\mathbf{v}_{\text{MLP}}$ direction.

This suggests that the existence of the $\mathbf{v}_{\text{MLP}}$ subspace is less about \emph{what} information MLP8 contains, and more about \emph{where} MLP8 is in the network.

\subsection{Generalization to high-dimensional Subspaces}
\label{app:generalization-high-dim}

In the main text, we focus on activation patching in one-dimensional subspaces for clarity. Here, we extend the discussion to higher-dimensional subspaces and show that the interpretability illusion generalizes to high-dimensional linear subspaces.

We investigate two different $100$-dimensional subspaces $U_{\text{MLP8}}$ in MLP8 and $U_{\text{resid8}}$ in the output of layer 8. Specifically, we used DAS to find orthonormal bases $V_{\text{MLP}}$ and $V_{\text{resid}}$ that align the position information in these two locations, as explained in \ref{app:higher-dim}.
We found that these subspaces performed slightly better compared to their 1-dimensional counterparts (for $V_{\text{resid}}$: $190 \%$ FLDD and $89 \%$ interchange accuracy; for $V_{\text{MLP}}$: $62 \%$ FLDD and $13 \%$ interchange accuracy). 

We hypothesize that the subspace trained on MLP8 is pathological while the subspace in the residual stream is not. 
To test this, we decompose every basis vector $\textbf{v}^{(d)}_{\{\text{MLP8, resid8}\}}$ into its projection $\textbf{v}^{\text{nullspace}}_{\{\text{MLP8, resid8}\}}$ on the nullspace $\text{ker} \{W_{\text{out}}, W_Q\}$ and its orthogonal complement $\textbf{v}^{\text{rowspace}}_{\{\text{MLP8, resid8}\}}$ such that 

\[
\textbf{v}^{(d)}_{\{\text{MLP8, resid8}\}} = \textbf{v}^{\text{nullspace}}_{\{\text{MLP8, resid8}\}} + \textbf{v}^{\text{rowspace}}_{\{\text{MLP8, resid8}\}}. 
\]

Note that $W_Q$ denotes the query weight of name mover head 9.9. We then patched the 200-dimensional subspace spanned by $\hat{V}$ with 
\[
\hat{V} = \mathcal{QR}([\textbf{v}^{\text{nullspace}}_1, ..., \textbf{v}^{\text{nullspace}}_d, \textbf{v}^{\text{rowspace}}_1, ..., \textbf{v}^{\text{rowspace}}_d])
\]
composed out of the decomposed subspace vectors and orthonormalized using QR-decomposition (see Figure \ref{fig:pos_100d_fldd}). For patching the output of layer 8, FLDD and interchange accuracy remained similarly high. (FLDD: $200 \%$, interchange accuracy: $86 \%$). However, patching on MLP8 mostly removes the effect of the patch (FLDD: $17 \%$, interchange accuracy: $2 \%$). Thus, the causally disconnected subspace is required for patching MLP8 which suggests that the interpretability illusion generalizes to higher-dimensional subspaces.

\begin{figure}[ht]
    \centering
    \begin{tikzpicture}
        \node[anchor=south west,inner sep=0] (image) at (0,0) {\includegraphics[width=.6\textwidth]{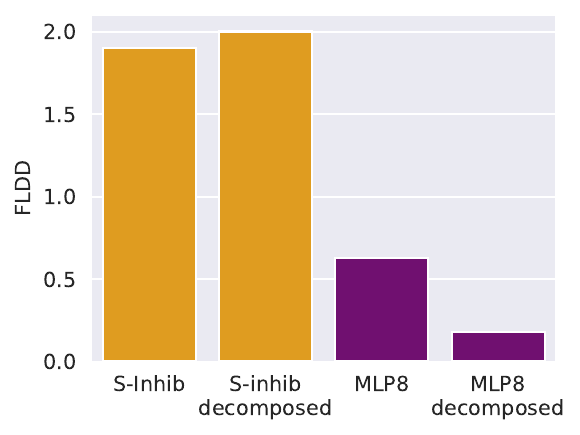}};
    \end{tikzpicture}
    \caption{Fractional logit difference decrease (FLDD) for patching a 100-dimensional subspace on the S-inhibition heads or on MLP8; "decomposed" patches the 200-dimensional subspace made out of the nullspace projection vectors into $W_Q$ of name mover or $W_{out}$ of MLP8, respectively, and their orthogonal complements}
    \label{fig:pos_100d_fldd}
\end{figure}

\subsection{Overfitting on Small Datasets}
\label{app:overfitting}

\begin{figure}[ht]
    \centering
    \begin{tikzpicture}
        \node[anchor=south west,inner sep=0] (image) at (0,0)
        {\includegraphics[width=1.0\textwidth]{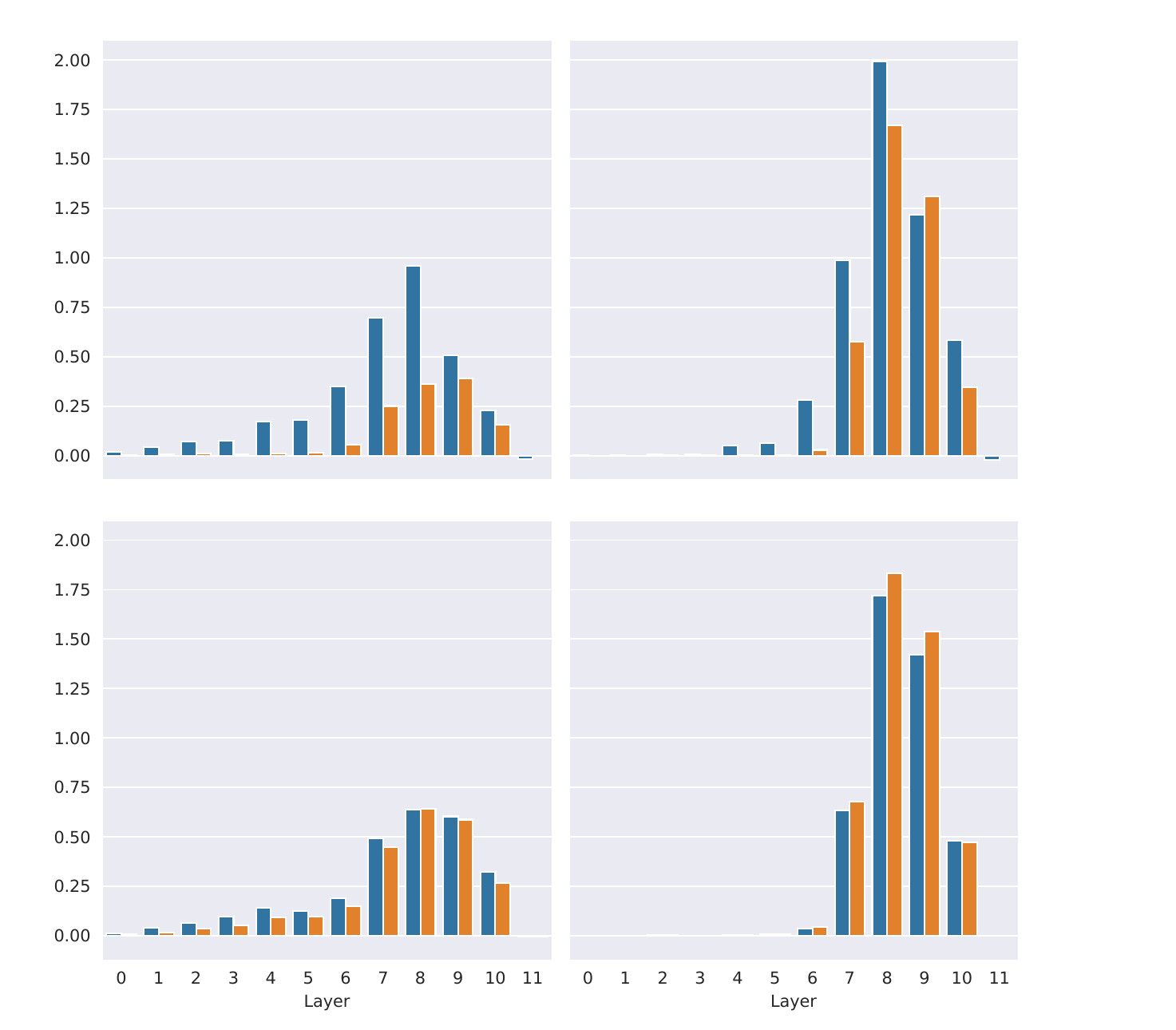}};
        \begin{scope}[x={(image.south east)},y={(image.north west)}]
            % Add labels
            \node [rotate=90] at (-0,0.75) {FLDD (small training set)};
            \node [rotate=90] at (-0,0.3) {FLDD (full training set)};
            \node [anchor=east] at (0.35,1.0) {MLP patch};
            \node [anchor=east] at (0.8,1.0) {Residual stream patch};
            % \node [anchor=north] at (0.5,0.0) {Intervention Layer};
        \end{scope}
    \end{tikzpicture}
    \caption{FLDD for different IOI-position subspaces: Subspaces were fitted to either a small version of the IOI dataset that only contained 2 names (first row) or on the full dataset (second row) using activations from the MLP (first column) or the residual stream (second column). Subspace performance on the IOI task was evaluated on the training (blue) distribution and the full test dataset containing all names (orange)}
    \label{fig:overfitting}
\end{figure}

How important is a large and diverse dataset for training DAS? We initially hypothesized
that for very small datasets, it is possible to find working subspaces in all
layers as there are only a few fixed activation vectors in each layer and we
might be able to find subspaces that utilize this noise to overfit.

To test this, we created a small IOI dataset containing only two names from a
fixed template. We fitted a one-dimensional subspace using DAS for every layer
on that dataset and the full dataset as a control (Figure
\ref{fig:overfitting}). We repeated the experiment for subspaces in the MLP and
residual stream and evaluated the subspaces on their train distribution and a
test distribution containing all names and templates. FLDD was highest in layer
8, the component between S-inhibition heads and name movers, and also high in
neighboring layers that still contain IOI information (e.g. some of the
S-inhibition heads are in layer 7 and some of the name movers are in layer 10).
Moreover, train FLDD was significantly higher than test FLDD when trained on
only 2 names. 

Importantly, we also observe that subspaces optimized on the small dataset
reached a FLDD bigger than zero in some of the other layers but contrary to our
expectation, this was neither high in absolute terms nor compared to subspaces
trained on the full distribution (see Figure \ref{fig:overfitting}).

\section{Additional Details for Section \ref{sec:ground-truth}}
\label{app:ground-truth}

\textbf{Which model components write to the $\mathbf{v}_{\text{resid}}$ direction?}
To test how every attention head and MLP contributes to the value of projections on $\mathbf{v}_{\text{MLP}}$, we sampled activations from head and MLP outputs at the last token position of IOI prompts, and calculated their dot product with $\mathbf{v}_{resid}$ (Figure \ref{fig:which-heads-to-resid}).
We found that the dot products of most heads and MLPs was low, and that the S-inhibition heads were the only heads whose dot product differed between different patterns ABB and BAB.
This shows that only the S-inhibition heads write to the $\mathbf{v}_{\text{resid}}$ direction (as one would hope).
Importantly, this test separates $\mathbf{v}_{resid}$ from the interpretability illusion $\mathbf{v}_{MLP}$.
While patching $\mathbf{v}_{\text{MLP8}}$ also writes to $\mathbf{v}_{\text{resid8}}$ (i.e. $\mathbf{v}_{\text{MLP8}} W_{out} \approx \mathbf{v}_{\text{resid8}}$), the MLP layer does not write this subspace on the IOI task (see Figure \ref{fig:mlp8-output-projections}). This further supports the observation that the $\mathbf{v}_{\text{MLP}}$ patch activates a dormant pathway in the model.

\textbf{Generalization beyond the IOI distribution}. 
We also investigate how the subspace generalizes. We sample prompts from OpenWebText-10k and look at those with particularly high and low activations in $\mathbf{v}_{\text{sinhib}}$. Representative examples are shown in Figure \ref{fig:openwebtext2} together with the name movers attention at the position of interest, how the probability changes after subspace ablation, and how the name movers attention changes.

\textbf{Stability of found solution}.
Finally, we note that solutions found by DAS in the residual stream are stable, including when trained on a subset of S-inhibition heads (see Figure \ref{fig:pos-das-is-robust}).

\begin{figure}[ht]
    \centering
    \begin{tikzpicture}
        \node[anchor=south west,inner sep=0] (image) at (0,0) {\includegraphics[width=.6\textwidth]{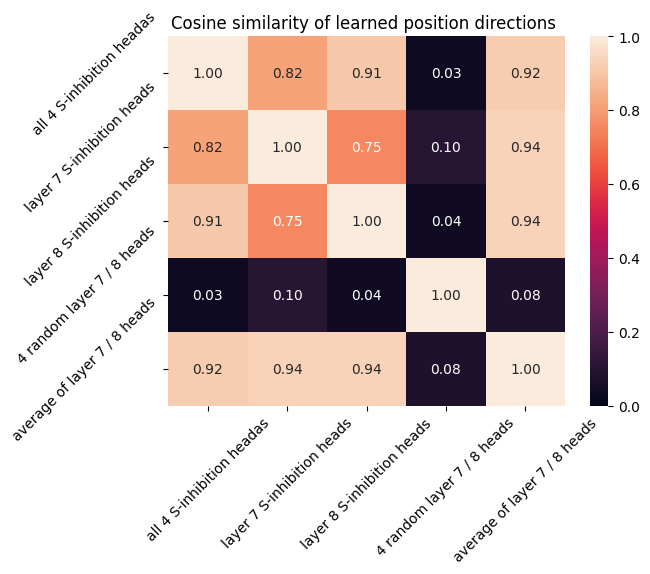}};
    \end{tikzpicture}
    \caption{Cosine Similarity between learned position subspaces in the S-inhibition heads is high even when using only a subset of S-inhibition heads for training}
    \label{fig:pos-das-is-robust}
\end{figure}

\begin{figure}[ht]
    \centering
    \begin{tikzpicture}
        \node[anchor=south west,inner sep=0] (image) at (0,0) {\includegraphics[width=1
\textwidth]{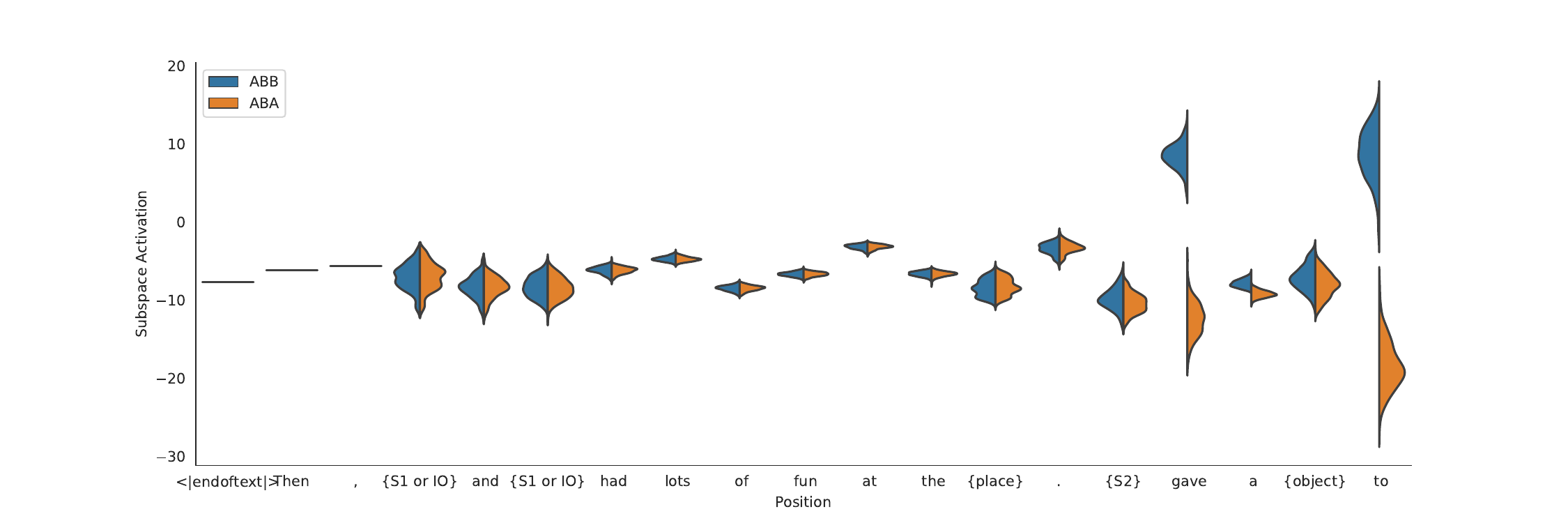}};
    \end{tikzpicture}
    \caption{The IOI position subspace activates at words that predict a repeated name. S-inhibition subspace activations for different IOI prompts per position}
    \label{fig:IOI-violin}
\end{figure}

\begin{figure}[ht]
    \centering
    \begin{tikzpicture}
        \node[anchor=south west,inner sep=0] (image) at (0,0) {\includegraphics[width=1\textwidth]{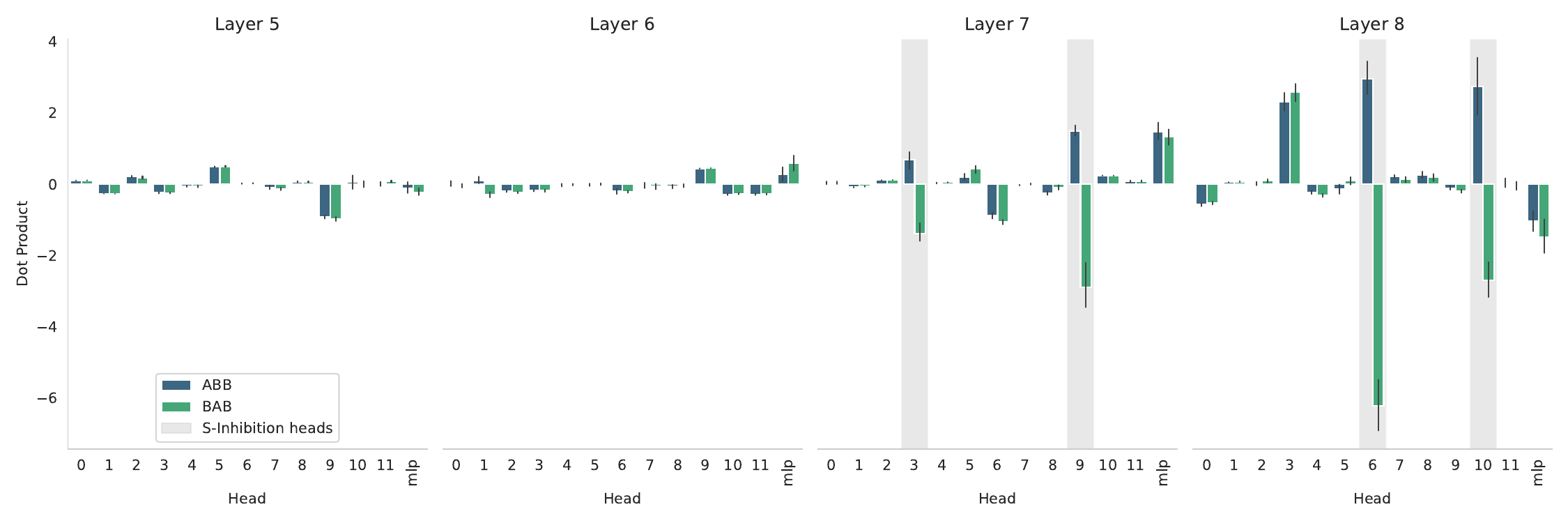}};
    \end{tikzpicture}
    \caption{S-Inhibition heads but not MLP8 write to the position subspace in the residual stream that is causally connected to the name movers on the IOI task}
    \label{fig:which-heads-to-resid}
\end{figure}

\begin{figure}[ht]
    \centering
    \begin{tikzpicture}
        \node[anchor=south west,inner sep=0] (image) at (0,0) {\includegraphics[width=.7\textwidth]{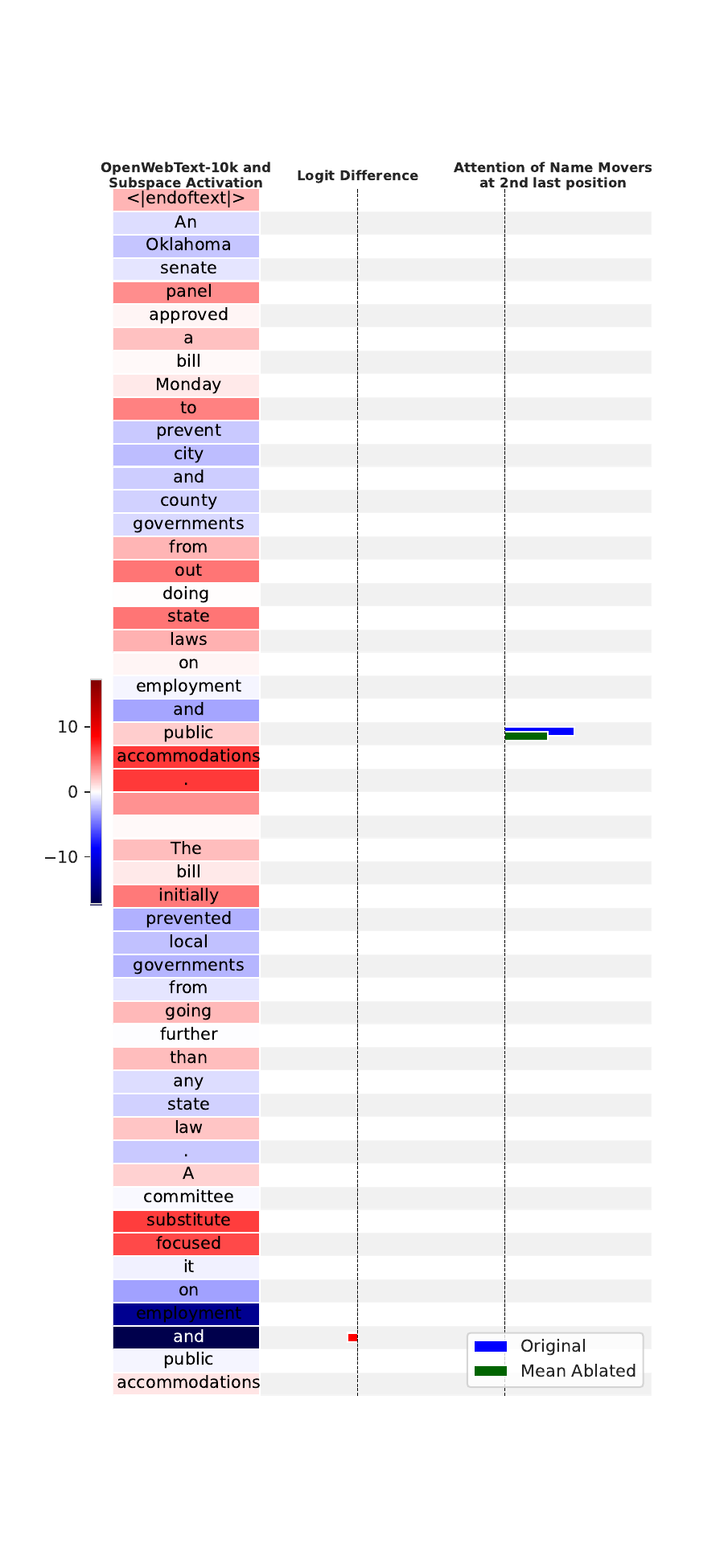}};
    \end{tikzpicture}
    \caption{The IOI position subspace generalizes to arbitrary OpenWebText prompts}
    \label{fig:openwebtext2}
\end{figure}

\begin{figure}[ht]
    \centering
    \begin{tikzpicture}
        \node[anchor=south west,inner sep=0] (image) at (0,0) {\includegraphics[width=.7\textwidth]{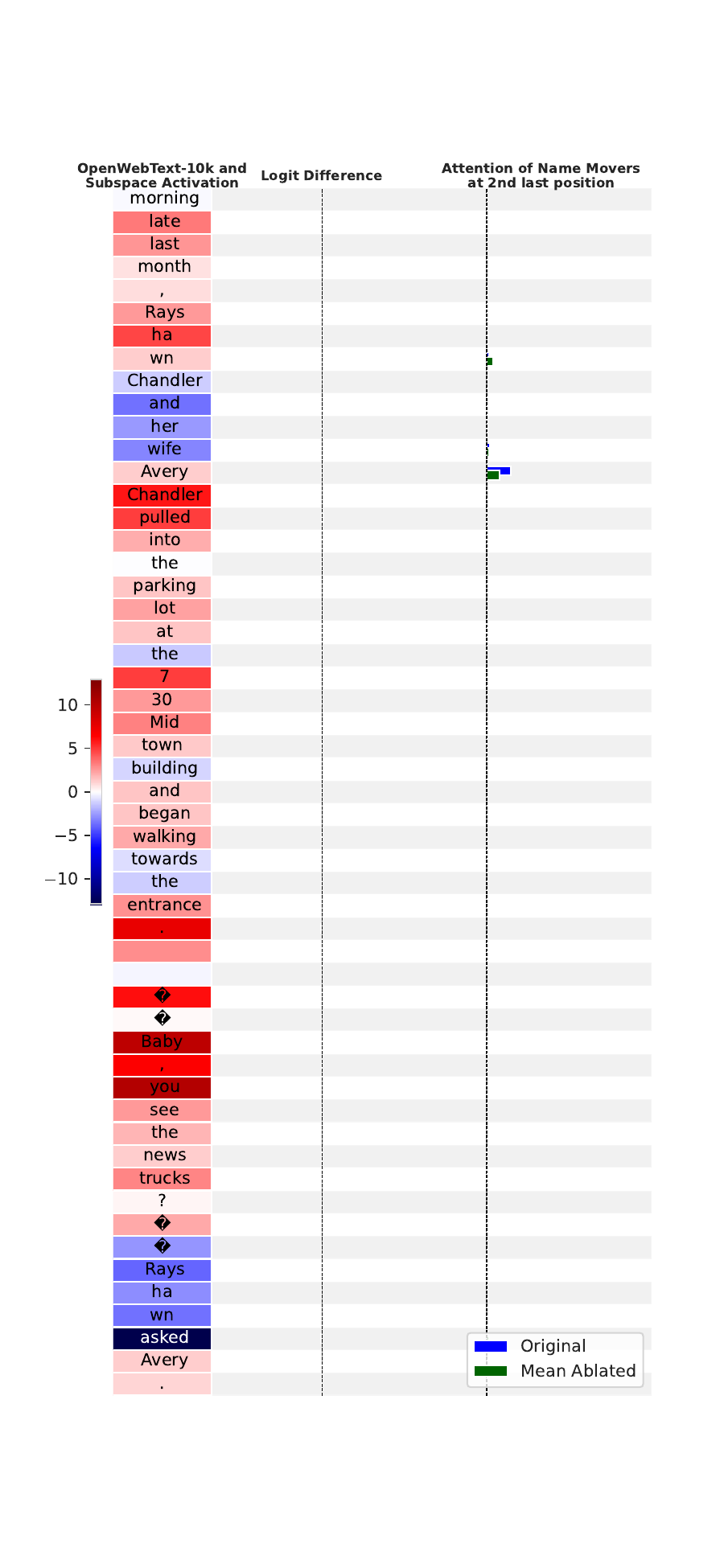}};
    \end{tikzpicture}
    \caption{}
    \label{fig:openwebtext8}
\end{figure}

\begin{figure}[ht]
    \centering
    \begin{tikzpicture}
        \node[anchor=south west,inner sep=0] (image) at (0,0) {\includegraphics[width=.7\textwidth]{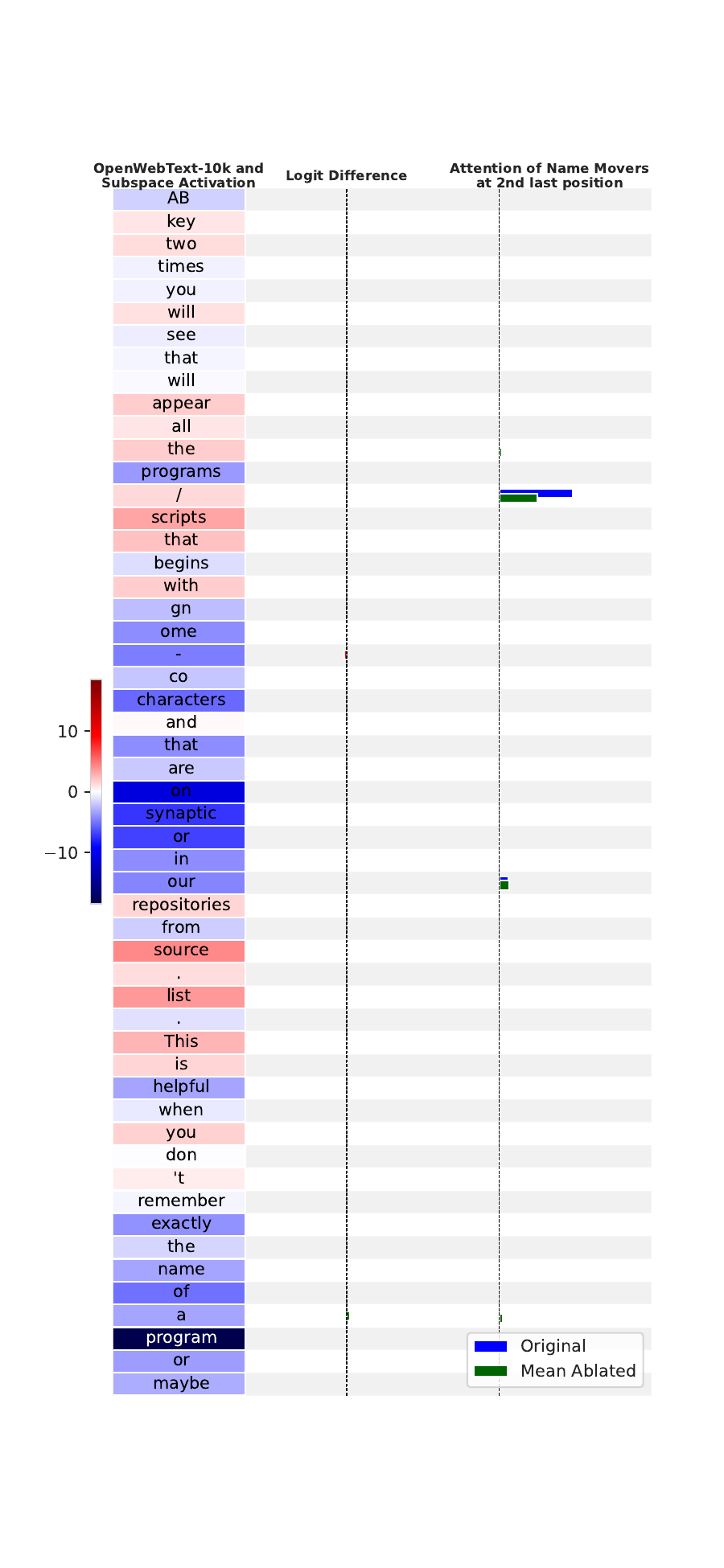}};
    \end{tikzpicture}
    \caption{}
    \label{fig:openwebtext10}
\end{figure}

\section{Additional details for Section \ref{sec:facts}}    
\label{app:rome}

\subsection{Dataset construction and training details}
\label{app:fact-patching-details}

We use the first 1000 examples from the \textsc{CounterFact} dataset
\citep{meng2022locating}. We filter the facts which GPT2-XL correctly recalls.
Out of the remaining facts, for each relation we form all pairs of distinct
facts, and we sample 5 such pairs from each relation with at least 5 facts. This
results in a collection of 40 fact pairs spanning 8 different relations. We then
use these facts as follows:
\begin{itemize}
\item for the ROME experiments in Subsection \ref{sub:from-rome-to-patch}, we
define edits by requesting one of the facts in each pair to be rewritten with
the object of the other fact;
\item for the activation patching experiments in Subsection
\ref{sub:fact-patching}, we patch from the last token of $s'$ in $B$ to the last
token of $s$ in $A$ (prior work has shown that the fact is retrieved on $s$
\citep{geva2023dissecting}), and we again use DAS \citet{geiger2023finding} to
optimize for a direction that maximizes the logit difference between $o'$ and
$o$.
\end{itemize}

\subsection{Additional fact patching experiments}
\label{app:additional-fact-patching}
In figure \ref{fig:fact-patching-violins}, we show the distribution of the fractional logit difference metric (see Subsection \ref{sub:ioi-methodology} for a definition) when patching between facts as described in Subsection \ref{sub:fact-patching}. Like in the related Figure \ref{fig:fact-patching}, we observe that, while patching along the directions found by DAS achieves strongly negative values (indicating that the facts are very often successfully changed by the patch), the interventions that replace the entire MLP layer or only the causally relevant component of the DAS directions have no such effect.

\begin{figure}
    \centering
    \begin{tikzpicture}
        \node [anchor=east, rotate=90] at (-0.15,4.2) {Fractional logit difference};
        \node [anchor=north] at (3.5,0.0) {Intervention layer};
        \node[anchor=south west,inner sep=0] (image) at (0,0) {\includegraphics[width=0.5\textwidth]{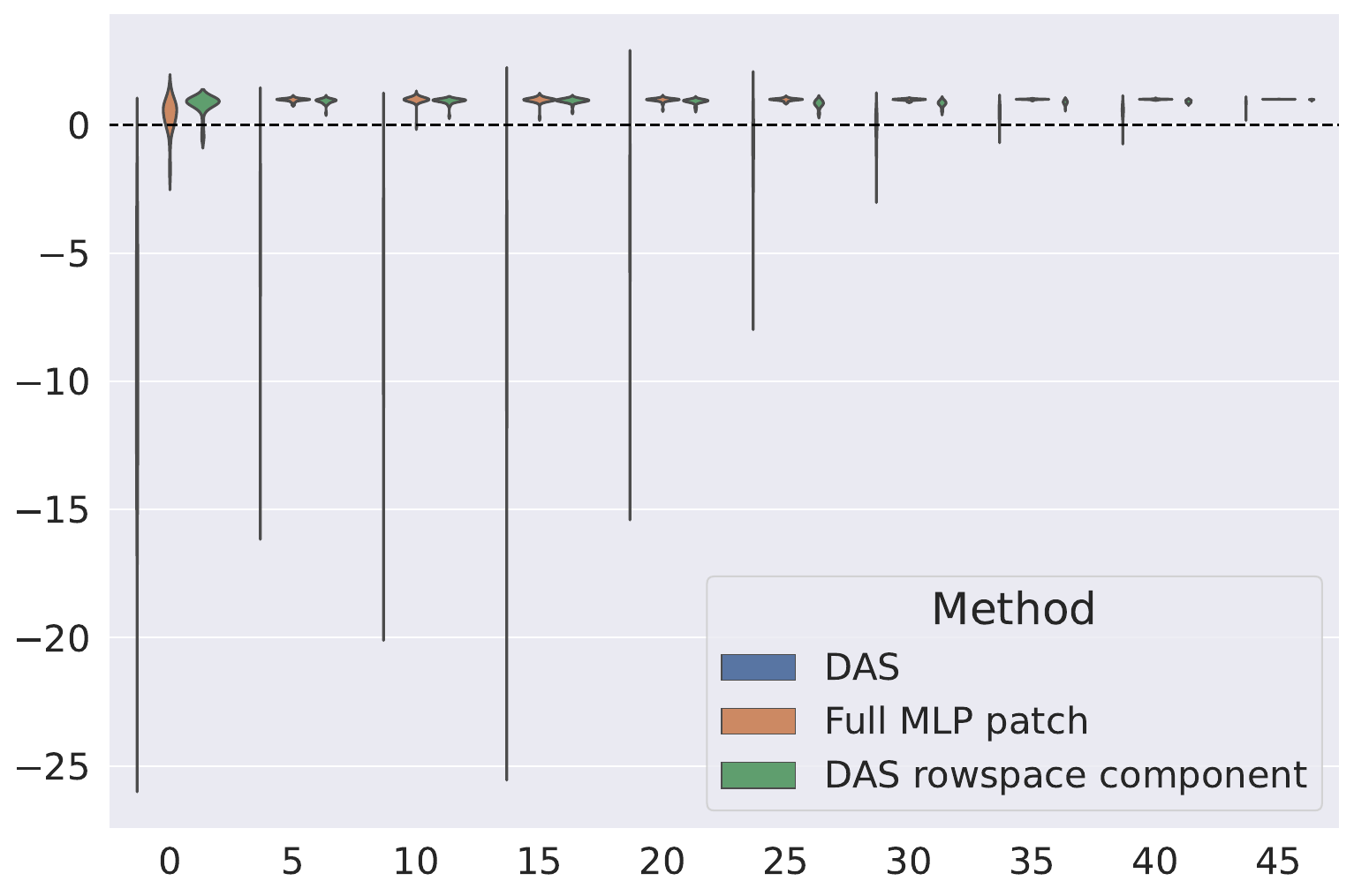}};
    \end{tikzpicture}
    \caption{Fractional logit difference distributions under three interventions: patching along the direction found by DAS (blue), patching the component of the DAS direction in the rowspace of $W_{out}$ (green), and patching the entire hidden MLP activation (orange).}
    \label{fig:fact-patching-violins}
\end{figure}

Next, we observe that the nullspace component of the patching direction is the one similar to the variation in the inputs (difference of last-token activations at the two subjects). Specifically, in Figure \ref{fig:fact-patching-correlation-violins}, we plot the (absolute value of the) cosine similarity between the difference in activations for the two last subject tokens, and the nullspace component of the DAS direction. We note that this similarity is consistently significantly high (note that it can be at most $1$, which would indicate perfect alignment).

\begin{figure}
    \centering
    \begin{tikzpicture}
        \node [anchor=east, rotate=90] at (-0.15,4.4) {Absolute cosine similarity};
        \node [anchor=north] at (3.5,0.0) {Intervention layer};
        \node[anchor=south west,inner sep=0] (image) at (0,0) {\includegraphics[width=0.5\textwidth]{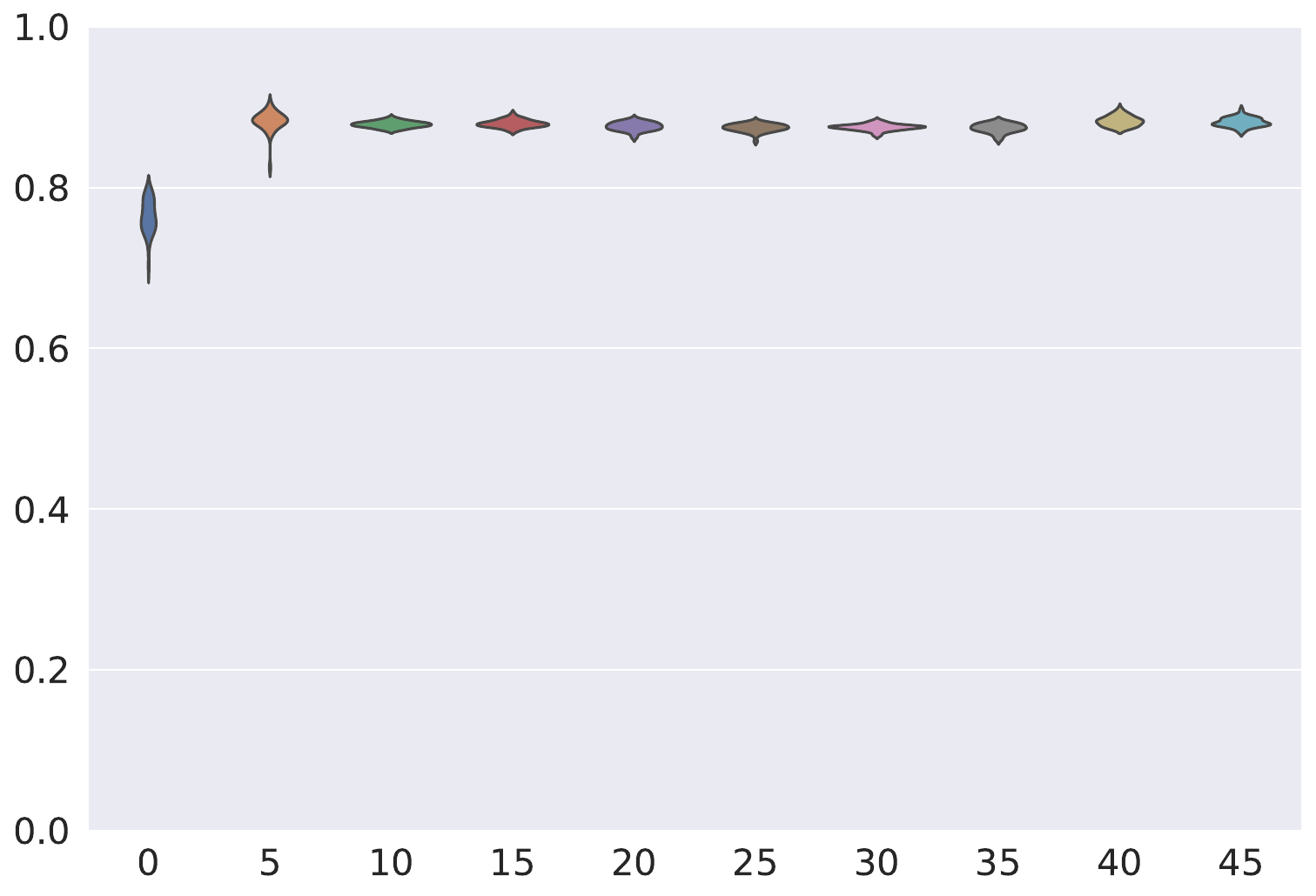}};
    \end{tikzpicture}
    \caption{Distribution of the absolute value of the cosine similarity between the nullspace component of the DAS fact patching directions and the difference in activations of the last tokens of the two subjects.}
    \label{fig:fact-patching-correlation-violins}
\end{figure}

Finally, we observe that the nullspace component of the patching direction is a non-trivial part of the direction in Figure \ref{fig:fact-patching-norm-violins}, where we plot the distribution of the $\ell_2$ norm of this component.
\begin{figure}
    \centering
    \begin{tikzpicture}
        \node [anchor=east, rotate=90] at (-0.15,4.6) {Norm of nullspace component};
        \node [anchor=north] at (3.5,0.0) {Intervention layer};
        \node[anchor=south west,inner sep=0] (image) at (0,0) {\includegraphics[width=0.5\textwidth]{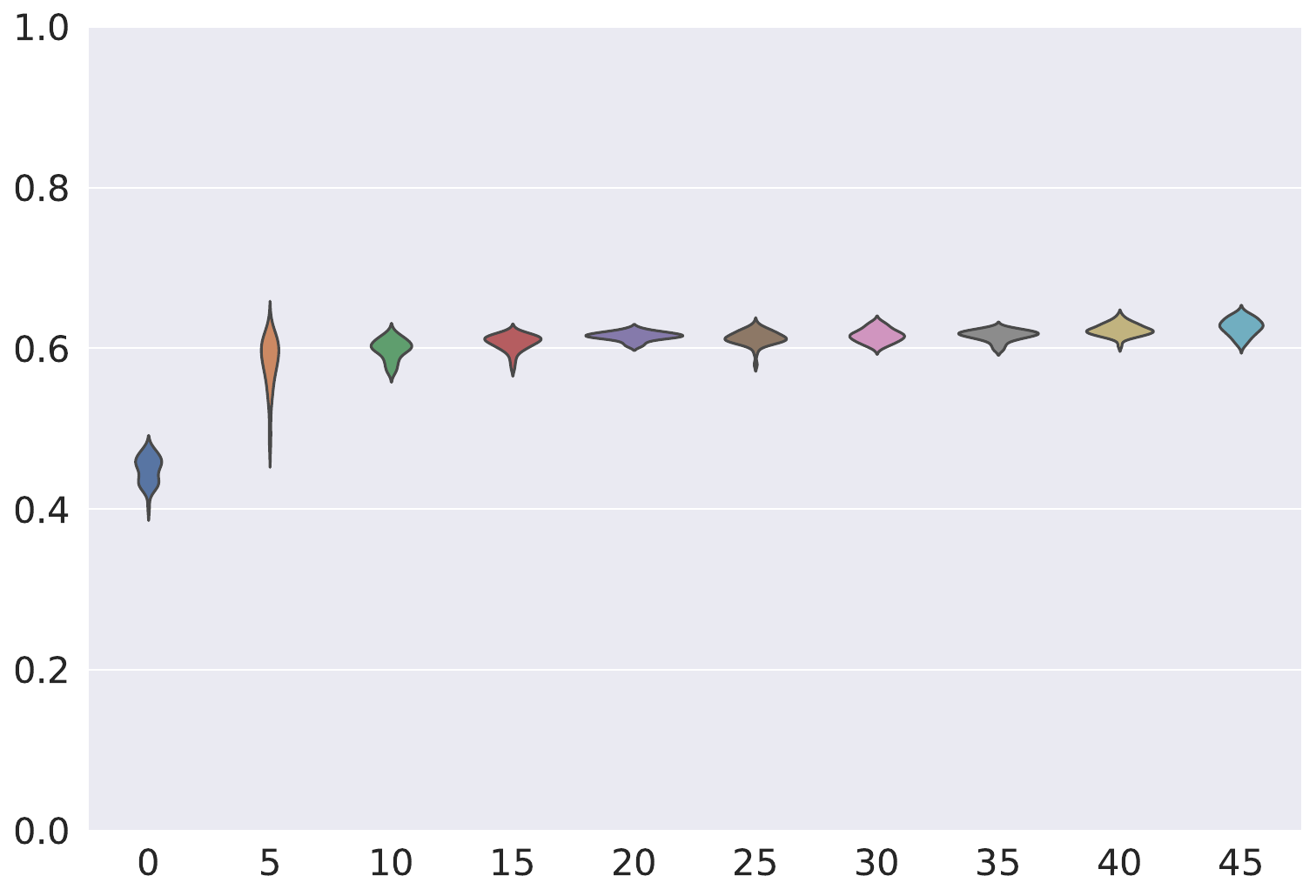}};
    \end{tikzpicture}
    \caption{Distribution of the norm of the nullspace component of the DAS direction across intervention layers.}
    \label{fig:fact-patching-norm-violins}
\end{figure}

\subsection{ROME implementation details}
\label{app:rome-implementation}
ROME takes as input a vector $\mathbf{k}\in\mathbb{R}^{d_{\text{MLP}}}$ representing the subject (e.g. an average of last-token representations of the subject) and a vector $\mathbf{v}\in\mathbb{R}^{d_{\text{resid}}}$ which, when output by the MLP layer, will cause the model to predict a new object for the factual prompt, but at the same time won't change other facts about the subject.
ROME modifies the MLP weight by setting $W_{out}' = W_{out} +
\mathbf{a}\mathbf{b}^\top$, where $\mathbf{a}\in\mathbb{R}^{d_{\text{resid}}},
\mathbf{b}\in\mathbb{R}^{d_{\text{MLP}}}$ are chosen so that $W_{out}'\mathbf{k}
= \mathbf{v}$, and the MLP's output is otherwise minimally changed. Without loss
of generality, the first condition implies that $\mathbf{a}= \mathbf{v} -
W_{out}\mathbf{k}$ and $\mathbf{b}^\top \mathbf{k} = 1$; the second condition is
then modeled by minimizing the variance of $\mathbf{b}^\top \mathbf{x}$ when
$\mathbf{x}\sim\mathcal{N}(0, \Sigma)$ for an empirical estimate
$\Sigma\in\mathbb{R}^{d_{\text{MLP}}\times d_{\text{MLP}}}$ of the covariance of
MLP activations (see Lemma \ref{lem:rome-as-opt} in Appendix \ref{app:rome} for
details and a proof). In all our experiments involving ROME, we use GPT2-XL
\citep{radford2019language}, and we use the precomputed values of $\Sigma$ from
\cite{meng2022locating} accessible online
\href{https://rome.baulab.info/data/stats/gpt2-xl/wikipedia_stats/}{here}.

\subsection{ROME as an Optimization Problem}
\label{app:rome-as-opt}

We now review the ROME method from \citet{meng2022locating} and show how it can be characterized as the solution of a simple optimization problem. Following the terminology of \ref{sub:patch-implies-rome}, let us have an MLP layer with an output projection \(W_{out}\), a key vector \(\mathbf{k}\in\mathbb{R}^{d_{\text{MLP}}}\) and a value vector \(\mathbf{v}\in \mathbb{R}^{d_{\text{resid}}}\).

In \citet{meng2022locating}, equation 2, the formula for the rank-1 update to \(W_{out}\) is given by
\begin{align}
\label{eq:rome}
    W_{out}' = W_{out} + (\mathbf{v}-W_{out}\mathbf{k})\frac{\mathbf{k}^\top\Sigma^{-1}}{\mathbf{k}^\top\Sigma^{-1}\mathbf{k}}
\end{align}
where \(\Sigma\) is an empirical estimate of the uncentered covariance of the pre-\(W_{out}\) activations. We derive the following equivalent characterization of this solution (which may be of independent interest):

\begin{lemma}
\label{lem:rome-as-opt}
    Given a matrix \(W_{out}\in\mathbb{R}^{d_{\text{resid}}\times
    d_{\text{MLP}}}\), a key vector \(\mathbf{k}\in\mathbb{R}^{d_{\text{MLP}}}\)
    and a value vector \(\mathbf{v}\in \mathbb{R}^{d_{\text{resid}}}\), let
    \(\Sigma\succ 0, \Sigma\in\mathbb{R}^{d_{\text{MLP}}\times d_{\text{MLP}}}\)
    be a positive definite matrix (specifically, the uncentered empirical
    covariance), and let \(\mathbf{x}\sim\mathcal{N}(0, \Sigma)\) be a
    normally distributed random vector with mean $0$ and covariance \(\Sigma\).
    Then, the ROME weight update is \(W_{out}' = W_{out} + \mathbf{a}\mathbf{b}^\top\) where \(\mathbf{a}\in \mathbb{R}^{d_{\text{resid}}}, \mathbf{b}\in \mathbb{R}^{d_{\text{MLP}}}\) solve the optimization problem
\begin{align*}
    \min_{\mathbf{a}, \mathbf{b}} \operatorname{trace}(\operatorname{Cov}_{\mathbf{x}}\left[W_{out}'\mathbf{x} - W_{out}\mathbf{x}\right])\quad \text{subject to}\quad W_{out}'\mathbf{k} = \mathbf{v}.
\end{align*}
In other words, the ROME update is the update that causes \( W_{out} \) to output \( \mathbf{v} \) on input \( \mathbf{k} \), and minimizes the total variance of the extra contribution of the update in the output of the MLP layer under the assumption that the pre-\( W_{out} \) activations are normally distributed with covariance \( \Sigma \).
\end{lemma}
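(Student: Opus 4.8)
The plan is to drop the rank-one parametrization, solve the problem over \emph{all} update matrices (not just rank-one ones), and then observe that the optimizer is automatically rank one and agrees with Equation~\ref{eq:rome}.

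First I would set $M := W_{out}' - W_{out}$, so the objective becomes $\operatorname{trace}(\operatorname{Cov}_{\mathbf{x}}[M\mathbf{x}])$ and the constraint becomes $M\mathbf{k} = \mathbf{c}$ with $\mathbf{c} := \mathbf{v} - W_{out}\mathbf{k}$. Since $\mathbf{x}\sim\mathcal{N}(0,\Sigma)$ has mean zero, so does $M\mathbf{x}$, hence $\operatorname{Cov}_{\mathbf{x}}[M\mathbf{x}] = \mathbb{E}[M\mathbf{x}\mathbf{x}^\top M^\top] = M\Sigma M^\top$, and therefore $\operatorname{trace}(M\Sigma M^\top) = \sum_i \mathbf{m}_i^\top \Sigma\, \mathbf{m}_i$, where $\mathbf{m}_i^\top$ is the $i$-th row of $M$. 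The key observation is that both the objective and the constraint $M\mathbf{k}=\mathbf{c}$ (equivalently, $\mathbf{m}_i^\top\mathbf{k} = c_i$ for each $i$) decouple completely across the rows of $M$.

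Next I would solve, for each row, the scalar problem $\min\ \mathbf{m}^\top\Sigma\mathbf{m}$ subject to $\mathbf{m}^\top\mathbf{k} = c_i$. Because $\Sigma\succ0$ this is a strictly convex quadratic with a single linear constraint, so the Lagrange stationarity condition $\Sigma\mathbf{m} = \mu\mathbf{k}$ is necessary and sufficient for the unique global minimum; it yields $\mathbf{m} = \mu\Sigma^{-1}\mathbf{k}$, and enforcing the constraint fixes $\mu = c_i/(\mathbf{k}^\top\Sigma^{-1}\mathbf{k})$ (with $\mathbf{k}^\top\Sigma^{-1}\mathbf{k}>0$). Stacking the rows gives the unique minimizer $M = \mathbf{c}\,(\mathbf{k}^\top\Sigma^{-1})/(\mathbf{k}^\top\Sigma^{-1}\mathbf{k})$, which has rank at most one --- so the restriction to updates $\mathbf{a}\mathbf{b}^\top$ is never actually binding --- and substituting back $\mathbf{c} = \mathbf{v} - W_{out}\mathbf{k}$ reproduces Equation~\ref{eq:rome} exactly. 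The only degenerate case is $\mathbf{v} = W_{out}\mathbf{k}$, where $\mathbf{c}=0$ and the optimal update is simply $M=0$, still consistent with the statement.

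The computation is routine; the one point needing care is the scaling redundancy of the $(\mathbf{a},\mathbf{b})$ parametrization of a rank-one matrix, which is exactly why it is cleaner to optimize directly over $M$ (or, equivalently, to quotient out the redundancy first by normalizing, say, $\mathbf{b}^\top\mathbf{k}=1$) rather than over $\mathbf{a}$ and $\mathbf{b}$ separately. Positive-definiteness of $\Sigma$ is used twice: to form $\Sigma^{-1}$, and to guarantee that the Lagrange stationary point is the unique global minimizer rather than merely a critical point.
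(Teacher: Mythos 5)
Your proof is correct, and it takes a mildly but genuinely different route from the paper's. The paper keeps the rank-one parametrization $M=\mathbf{a}\mathbf{b}^\top$ throughout: it uses the cyclic property of the trace to reduce the objective to $\l\|\mathbf{a}\r\|_2^2\,\mathbf{b}^\top\Sigma\mathbf{b}$, disposes of the scaling redundancy by fixing $\mathbf{a}=\mathbf{v}-W_{out}\mathbf{k}$ and $\mathbf{b}^\top\mathbf{k}=1$ ``without loss of generality,'' and then runs the same Lagrange-multiplier argument you do on the single constrained quadratic in $\mathbf{b}$. You instead relax to arbitrary update matrices $M$, decouple the problem across the rows of $M$, solve each row's strictly convex quadratic, and observe that the stacked optimizer $M=\mathbf{c}\,\mathbf{k}^\top\Sigma^{-1}/(\mathbf{k}^\top\Sigma^{-1}\mathbf{k})$ is automatically rank one. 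The core computation (a convex quadratic under one linear constraint, solved by stationarity of the Lagrangian, with $\Sigma\succ0$ guaranteeing uniqueness) is identical, but your organization buys two things the paper's does not: a clean way around the $(\mathbf{a},\mathbf{b})$ scaling ambiguity, and the strictly stronger conclusion that the ROME update is optimal among \emph{all} updates satisfying $W_{out}'\mathbf{k}=\mathbf{v}$, not merely among rank-one ones --- which immediately implies the lemma as stated. The only unstated hypothesis (shared with the paper) is $\mathbf{k}\neq 0$, needed for the constraint to be feasible.
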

\begin{proof}
We have $W_{out}' \mathbf{x} - W_{out} \mathbf{x} = \mathbf{a}\mathbf{b}^\top
\mathbf{x}$. Next, Using \(
\mathbb{E}_{\mathbf{x}}[\mathbf{x}\mathbf{x}^\top]=\Sigma \) and the cyclic
property of the trace, we see that
\begin{align*}
    \operatorname{trace}(\operatorname{Cov}_{\mathbf{x}}\left[W_{out}'\mathbf{x} - W_{out}\mathbf{x}\right]) = \|\mathbf{a}\|_2^2 \mathbf{b}^\top \Sigma \mathbf{b}
\end{align*}
We must have \( \mathbf{a}\mathbf{b}^\top \mathbf{k} = \mathbf{v} - W_{out}\mathbf{k} \), so without loss of generality we can rescale \( \mathbf{a}, \mathbf{b} \) so that \( \mathbf{a}=\mathbf{v}-W_{out}\mathbf{k} \). Then, we want to solve the problem
\begin{align*}
    \min_{\mathbf{b}} \mathbf{b}^\top\Sigma \mathbf{b} \quad \text{subject to}\quad \mathbf{b}^\top \mathbf{k} = 1
\end{align*}
which we can solve using Lagrange multipliers. The Lagrangian is
\begin{align*}
    \mathcal{L}(\mathbf{b}, \lambda) = \frac{1}{2}\mathbf{b}^\top\Sigma \mathbf{b} - \lambda \mathbf{b}^\top \mathbf{k}
\end{align*}
and the derivative w.r.t. \( \mathbf{b} \) is \( \Sigma \mathbf{b} - \lambda \mathbf{k} = 0 \), which tells us that \( \mathbf{b} \) is in the direction of \( \Sigma^{-1}\mathbf{k} \). Then the constraint \( \mathbf{b}^\top \mathbf{k} = 1 \) forces the constant of proportionality, and we arrive at \( \mathbf{b} = \frac{\mathbf{k}^\top\Sigma^{-1}}{\mathbf{k}^\top\Sigma^{-1}\mathbf{k}} \)
\end{proof}

\subsection{Connection between 1-dimensional activation patching and model editing}
\label{app:fact-lemma}
\begin{lemma}
\label{lem:patching-implies-rome}
    Given prompts A and B, two token positions $t_A$, $t_B$, and an MLP layer with output projection weight $W_{out}\in\mathbb{R}^{d_{\text{resid}}\times d_{\text{MLP}}}$, let $u_A, u_B\in\mathbb{R}^{d_{\text{MLP}}}$ be the respective (post-nonlinearity) activations at these token positions in this layer.
    If $v$ is a direction in the activation space of the MLP layer, then there exists a ROME edit $W_{out}' = W_{out} + ab^\top$ such that the activation patch from $u_B$ into $u_A$ along $v$ and the edit result in equal outputs of the MLP layer at token $t_A$ when run on prompt A. Moreover, the ROME edit is given by
\begin{align*}
a = \left((u_B - u_A)^\top v\right)W_{out}v \quad \text{ and any $b$ that satisfies }\quad b^\top u_A = 1.
\end{align*}
Choosing $b=\frac{\Sigma^{-1}u_A}{u_A^T\Sigma^{-1}u_A}$ minimizes the change to the model (in the sense of \citet{meng2022locating}) over all such rank-1 edits.
\end{lemma}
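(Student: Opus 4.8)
The plan is a one-line computation equating the MLP layer's output under the two interventions, followed by an appeal to Lemma~\ref{lem:rome-as-opt} to identify the variance-minimizing representative. First I would write down the patched activation. Running the model on prompt $A$ and patching from $u_B$ into $u_A$ along the unit direction $v$, Equation~\ref{eq:onedim-patch} (with $u_B$ playing the role of the source and $u_A$ that of the base) gives the post-nonlinearity activation at token $t_A$
\[
u_A^{\text{patched}} = u_A + \left((u_B - u_A)^\top v\right) v ,
\]
so that left-multiplying by $W_{out}$, the MLP layer's output at $t_A$ after the patch is $W_{out} u_A^{\text{patched}} = W_{out} u_A + \left((u_B - u_A)^\top v\right) W_{out} v$.

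On the other hand, a rank-1 edit $W_{out}' = W_{out} + ab^\top$ only modifies the down-projection, hence does not change the post-nonlinearity activation $u_A$ produced on prompt $A$ at token $t_A$ (the input to the edited layer is supplied by earlier, unmodified layers). Thus the edited model's MLP output at $t_A$ on prompt $A$ is $W_{out}' u_A = W_{out} u_A + (b^\top u_A)\, a$. Equating the two outputs, the edit reproduces the patch at token $t_A$ if and only if $(b^\top u_A)\, a = \left((u_B - u_A)^\top v\right) W_{out} v$; in particular, taking $a = \left((u_B - u_A)^\top v\right) W_{out} v$ together with any $b$ satisfying $b^\top u_A = 1$ works, which is the first claim. (Any other admissible $(a,b)$ is a rescaling $a \mapsto a/(b^\top u_A)$ of this family, and all such edits agree at $t_A$ but differ elsewhere in the model.)

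For the optimality claim, observe that the set of rank-1 edits reproducing the patch at $t_A$ is exactly the set of $W_{out}'$ satisfying $W_{out}' u_A = q$ where $q := W_{out} u_A + \left((u_B - u_A)^\top v\right) W_{out} v$; this is precisely the ROME constraint $W_{out}'\mathbf{k} = \mathbf{v}$ of Lemma~\ref{lem:rome-as-opt} with key $\mathbf{k} = u_A$ and value vector $\mathbf{v} = q$. Applying that lemma, the edit in this family minimizing $\operatorname{trace}\!\left(\operatorname{Cov}_{\mathbf{x}}[W_{out}'\mathbf{x} - W_{out}\mathbf{x}]\right)$ over $\mathbf{x}\sim\mathcal{N}(0,\Sigma)$ — i.e., the minimal change to the model in the sense of \citet{meng2022locating} — has $b = \frac{\Sigma^{-1} u_A}{u_A^\top \Sigma^{-1} u_A}$ (the lemma's $\mathbf{b}$ specialized to $\mathbf{k}=u_A$, using that $\Sigma$ is symmetric), and its normalization fixes $a = q - W_{out}\mathbf{k} = \left((u_B - u_A)^\top v\right) W_{out} v$, consistent with the first part. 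There is no real obstacle here; the only points requiring care are (i) getting the direction of the patch right, so that patching from $u_B$ into $u_A$ adds $\left((u_B - u_A)^\top v\right) v$ rather than its negative, and (ii) recognizing that ``equal MLP output at $t_A$ on prompt $A$'' is a ROME-type affine constraint with key $u_A$, so Lemma~\ref{lem:rome-as-opt} applies verbatim and nothing further needs to be optimized by hand.
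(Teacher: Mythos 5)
Your proof is correct and follows essentially the same route as the paper's: compute the patched activation via Equation~\ref{eq:onedim-patch}, match the extra contribution $(b^\top u_A)a$ of the rank-1 edit to $\left((u_B-u_A)^\top v\right)W_{out}v$ using $b^\top u_A=1$, and then minimize $b^\top\Sigma b$ subject to that constraint. The only cosmetic difference is that you invoke Lemma~\ref{lem:rome-as-opt} directly for the optimality of $b=\Sigma^{-1}u_A/(u_A^\top\Sigma^{-1}u_A)$, whereas the paper re-runs the same Lagrangian computation in place; these are equivalent.
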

\begin{proof}
    The activation after patching from B into A along $v$ is $u_A' = u_A + ((u_B - u_A)^\top v)v$, which means that the change in the output of the MLP layer at this token will be
\begin{align*}
    W_{out}u_A' - W_{out}u_A = ((u_B - u_A)^\top v)W_{out}v
\end{align*}
The change introduced by a fact edit at this token is 
\begin{align*}
    W_{out}'u_A - W_{out}u_A = ab^\top u_A = \left(b^\top u_A\right)\left((u_B - u_A)^\top v\right)W_{out}v
\end{align*}
and the two are equal because $b^\top u_A = 1$. 

To find the $b$ that minimizes the change to the model, we minimize the variance of $b^\top x$ when $x\sim\mathcal{N}(0, \Sigma)$ subject to $b^\top u_A = 1$. The variance is equal to $b^\top\Sigma b$, so we have a constrained (convex) minimization problem
\begin{align*}
    \min \frac{1}{2}b^\top\Sigma b \quad \text{subject to}\quad b^\top u_A = 1
\end{align*}
The rest of the proof is the same as in Lemma \ref{lem:rome-as-opt}. Namely, we can solve this optimization problem using Lagrange multiplies. The Lagrangian is
\begin{align*}
    \mathcal{L}(b, \lambda) = \frac{1}{2}b^\top\Sigma b - \lambda b^\top u_A
\end{align*}
and the derivative w.r.t. $b$ is $\Sigma b - \lambda u_A = 0$, which tells us that $b$ is in the direction of $\Sigma^{-1}u_A$. Then the constraint $b^\top u_A = 1$ forces the constant of proportionality.
\end{proof}

\subsection{Additional experiments comparing fact patching and rank-1 editing}
\label{app:rome-extra-exps}
In Figure \ref{fig:fact-patching-rome-logitdiff}, we plot the distributions of the logit difference between the correct object for a fact and the object we are trying to substitute when patching the 1-dimensional subspaces found by DAS, and performing the equivalent rank-1 weight edit according to Lemma \ref{lem:patching-implies-rome}. We observe that the two metrics quite closely track each other, indicating that the additional effects of using a weight edit (as opposed to only intervening at a single token) are negligible.

Similarly, in Figure \ref{fig:fact-patching-rome-accuracy}, we show the success rate of the the two methods in terms of making the model output the object of the fact we are patching from. Again, we observe that they quite closely track each other. 

\begin{figure}
    \centering
    \begin{tikzpicture}
        \node [anchor=east, rotate=90] at (-0.15,3.9) {Logit difference};
        \node [anchor=north] at (3.5,0.0) {Intervention layer};
        \node[anchor=south west,inner sep=0] (image) at (0,0) {\includegraphics[width=0.5\textwidth]{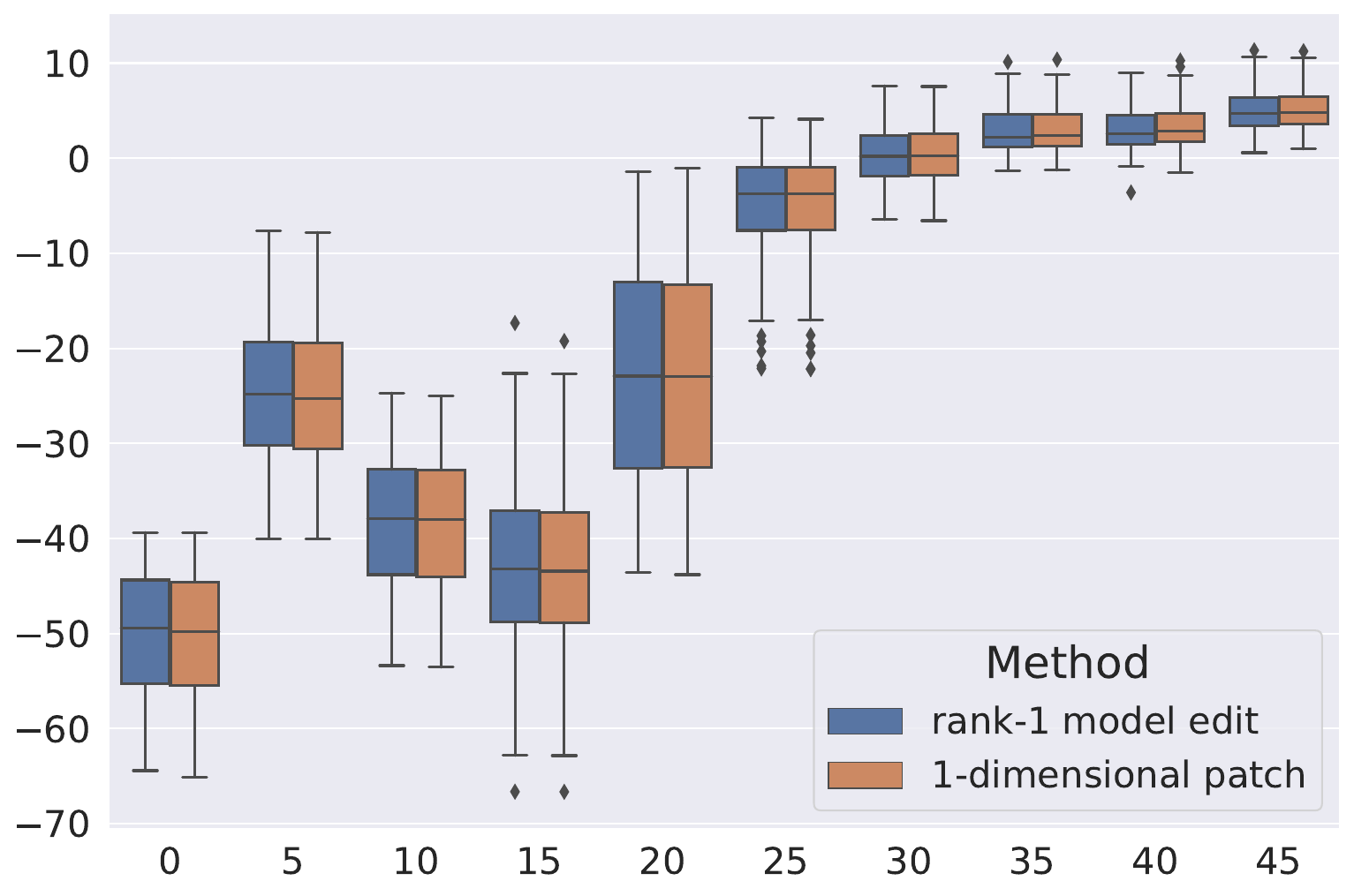}};
    \end{tikzpicture}
    \caption{Comparison of logit difference between 1-dimensional fact patches and their derived rank-1 model edits}
    \label{fig:fact-patching-rome-logitdiff}
\end{figure}

\begin{figure}
    \centering
    \begin{tikzpicture}
        \node [anchor=east, rotate=90] at (-0.15,4.3) {Fact patch success rate};
        \node [anchor=north] at (3.5,0.0) {Intervention layer};
        \node[anchor=south west,inner sep=0] (image) at (0,0) {\includegraphics[width=0.5\textwidth]{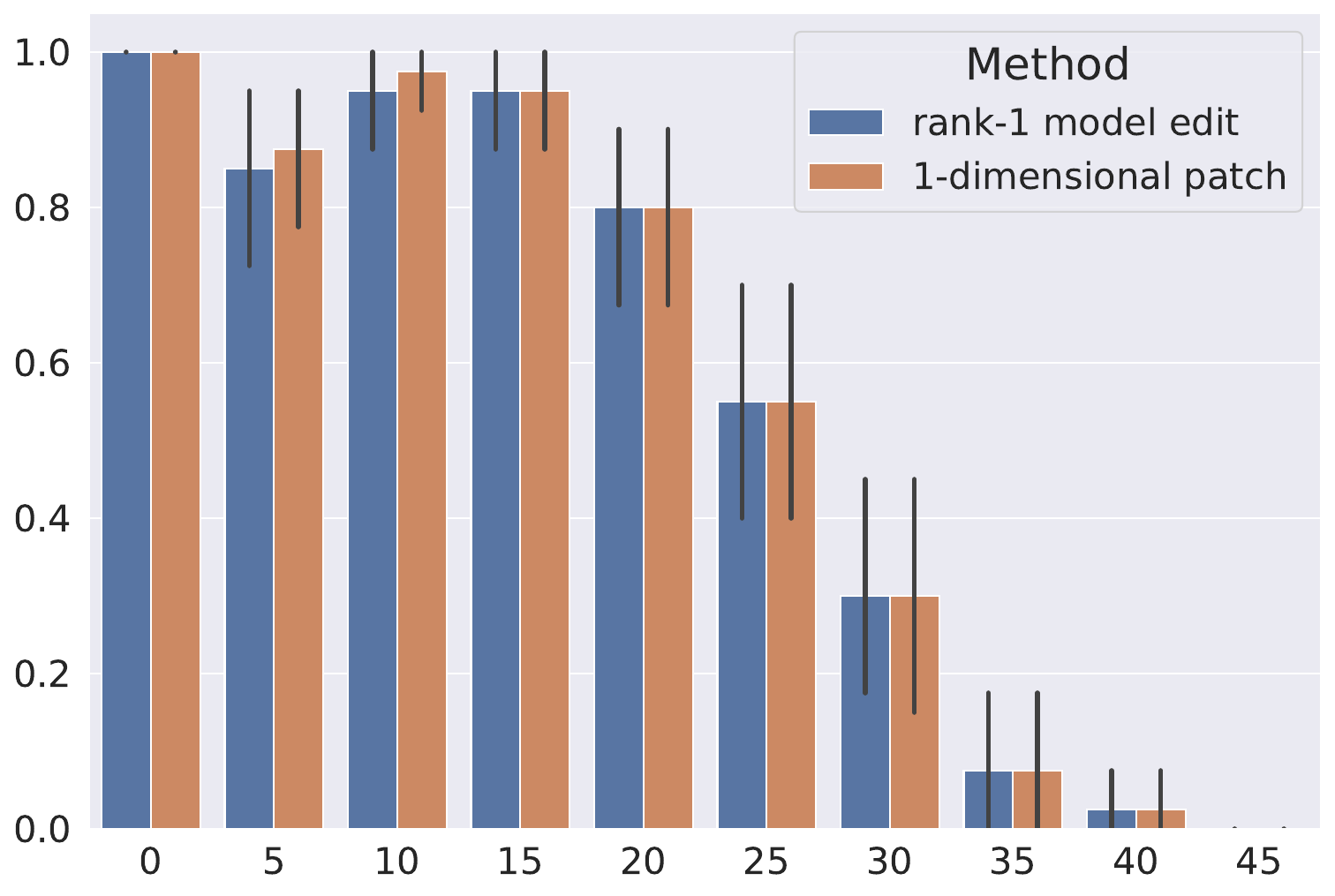}};
    \end{tikzpicture}
    \caption{Comparison of fact editing success rate between 1-dimensional fact patches and their derived rank-1 model edits}
    \label{fig:fact-patching-rome-accuracy}
\end{figure}

\subsection{From Rank-1 Model Edits to Subspace Interventions}
\label{app:from-rank1-to-subspace}

In this section, we describe how, given a rank-1 edit $W_{out}'=W_{out} + ab^T$, to
obtain a direction $v\in \mathbb{R}^{d_{MLP}}$ such that intervening on the
model by setting the projection on $v$ to some constant $c\in \mathbb{R}$ (at
each token) is approximately equivalent to intervening via the rank-1 edit. 

Specifically, given an activation $x\in \mathbb{R}^{d_{MLP}}$, the patched
activation is $x' = x + \l(c - v^Tx\r)v$ and the extra contribution of the
subspace intervention to the output of the MLP layer will be
\begin{align*}
    \operatorname{contrib}_{\text{subspace}}(x) = W_{out}x' - W_{out}x = \l(c-v^Tx\r)Wv.
\end{align*}
Similarly, the extra contribution of the rank-1 edit to the output of the MLP
layer is
\begin{align*}
    \operatorname{contrib}_{\text{rank-1}}(x) = W_{out}'x - W_{out}x = \l(b^Tx\r)a.
\end{align*}
Recall (see Appendix \ref{app:rome-as-opt}) that the ROME method 
\citep{meng2022locating} implicitly treats the activation $x$ as a random vector
sampled from $\mathcal{N}\l(0,\Sigma\r)$ where $\Sigma$ is an empirical estimate
of the covariance. In particular, this distribution is used to quantify the
amount to which a rank-1 edit changes the model.

Motivated by this, we formalize approximating the rank-1 edit by the subspace
intervention using the following criteria analogous to the ROME method:
\begin{itemize}
\item $\mathbb{E}_{x\sim \mathcal{N}(0, \Sigma)} \left[
        \operatorname{contrib}_{\text{subspace}}(x) \right] = \mathbb{E}_{x\sim \mathcal{N}(0, \Sigma)} \left[
        \operatorname{contrib}_{\text{rank-1}}(x) \right] $, i.e. the
        interventions have the same expectation;
\item $W_{out}v \parallel a$, i.e. the interventions point in the same direciton;
\item $\operatorname{trace} \l(\operatorname{Cov}_{x}\left[
    \operatorname{contrib}_{\text{subspace}}(x) - 
    \operatorname{contrib}_{\text{rank-1}}(x)
    \right]\r)$ is minimized, i.e. the two interventions are maximally similar
    with respect to the activation distribution modeled as $x\sim \mathcal{N}
    \l(0, \Sigma\r)$ (this is the criterion used by ROME; recall
    \ref{app:rome-as-opt}).
\end{itemize}

The expectation of $\operatorname{contrib}_{\text{rank-1}}(x)$ is zero, while
the expectation of $ \operatorname{contrib}_{\text{subspace}}(x)$ is
$cW_{out}v$, and since $W_{out}v=0$ would lead to a trivial intervention, we must have
\begin{align*}
    c=0.
\end{align*}
Next, to ensure $W_{out}v\parallel a$, we have to pick $v =\alpha W_{out}^+a +
u$ where $u\in \ker W_{out}$. With this, the covariance minimization can then be
written as
\begin{align*}
    \min_{\alpha, v} \l\|a\r\|_2^2 \l(b+\alpha v\r)^T \Sigma \l(b+\alpha v\r)
\end{align*}
(this is a similar derivation to the one in Appendix \ref{app:rome-as-opt}).
After removing constant terms and setting $w=u/\alpha$, we are left with
\begin{align*}
\min_{\alpha, w}\left[\alpha^4 \l(W_{out}^+a+w\r)^T\Sigma \l(W_{out}^+a+w\r) + 2\alpha^2 b^T\Sigma \l(W_{out}^+a+w\r)\right].
\end{align*}
subject to $W_{out}w=0$. The Lagrangian is
\begin{align*}
    \mathcal{L}\l(\alpha, w,\lambda\r) = \alpha^4 \l(W_{out}^+a+w\r)^T\Sigma
    \l(W_{out}^+a+w\r) + 2\alpha^2 b^T\Sigma \l(W_{out}^+a+w\r) + \lambda^T
    W_{out}w
\end{align*}
with the first-order conditions
\begin{align*}
    \frac{\partial \mathcal{L}}{\partial w} = 2\alpha^4 \Sigma
    \l(W_{out}^+a+w\r)+2\alpha^2 \Sigma b + W_{out}^T\lambda = 0
\end{align*}
and $\partial \mathcal{L} / \partial \lambda = W_{out}w = 0$. Multiplying the
$w$ derivative with $W_{out}\Sigma^{-1}$ on the left gives us a linear system
for $\lambda$:
\begin{align*}
    W_{out}\Sigma^{-1}W_{out}^T\lambda = -2\alpha^2W_{out}b -2\alpha^4a,
\end{align*}
which can be solved assuming we know $\alpha$, and then substituting $\lambda$
in $\frac{\partial \mathcal{L}}{\partial w} = 0$ gives us $w$. In practice, we
guess several values for $\alpha$ (typically, $\alpha^2=0.05$ performs best) and
pick the one resulting in the best value for the objective.

\section{Additional Details for Section \ref{sec:prevalent}}
\label{app:prevalent}

\subsection{Prevalence of Causal Directions in MLP Layers}
\label{subsection:dormant-evidence}

Given an MLP activation $\mathbf{x}$ and a vector $\mathbf{u}\in
\mathbb{R}^{d_{MLP}}$, changing the projection of $\mathbf{x}$ on $\mathbf{u}$
means replacing $\mathbf{x}$ with the new activation $\mathbf{x}' = \mathbf{x} +
\alpha \mathbf{u}$ for some $\alpha\in \mathbb{R}$. This translates to the new
output of the MLP layer being
\begin{align*}
    W_{out} \l(\mathbf{x} + \alpha \mathbf{u}\r) = W_{out} \mathbf{x} + \alpha W_{out} \mathbf{u}.
\end{align*}
Under our assumptions, the direction $\mathbf{u}$ will be causally relevant if
the extra contribution to the residual stream $\alpha W_{out} \mathbf{u}$ points
along $\mathbf{v}$; thus it suffices to find a $\mathbf{u}$ such that $W_{out}
\mathbf{u} \parallel \mathbf{v}$.

As it turns out, we can simply choose $\mathbf{u} = W_{out}^+ \mathbf{v}$.
Indeed, we empirically observe that $W_{out}\in \mathbb{R}^{d_{resid}\times
d_{MLP}}$ is a full-rank matrix\footnote{This is also heuristically plausible:
models want to maximize their expressive capacity, and pre-training datasets are
very complex, so making $W_{out}$ low-rank would not be preferred by
optimization.}, with almost all singular values bounded well
away from $0$ (see Appendix \ref{app:singular}). Since $d_{MLP} > d_{resid}$, it
follows that $W_{out} W_{out}^+ \mathbf{v} = \mathbf{v}$. This establishes that
$\mathbf{u}$ is a causal direction.

\subsection{Prevalence of Directions Discriminating for $C$ in MLP layers}
\label{subsection:disconnected-evidence}

For a feature $\mathbf{u}\in \mathbb{R}^{d_{MLP}}$ to 
discriminate between values of $C$, we need projections of the post-nonlinearity
activations on $\mathbf{u}$ to linearly separate examples according to the
values of $C$. 
By assumption, $\mathbf{v}$ is a good linear separator for the values of $C$ in
the residual stream. We can thus frame our goal as a more general question: 
\begin{center}
  \emph{
  If two sets of activations are linearly separable in the residual stream, are
  their images after the non-linearity also (approximately) linearly separable?
  }
\end{center}
The transformation from residual vectors $\mathbf{x}\in \mathbb{R}^{d_{resid}}$
to post-nonlinearity activations is given by the steps
\begin{align*}
    \mathbf{x}\mapsto \operatorname{LayerNorm}\l(W_{in}\mathbf{x}\r)\mapsto
    \operatorname{gelu}\l(\operatorname{LayerNorm}\l(W_{in}\mathbf{x}\r)\r)
\end{align*}
The composition of $\operatorname{LayerNorm}$ and $W_{in}$ is approximately a
linear operation \citep{elhage2021mathematical}, so the values of the concept
$C$ are also linearly separated in the pre-$\operatorname{gelu}$ activations.
However, it is not a priori clear if the $\operatorname{gelu}$ operation
(approximately) preserves linear separability. 

We show ample empirical evidence in Appendix \ref{app:linear-separability-proof}
that this transformation approximately preserves the Euclidean geometry of
activations in a certain restricted sense; then, we prove that this preservation
implies that points remain approximately linearly separable after this
transformation. We further argue this empirically in Appendix
\ref{app:linearly-separable}, where we show that linear separability is
approximately preserved in MLP activations for random directions $\mathbf{v}$ in
the residual stream.

\subsection{Empirical Analysis of Distortion Introduced by the Non-linearity}
\label{app:distortion-experiments}

\paragraph{Methodology.} We use the first 10K texts of OpenWebText dataset
\citep{gokaslan2019open}. Each of these texts contains 1,024 tokens; we pass
each text through GPT-2 Small, and for each layer collect the
pre-$\operatorname{gelu}$ activations $\mathbf{x}_i$ of the MLP layer, as well
as the values $\mathbf{z}_i = \operatorname{proj}_{\ker W_{out}}
\l(\operatorname{gelu}\l(\mathbf{x}_i\r)\r)$. 
We sample 250 quadruples of distinct $i, j, k, l$ per text, and compute the
values 
\begin{align*}
    a_{ijkl} = \l(\mathbf{x}_i-\mathbf{x}_j\r)^\top \l(\mathbf{x}_k-\mathbf{x}_l\r)
    \\
    b_{ijkl} = \l(\mathbf{z}_i-\mathbf{z}_j\r)^\top \l(\mathbf{z}_k-\mathbf{z}_l\r)
\end{align*}
We collect these numbers across the first 1000 texts out of the first 10K,
resulting in 250K datapoints per layer, and perform linear regression of $b_i$
against $a_i$. 

We note that there is some inherent linearity in the quantities
$a_{ijkl},b_{ijkl}$ that could in principle skew the results of the linear
regression towards a higher $r^2$ statistic in the presence of enough samples.
In particular, there are linear dependencies of the form 
\begin{align*}
    a_{ijkl} = a_{ijkp} + a_{ijpl}
\end{align*}
for any $p$, and similarly for the $b_{ijkl}$ quantities. This makes these
quantities potentially misleading targets for linear regression. However, in our
regime, we sample 250 4-element subsets from the set $\{1,\ldots, 1024\}$, and
the probability of sampling quadruples that are linearly related is quite small.

\paragraph{Results.} We find that the coefficients of determination $r^2$ for
the linear regression are consistently high ($\approx 0.8$ or higher) for all
layers except for layer 0, indicating a high degree of fit. The $r^2$ values are
given in Table TODO, and regression plots are shown in Figure TODO. We also
remark that the intercept coefficients are $\approx0$ relative to the standard
deviation in the dependent variable. 

\subsection{Details for Subsection \ref{subsection:disconnected-evidence}}
\label{app:linear-separability-proof}

We will show the stronger property that activations remain approximately
linearly separable even after projecting on the kernel of $W_{out}$. Define the
function
\begin{align*}
    f:\mathbf{x}' \mapsto \operatorname{gelu}\l(\mathbf{x}'\r)
\mapsto \operatorname{proj}_{\ker W_{out}}
\l(\operatorname{gelu}\l(\mathbf{x}'\r)\r)
\end{align*}
where $\operatorname{proj}_{\ker W_{out}}$ is orthogonal projection on the
kernel of $W_{out}$. 

To overcome the non-linearity of $f$, we establish an empirical property
of $f$ on activations from the model's pre-training distribution. Specifically,
we show that $f$ approximately preserves the Euclidean geometry of activations
in a certain restricted sense:
\begin{align}
\label{eq:distortion-approximation}
    \l(f(\mathbf{x}_i) - f(\mathbf{x}_j)\r)^\top 
    \l(f(\mathbf{x}_k)-f(\mathbf{x}_l)\r) \approx 
    \lambda
    \l(\mathbf{x}_i-\mathbf{x}_j\r)^\top \l(\mathbf{x}_k-\mathbf{x}_l\r) + \eta
\end{align}
for $\lambda > 0$ and $\eta\approx0$ (relative to the standard deviation of the
expression on the left-hand side of the approximation). Specifically, we perform
linear regression of $\l(f(\mathbf{x}_i) - f(\mathbf{x}_j)\r)^\top 
\l(f(\mathbf{x}_k)-f(\mathbf{x}_l)\r)$ using
$\l(\mathbf{x}_i-\mathbf{x}_j\r)^\top \l(\mathbf{x}_k-\mathbf{x}_l\r)$ as the
predictor variable in Appendix \ref{app:distortion-experiments}, and find very
high coefficients of determination ($r^2\approx0.8$) in all layers except for
layer $0$. To avoid relying solely on the coefficient of determination, we also
generate regression plots for the data; see Figure
\ref{fig:distortion-regression} for regression lines over samples of $10^4$
points from each layer of GPT2-Small.

\begin{figure}
    \includegraphics[width=0.9\textwidth]{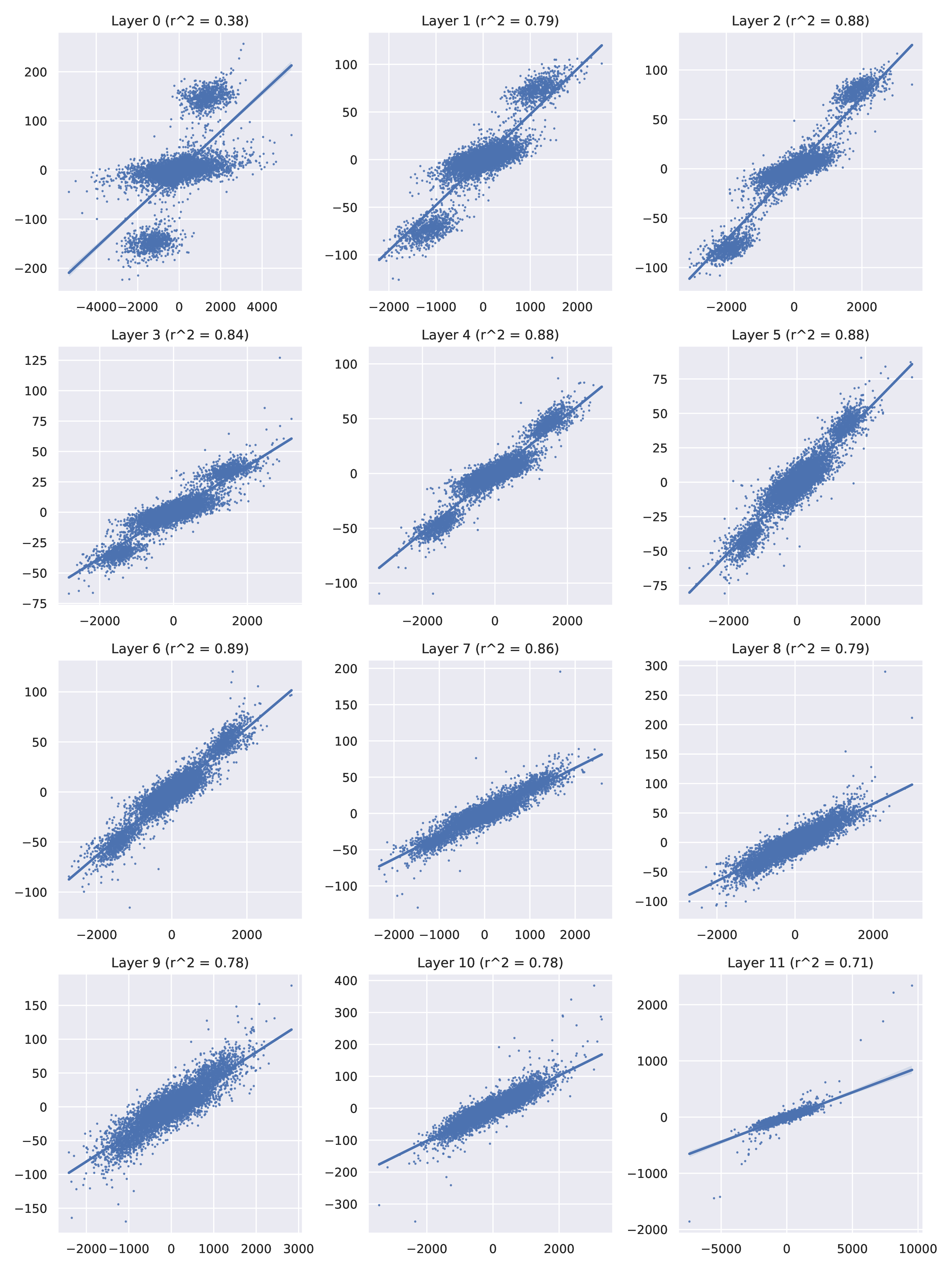}
    \caption{Regression plots accompanying the experiments in Appendix \ref{app:linear-separability-proof}.}
    \label{fig:distortion-regression}
\end{figure}

Finally, we prove that $f$ maintains linear separability if we assume that
Equation \ref{eq:distortion-approximation} holds exactly with $\eta=0$:
\begin{lemma}
\label{lemma:linear-separability}
Let $x_i\in \mathbb{R}^d, 1\leq i \leq n$ be linearly separable with respect to
binary labels $y_i\in\{-1,1\}$. Let $f:\mathbb{R}^d\to \mathbb{R}^d$ be a
transformation with the property that
\begin{align*}
  \l(f(x_i)-f(x_j)\r)^\top \l(f(x_k)-f(x_l)\r)=
    \lambda\l(x_i-x_j\r)^\top \l(x_k-x_l\r) 
\end{align*}
for all distinct $i,j,k,l$ and some $\lambda > 0$. Then, $f(x_i)$ are linearly
separable with respect to the labels $y_i$ as well.
\end{lemma}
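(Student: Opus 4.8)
The plan is to leverage the hypothesis as a statement about the Gram matrix of pairwise differences, and then reconstruct a separating hyperplane for the $f(x_i)$ directly from one for the $x_i$. First I would fix a linear separator for the original points: since the $x_i$ are linearly separable with respect to the labels $y_i$, there is a vector $w$ and a threshold $\theta$ with $y_i(w^\top x_i - \theta) > 0$ for all $i$. Write $w$ as a linear combination adapted to differences — concretely, pick any index $i_0$ and note that $w^\top x_i - w^\top x_{i_0} = w^\top(x_i - x_{i_0})$, so the separating behavior is entirely controlled by the inner products $w^\top(x_i - x_{i_0})$. The key move is to express $w$ itself (or at least its action on the affine span of the data) in terms of the difference vectors $x_k - x_l$, so that the hypothesis on $\big(f(x_i) - f(x_j)\big)^\top\big(f(x_k) - f(x_l)\big) = \lambda (x_i - x_j)^\top (x_k - x_l)$ can be applied.

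The cleanest way to do this: let $V = \operatorname{span}\{x_i - x_1 : i\}$ be the difference subspace, and let $\bar{w}$ be the orthogonal projection of $w$ onto $V$; replacing $w$ by $\bar{w}$ changes none of the quantities $w^\top(x_i - x_j)$, so $\bar{w}$ still separates, and now $\bar{w} = \sum_k c_k (x_k - x_1)$ for some scalars $c_k$. Define the candidate separator in the image space by $\tilde{w} = \sum_k c_k \big(f(x_k) - f(x_1)\big)$. Then for any $i$,
\begin{align*}
\tilde{w}^\top\big(f(x_i) - f(x_1)\big) = \sum_k c_k \big(f(x_k) - f(x_1)\big)^\top\big(f(x_i) - f(x_1)\big) = \lambda \sum_k c_k (x_k - x_1)^\top (x_i - x_1) = \lambda\, \bar{w}^\top (x_i - x_1),
\end{align*}
where the middle equality is exactly the hypothesis applied termwise (with the convention that the $k = 1$ or $i = 1$ terms vanish, which is consistent). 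Since $\lambda > 0$, the signs of $\tilde{w}^\top\big(f(x_i) - f(x_1)\big)$ match those of $\bar{w}^\top(x_i - x_1) = \bar{w}^\top x_i - \bar{w}^\top x_1$; choosing the threshold $\tilde\theta = \tilde{w}^\top f(x_1) + \lambda(\theta - \bar{w}^\top x_1)$ then gives $y_i\big(\tilde{w}^\top f(x_i) - \tilde\theta\big) = \lambda\, y_i\big(\bar{w}^\top x_i - \theta\big) > 0$ for all $i$, which is the desired linear separability of the $f(x_i)$.

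The main subtlety — and the step I would be most careful about — is the indexing bookkeeping when applying the hypothesis: the stated identity is for \emph{distinct} $i,j,k,l$, so I need to check that the degenerate cases (e.g. $k = 1$, $i = 1$, or repeated indices arising in the expansion of the double sum) either reduce to trivial $0 = 0$ identities or follow from the distinct case by a limiting/bilinearity argument. This is routine: both sides of the target identity are bilinear in the pairs of difference vectors, and difference vectors of the form $x_a - x_a = 0$ contribute nothing, so the identity extends from "all four indices distinct" to "arbitrary indices" by linearity, which is all that is needed above. A second minor point is that $\tilde{w}$ depends on the particular representation $\bar{w} = \sum_k c_k(x_k - x_1)$, but any valid choice of coefficients works since the chain of equalities above only uses the resulting inner products, not the coefficients themselves. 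No other obstacle arises; the argument is essentially that the hypothesis says $f$ acts as a scaled isometry on the difference subspace up to the kernel projection, and scaled isometries preserve linear separability.
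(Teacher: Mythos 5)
Your construction is essentially a cleaner route to the same endpoint as the paper's proof. The paper invokes the hard-SVM dual to write the separator as $w^*=\sum_i\alpha_i x_i$ with $\sum_i\alpha_i=0$ and then regroups this into a telescoping combination of differences $x_{s_j}-x_{s_{j+1}}$; you instead orthogonally project an arbitrary separator onto $\operatorname{span}\{x_i-x_1\}$, which yields a difference representation directly and avoids the optimization machinery (and the slightly delicate claim about the dual coefficients summing to zero). Both proofs then push the representation through the hypothesis termwise and adjust the threshold, so the core idea is shared; your version is more elementary and, up to the point below, complete.

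The one place where I disagree with your self-assessment is the claim that extending the identity from distinct quadruples to repeated indices is ``routine by bilinearity.'' It is not: bilinearity lets you split a difference $f(x_k)-f(x_j)$ through an intermediate point, but every such splitting of a term like $\bigl(f(x_k)-f(x_1)\bigr)^\top\bigl(f(x_i)-f(x_1)\bigr)$ leaves behind another term with the index $1$ repeated, so the recursion never terminates, and the distinct-index constraints genuinely do not determine the repeated-index inner products. Concretely, take $f(x_i)=\sqrt{\lambda}\,x_i+c\,e_i$ with $e_i$ orthonormal and orthogonal to $\operatorname{span}\{x_i\}$: the hypothesis holds exactly for all distinct $i,j,k,l$ (since $(e_i-e_j)^\top(e_k-e_l)=0$ there), yet $\bigl(f(x_i)-f(x_1)\bigr)^\top\bigl(f(x_k)-f(x_1)\bigr)=\lambda(x_i-x_1)^\top(x_k-x_1)+c^2$ for $i\neq k$, so your key chain of equalities acquires an $i$-dependent error term that can be made arbitrarily large. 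Your argument therefore genuinely needs the identity for quadruples with $j=l$ (and with $k=i$, since the diagonal terms $\|f(x_k)-f(x_1)\|^2$ appear in your sum). To be fair, the paper's own proof silently uses the hypothesis on exactly the same kind of repeated-index quadruples, so this is best read as the lemma's hypothesis being understated rather than as a defect unique to your approach; but the right fix is to assume the identity for all index tuples (which is what the empirical regression in the appendix is morally estimating), not to claim it follows from the distinct case.
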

\begin{proof}
Consider the hard SVM objective for the points $(x_i, y_i)$, 
\begin{align*}
    \min_{w, b} \frac{1}{2}\l\|w\r\|_2^2 \quad \text{subject to}\quad y_i
    \l(w^\top x_i+b\r)\geq 1.
\end{align*}
Since the examples are linearly separable, we know that the minimizer
$(w^*,b^*)$ exists and satisfies all constraints. Furthermore, from the
optimality conditions of the dual formulation of the objective we know that we
can write $w^* = \sum_{i}^{}\alpha_i x_i$ where $\sum_{i}^{}\alpha_i=0$ (see for
example \citet{awad2015support}). Let $S=\{s_1, \ldots, s_t\}\subset \{1,\ldots,
n\}$ be the support of $\alpha$, i.e. the indices $s_j$ such that
$\alpha_{s_j}\neq 0$.  Since $\sum_{i}^{}\alpha_i=0$, we can rewrite $w^*$ as
\begin{align*}
    w^* = \sum_{j}^{}\beta_j \l(x_{s_j}-x_{s_{j+1}}\r)
\end{align*}
with indices modulo $\left|S\right|$. Since $(w^*,b^*)$ is a separating
hyperplane for $(x_i, y_i)$, we have 
\begin{align*}
    \l(w^*\r)^\top x_i &\geq 1 - b \quad \text{when}\quad y_i=1
    \\
    \l(w^*\r)^\top x_i &\leq -1 - b \quad \text{when}\quad y_i=-1
\end{align*}
and thus 
\begin{align*}
    \l(w^*\r)^\top \l(x_i - x_j\r) \geq 2 \quad \text{when}\quad y_i=1, y_j=-1.
\end{align*}
Using the expansion of $w^*$ as a linear combination of differences between
examples, this says
\begin{align*}
    \sum_{j}^{}\beta_j \l(x_{s_j}-x_{s_{j+1}}\r)^\top \l(x_i-x_j\r) \geq 2 
\quad \text{when}\quad y_i=1, y_j=-1.
\end{align*}
and thus 
\begin{align*}
    \sum_{j}^{}\beta_j \l(f(x_{s_j}) - f(x_{s_{j+1}})\r)^\top
    \l(f(x_i)-f(x_j)\r)\geq 2\lambda > 0 \quad \text{when}\quad y_i=1, y_j=-1.
\end{align*}
Now we claim that 
\begin{align*}
    \widehat{w} = \sum_{j}^{}\beta_j \l(f(x_{s_j}) - f(x_{s_{j+1}})\r)
\end{align*}
is a linear separator for $(f(x_i), y_i)$ for some bias to be determined later. Indeed, let $M = \min_{y_i=1}
\widehat{w}^\top f(x_i)$ and $m = \max_{y_i=-1} \widehat{w}^\top f(x_i)$. Then
we have $M - m \geq 2\lambda > 0$. Choosing any $\widehat{b}\in (m, M)$, we have 
\begin{align*}
    \widehat{w}^\top f \l(x_i\r) - \widehat{b} \geq M - \widehat{b} > 0\quad \text{when}\quad y_i=1
    \\
    \widehat{w}^\top f \l(x_i\r) - \widehat{b} \leq m - \widehat{b} < 0\quad \text{when}\quad y_i=-1
\end{align*}
which shows that $(\widehat{w}, \widehat{b})$ linearly separates the points
$(f(x_i), y_i)$.
\end{proof}

\subsection{MLP weights are full-rank matrices}
\label{app:singular}

In figure \ref{fig:singular}, we plot the 100 smallest singular values of the MLP weights in GPT2-Small for all 12 layers. We observe that they the vast majority are bounded well away from $0$. This confirms that both MLP weights are full-rank transformations. 

\begin{figure}[ht]
    \centering
    \begin{minipage}[b]{0.45\textwidth}
    \begin{tikzpicture}
        \node[anchor=south west,inner sep=0] (image) at (0,0) {\includegraphics[width=1.0\textwidth]{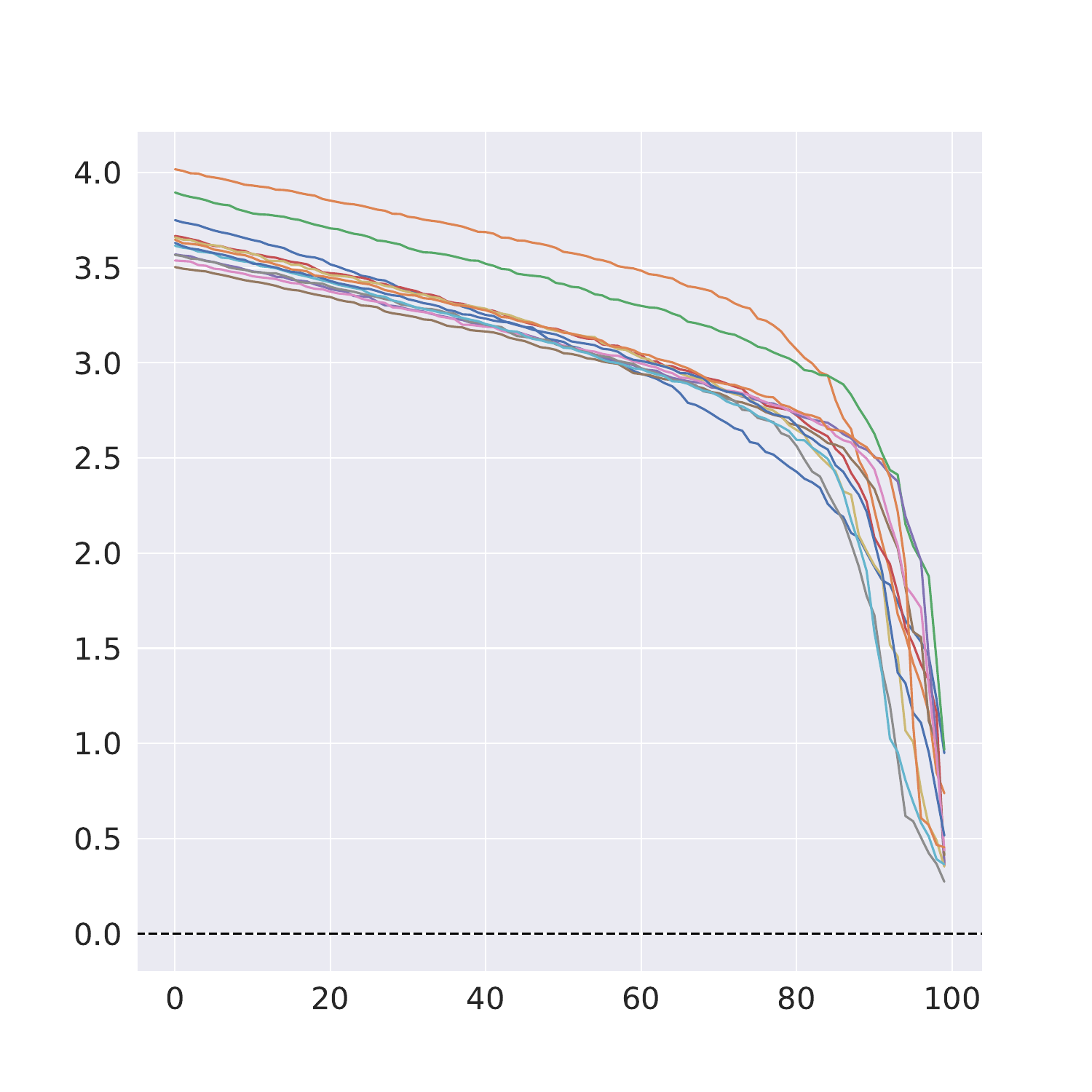}};
        \begin{scope}[x={(image.south east)},y={(image.north west)}]
            % Add labels
            \node [anchor=east, rotate=90] at (-0.05,0.7) {Singular value};
            \node [anchor=north] at (0.5,0.0) {Index (decreasing)};
        \end{scope}
    \end{tikzpicture}
    \end{minipage}
    \begin{minipage}[b]{0.45\textwidth}
    \begin{tikzpicture}
        \node[anchor=south west,inner sep=0] (image) at (0,0) {\includegraphics[width=1.0\textwidth]{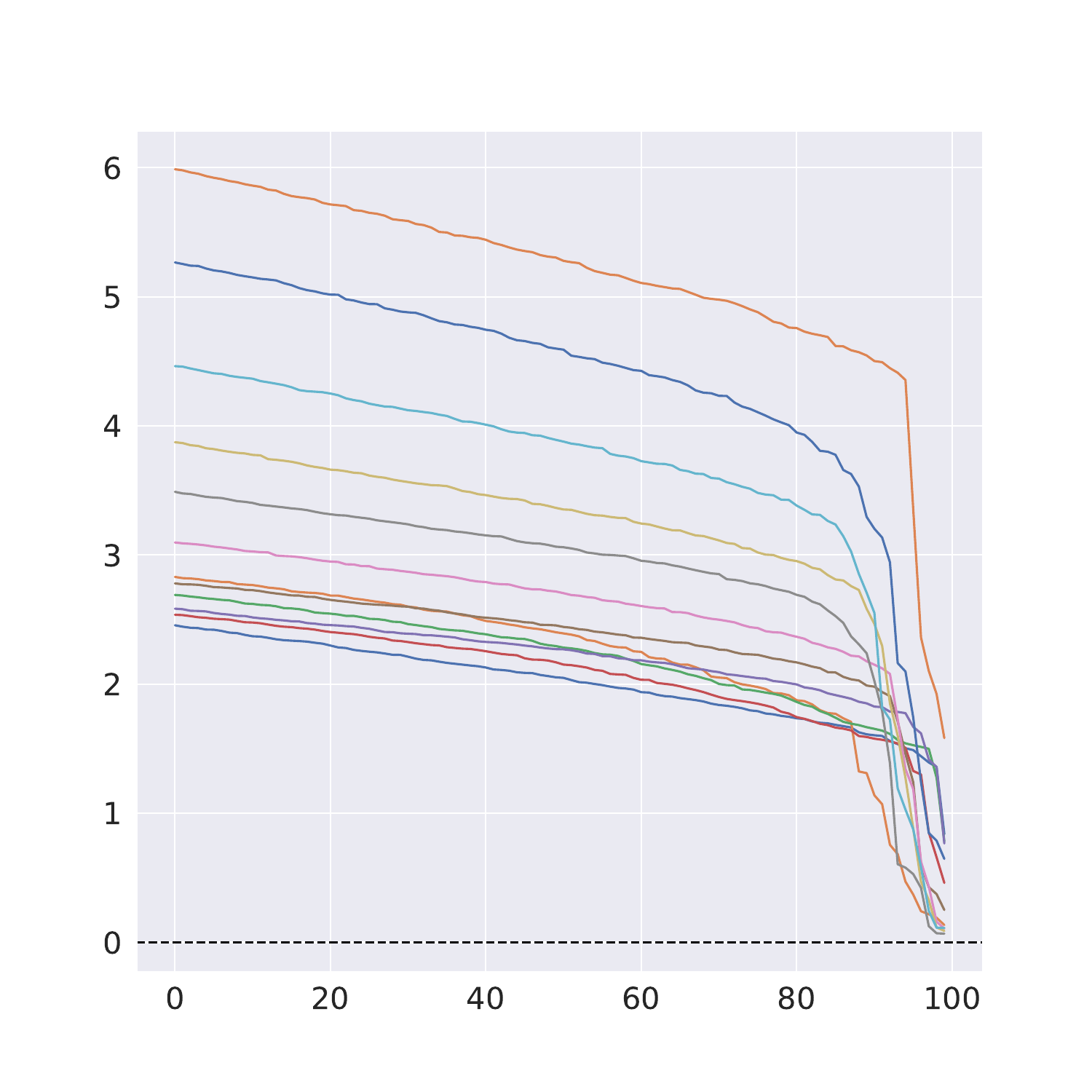}};
        \begin{scope}[x={(image.south east)},y={(image.north west)}]
            % Add labels
            \node [anchor=north] at (0.5,0.0) {Index (decreasing)};
        \end{scope}
    \end{tikzpicture}
    \end{minipage}
    
    \caption{Smallest 100 singular values of the $W_{in}$ (left) and $W_{out}$ (right) MLP weights by layer in in GPT2-Small}
    \label{fig:singular}
\end{figure}

\subsection{Features in the residual stream propagate to hidden MLP activations}
\label{app:linearly-separable}

\textbf{Intuition}. Suppose we have two classes of examples that are linearly separable in the residual stream. The transformation from the residual stream to the hidden MLP activations is a linear map followed by a nonlinearity, specifically $x\mapsto\operatorname{gelu}(W_{in}x)$. 
As we observed in \ref{app:singular}, the $W_{in}$ matrix is full-rank, meaning that all the information linearly present in $x$ will also be so in $W_{in}x$.
Even better, since $W_{in}$ maps $x$ from a $d_{\text{resid}}$-dimensional space to a $d_{\text{MLP}}=4d_{\text{resid}}$-dimensional space, this should intuitively make it much easier to linearly separate the points, because in a higher-dimensional space there are many more linear separators.
On the other hand, the non-linearity has an opposite effect: by compressing the space of activations, it makes it harder for points to be separable. So it is a priori unclear which intuition is decisive.

\textbf{Empirical validation}.
However, it turns out that empirically this is not such a problem. 
To test this, we run the model GPT2-Small on random samples from its data distribution (we used OpenWebText-10k), and extract 2000 activations of an MLP-layer after the non-linearity. We train a linear regression with  $\ell_2$-regularization to recover the dot product of the residual stream immediately before the MLP-layer of interest and a randomly chosen direction. 
We repeat this experiment with different random vectors and for each layer. We observe that all regressions are better than chance and explain a significant amount of variance on the held-out test set ($R^2=0.71\pm 0.17, \text{MSE}=0.31\pm0.18, p<0.005$). 
Results are shown in Figure \ref{fig:regression-classification} (right) (every marker corresponds to one regression model using a different random direction).

The position information in the IOI task is really a binary feature, so we are also interested in whether \emph{binary} information in general is linearly recoverable from the MLP activations. To test this, we sample activations from the model run on randomly-sampled prompts. This time however, we add or subtract a multiple of a random direction $v$ to the residual stream activation $u$, and calculate the MLP activations using this new residual stream vector $u'$:
\begin{align*}
    u' = u + y\times z\times \|u\|_2 \times v
\end{align*}
where $y\in\{-1, 1\}$ is uniformly random, $z$ is a scaling factor we manipulate, and $v$ is a randomly chosen direction of unit norm.
For each classifier, we randomly sample a direction $v$ that we either add or subtract (using $y$) from the residual stream. The classifier is trained to predict $y$. We rescale v to match the average norm of a residual vector and then scale it with a small scalar $z$.

Then, a logistic classifier is trained on 1600 samples. Again, we repeat this experiment for different $v$ and $z$, and for each layer. We observe that the classifier works quite well across layers even with very small values of $z$ (still, accuracy drops for $z = 0.0001$). Results are shown in Figure \ref{fig:regression-classification} (right), and Table \ref{tab:linear-classifier-accuracy}.

\begin{table}[h]
\centering
\caption{Mean Accuracy for Different Values of \( z \)}
\label{tab:linear-classifier-accuracy}
\begin{tabular}{cc}
\hline
\( z \) & Mean Accuracy \\
\hline
0.0001 & 0.69 \\
0.001  & 0.83 \\
0.01   & 0.87 \\
0.1    & 0.996 \\
\hline
\end{tabular}
\end{table}

\begin{figure}[ht]
    \centering
    \begin{minipage}[b]{0.45\textwidth}
    \begin{tikzpicture}
        \node[anchor=south west,inner sep=0] (image) at (0,0) {\includegraphics[width=1.0\textwidth]{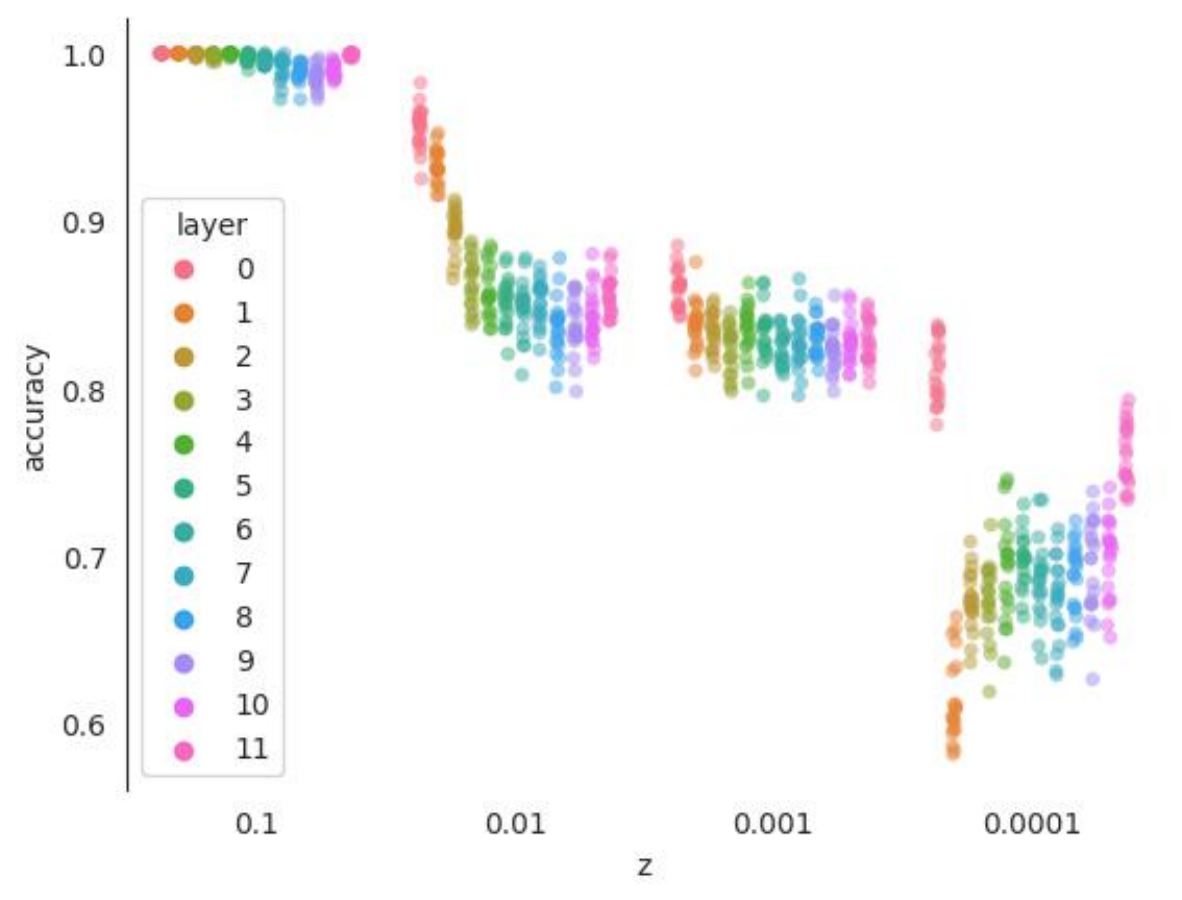}};
        \begin{scope}[x={(image.south east)},y={(image.north west)}]
            % Add labels
        \end{scope}
    \end{tikzpicture}
    \end{minipage}
    \begin{minipage}[b]{0.45\textwidth}
    \begin{tikzpicture}
        \node[anchor=south west,inner sep=0] (image) at (0,0) {\includegraphics[width=1.0\textwidth]{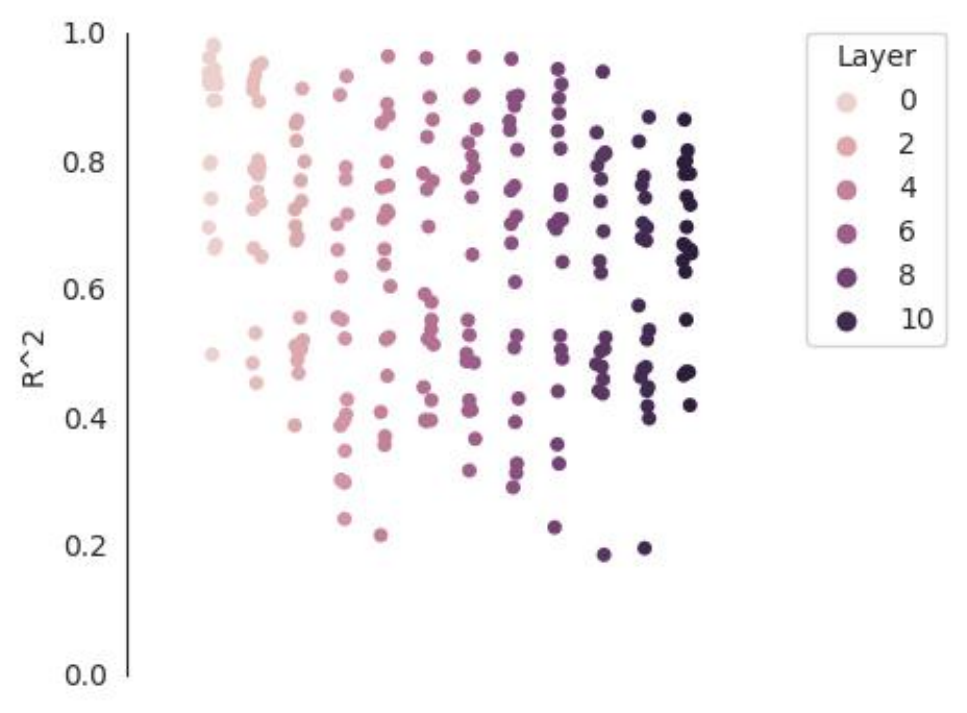}};
        \begin{scope}[x={(image.south east)},y={(image.north west)}]
            % Add labels
        \end{scope}
    \end{tikzpicture}
    \end{minipage}
    
    \caption{Recovering residual stream features linearly from hidden MLP activations: classification (left) and regression (right).}
    \label{fig:regression-classification}
\end{figure}

\end{document}